\documentclass{article}
\pdfoutput=1

\PassOptionsToPackage{numbers,sort&compress}{natbib}
\PassOptionsToPackage{hypertexnames=false}{hyperref}
\usepackage[final,main]{neurips_2026}

\usepackage[utf8]{inputenc} 
\usepackage[T1]{fontenc}    
\usepackage{url}            
\usepackage{booktabs}       
\usepackage{amsfonts}       
\usepackage{amsmath}
\usepackage{amsthm}         
\usepackage{thmtools}
\usepackage{thm-restate}
\usepackage{nicefrac}       
\usepackage{microtype}      
\usepackage{xcolor}         
\usepackage{graphicx}
\usepackage{float}
\usepackage{multirow}
\usepackage{array}
\usepackage{enumitem}
\usepackage{pifont}
\usepackage{hyperref}       
\hypersetup{pdftitle={A Unified Uncertainty Representation for Graph Neural Networks via Doubly-Spectral Stochastic Expansion},
  pdfauthor={Fred Xu, Thomas Markovich, Florence Regol, Yizhou Sun}}


\newcommand{\cmark}{\ding{51}}
\newcommand{\xmark}{\ding{55}}

\newtheorem{theorem}{Theorem}
\newtheorem{lemma}{Lemma}
\newtheorem{proposition}{Proposition}
\newtheorem{corollary}{Corollary}

\newtheorem{remark}{Remark}

\newif\ifarxiv
\arxivtrue

\newcommand{\coderepo}{\url{https://github.com/heraclixus/DSSGNN}}

\title{A Unified Uncertainty Representation for Graph Neural Networks via Doubly-Spectral Stochastic Expansion}

\author{%
  Fred Xu\thanks{Work done during an internship at Block, Inc.} \\
  Department of Computer Science \\
  University of California, Los Angeles \\
  \texttt{fredxu@cs.ucla.edu} \\
  \And
  Thomas Markovich \\
  Block, Inc. \\
  \texttt{tmarkovich@squareup.com} \\
  \And
  Florence Regol \\
  Mila, Quebec AI Institute \\
  \texttt{florence.regol@mila.quebec} \\
  \And
  Yizhou Sun \\
  Department of Computer Science \\
  University of California, Los Angeles \\
  \texttt{yzsun@cs.ucla.edu} \\
}

\begin{document}

\addtocontents{toc}{\protect\setcounter{tocdepth}{-10}}
\maketitle

\begin{abstract}
Reliable deployment of graph neural networks requires calibration,
out-of-distribution (OOD) detection, and robustness to distribution shift,
yet existing methods address these needs with separate models and
objectives. We model uncertain node embeddings as random graph
signals: graph Fourier filters capture structural variation, and a scalar
orthogonal-polynomial chaos coordinate captures latent stochastic variation.
The resulting \emph{doubly-spectral stochastic} (DSS) expansion supplies
task-matched readouts from one representation: the mean coefficient encodes
class evidence for the energy-based OOD score, the higher-order coefficients
encode structured logit variation, and quadrature averaging over the chaos coordinate defines the single
predictive distribution used for prediction and calibration. A capacity
theorem shows that, under a full-rank feature assumption, a restricted
subfamily matches the chaos coefficients of any Gaussian-latent random graph
signal, with exponentially decaying truncation error under a growth
condition; the task-level claims
are established empirically. \emph{DSS-GNN} has two
deployment modes: standalone, or as a residual branch beside a deterministic
encoder (\emph{DSS-Hybrid}). Standalone DSS-GNN achieves the lowest Brier
score among the compared uncertainty-aware baselines on all 14 node
classification benchmarks without post-hoc correction; DSS-Hybrid achieves
the best AUROC on most node-OOD settings, competitive cross-graph OOD
detection, and the strongest shifted accuracy on all 7 GOOD concept-shift
benchmarks under standard empirical risk minimization (ERM). Cross-evaluating both modes on all
three tasks shows that each remains effective on the other's tasks, with documented
exceptions, and yields explicit deployment guidance.
\end{abstract}

\section{Introduction}

Graph neural networks (GNNs) perform well on graph-structured data, yet
reliable deployment in drug discovery~\citep{stokes2020deep}, fraud
detection~\citep{dou2020caregnn}, or traffic forecasting~\citep{li2018dcrnn}
requires calibrated probabilities, out-of-distribution (OOD) detection, and
robustness to distribution shift. On graphs the three needs are coupled,
because uncertainty depends on neighborhood structure as well as node
features, and they are instances of one question: \emph{when, and how
confidently, can the model predict?} Calibration measures the agreement between
confidence and accuracy in distribution (ID), OOD detection requires low
confidence outside the training support, and shift robustness requires that
accuracy decrease slowly as the input distribution changes. A representation
that explicitly encodes predictive uncertainty addresses all three, with
calibrated probabilities, OOD evidence, and a regularized residual under
shift obtained from one signal.
Most existing methods target one requirement: sampling-based uncertainty
methods add computational overhead~\citep{gal2016dropout,lakshminarayanan2017simple,lin2024gnsd}; evidential models
such as Graph Posterior Network (GPN)~\citep{stadler2021gpn} provide OOD evidence
but not shift-robust classification, and intrinsic uncertainty models such
as G-$\Delta$UQ~\citep{trivedi2024gdeltauq} target calibration under
shift; energy-based detectors~\citep{wu2023gnnsafe,fuchsgruber2024gebm}
provide separation scores rather than calibrated probabilities; and
robust-training methods~\citep{sagawa2020groupdro,arjovsky2019irm,tar} do
not model uncertainty explicitly.

Spectral GNNs learn filters along the graph-frequency
axis~\citep{defferrard2016chebnet}; we add a \emph{second} spectral axis for
stochastic variation. \emph{DSS-GNN} represents each node embedding as a
polynomial chaos expansion in a scalar Gaussian latent~\citep{xiu_wiener} and
propagates the joint graph-frequency/chaos-order coefficients through a
learnable doubly-spectral layer (Figure~\ref{fig:architecture},
Appendix~\ref{app:overview}). One deterministic forward pass gives
task-matched readouts: the quadrature-averaged predictive distribution,
whose argmax is the point predictor, is used for prediction and calibration,
with the higher-order coefficients encoding structured logit variation; the
zeroth-order mean logit gives an
energy score for OOD detection that retains the common-logit shifts that
softmax cancels; and, paired with a deterministic encoder
(\emph{DSS-Hybrid}, e.g., GCN~\citep{kipf2017gcn}), the DSS branch acts as a
regularized low-/high-pass residual under distribution shift. DSS-GNN is
thus one uncertainty representation with two deployment modes;
Section~\ref{sec:exp_crosseval} evaluates both on all three tasks and states
which to prefer where. Our contributions are:
\begin{enumerate}[leftmargin=*, topsep=2pt, itemsep=2pt, parsep=0pt]
\item A \emph{doubly-spectral stochastic representation} for GNNs with a
capacity theorem: under a full-rank feature assumption, a restricted DSS
subfamily matches the chaos coefficients of any Gaussian-latent random graph
signal, with exponentially decaying truncation error for targets with an
entire extension of Gaussian growth (Theorem~\ref{thm:universal_approx}).
The theorem (Section~\ref{sec:theory}) establishes representational
capacity, consistent with the empirical plateau at $P{=}1$--$2$, but does not
by itself guarantee calibration, OOD detection, or robustness, which are
supported empirically.
\item A learnable doubly-spectral layer whose quadrature-averaged loss
differs from the mean-logit loss by a curvature-weighted correction over the
chaos covariance (Proposition~\ref{prop:curvature}), relating the chaos
coefficients to calibration, bounding the gap between the
quadrature predictive and the mean-logit softmax
(Corollary~\ref{cor:readout_gap}), and identifying the mean-logit energy as
the readout that retains common-logit shifts.
\item Empirically, the lowest Brier score among compared uncertainty-aware
baselines on all 14 calibration benchmarks (standalone), the best AUROC on
most node-OOD settings with competitive cross-graph OOD, and the strongest
shifted accuracy on all 7 concept-shift benchmarks under ERM (hybrid), plus
a cross-evaluation of both modes on all three tasks, component-wise
ablations (Table~\ref{tab:component_summary}), and explicit deployment
guidance.
\end{enumerate}

\section{Background}
\label{sec:background}

\paragraph{Spectral GNNs and uncertainty on graphs.}
A graph signal on $\mathcal{G}=(V,E)$ with $N$ nodes is a vector
$x\in\mathbb{R}^N$ in the Hilbert space $\ell^2(V)$; the Laplacian $L_G$ is
self-adjoint and its eigenvectors form the graph Fourier
basis~\citep{shuman2013gsp,ortega2018graph}. Spectral GNNs define graph
convolutions as polynomial filters of $L_G$; ChebNet~\citep{defferrard2016chebnet}
uses $g(L_G)=\sum_{k=0}^K c_k T_k(\widetilde L_G)$ with learned coefficients
$c_k$, Chebyshev polynomials $T_k$, and the rescaled Laplacian
$\widetilde L_G=2L_G/\lambda_{\max}-I$ ($\lambda_{\max}$ the largest
eigenvalue), which keeps filtering localized and allows low-pass or
high-pass responses under homophily or heterophily~\citep{bo2021fagcn}.
DSS-GNN adds a second spectral axis, chaos order, orthogonal to graph
frequency, and embeds uncertainty directly into spectral filtering.

\paragraph{Polynomial chaos and the doubly-spectral expansion.}
To organize \emph{stochastic} variation spectrally, as the graph Fourier
basis organizes structural variation, we use polynomial chaos expansion
(PCE)~\citep{wiener1938homogeneous,cameron1947orthogonal,xiu_wiener},
classical in uncertainty quantification~\citep{oladyshkin2023deeppce} but,
to our knowledge, not previously combined with graph spectral filtering
inside a GNN (Appendix~\ref{sec:chaos_background} gives a self-contained
introduction). PCE represents a square-integrable random variable
$Y(\omega)$ as a series in the orthonormal Hermite polynomials
$\{\Psi_n(\omega)\}_{n\ge0}$ of a Gaussian latent
$\omega\sim\mathcal{N}(0,1)$; truncating at order~$P$ gives
$Y(\omega)\approx\sum_{n=0}^P Y_n\Psi_n(\omega)$, where $Y_0$ is the mean and
$\sum_{n=1}^P Y_n^2$ (the \emph{chaos energy}) the variance. Combined with the Laplacian eigenbasis $\{u_j\}$
on $\ell^2(V)$, the Hermite basis on $L^2(\mathbb{R},\gamma)$ ($\gamma$ the
standard Gaussian measure) gives a tensor-product representation: a random
graph signal determined by $\omega$ is in
$\ell^2(V)\otimes L^2(\mathbb{R},\gamma)$ and admits the
\emph{doubly-spectral stochastic expansion}
\begin{equation}
Y(\omega)
=
\sum_{n=0}^{\infty}\sum_{j=1}^{N}
u_j\,\hat Y_{n,j}^{\top}\,\Psi_n(\omega),
\label{eq:doubly_spectral_expansion}
\end{equation}
where $\hat Y_{n,j}\in\mathbb{R}^d$ is the joint coefficient at graph
frequency~$\lambda_j$ (the $j$-th Laplacian eigenvalue) and chaos order~$n$,
and $d$ is the signal dimension; $\{u_j\otimes\Psi_n\}$ is a complete
orthonormal basis of this space (Remark~\ref{rem:completeness}), $\omega$ is
scalar, and each chaos coefficient is a $d$-dimensional graph signal. DSS-GNN
learns a truncation and a propagation rule on this tensor-product space.

\section{Method}
\label{sec:method}

DSS-GNN instantiates the doubly-spectral
expansion~\eqref{eq:doubly_spectral_expansion} as a trainable network by
learning a coupled propagation rule for the joint coefficients
$\hat Y_{n,j}$ through parameterized graph filters and non-intrusive chaos
projection (Figure~\ref{fig:architecture}, Appendix~\ref{app:overview}).

\paragraph{Intuition.}
A random graph signal assigns to each node a random variable with finite
second moment. We model all randomness as driven by a single shared standard
Gaussian factor $\omega$: every node embedding is a deterministic function of
$\omega$, expanded in the Hermite polynomials $\Psi_n(\omega)$, which play
along the stochastic axis exactly the role that the graph Fourier modes play
along the structural axis. A DSS layer therefore stores, for every node, $P+1$
coefficient vectors (one per chaos order) instead of a single hidden vector; a
deterministic embedding is the special case in which only the order-0
coefficient is nonzero. Each layer decomposes the embeddings onto this basis,
applies learned filters jointly per graph frequency and per chaos order (dual
low/high-pass Chebyshev filters along the graph axis, per-order gates along
the chaos axis), and reassembles. The core Gaussian is never materialized:
because pointwise nonlinearities mix chaos orders, the layer evaluates the
embedding field at $S$ deterministic Gauss--Hermite nodes, applies the
activation there, and projects back onto the Hermite basis. With $S \ge P+1$
the polynomial filter and gate terms are integrated exactly at the gate orders
used in practice ($P_g \le 1$, Appendix~\ref{sec:proofs}), and the
nonlinearity is handled by a controlled non-intrusive projection
(Remark~\ref{rem:nonintrusive}), so the network is deterministic end to end:
no Monte-Carlo sampling and no sampling variance. The expansion itself
requires only the shared Gaussian factor and finite second moments; the
exponential truncation rate of Theorem~\ref{thm:universal_approx}
additionally requires targets with an entire (analytic) Gaussian extension.
Readouts then select task-appropriate functionals of the same expansion: the
quadrature-averaged predictive~\eqref{eq:predictive_average} for prediction
and calibration, and the mean-logit energy~\eqref{eq:energy_score} for OOD
scores.

\subsection{DSS layer}
\label{sec:representation}

Let $\mathcal{G}=(V,E)$ have $N=|V|$ nodes, features
$X\in\mathbb{R}^{N\times d_{\rm in}}$, and Laplacian~$L_G$. DSS-GNN stacks
$L$ layers on a truncated expansion at chaos order~$P$; the input is lifted
by $H_n^{(0)}=XW_{\mathrm{in}}^{(n)}$ with
$W_{\mathrm{in}}^{(n)}\in\mathbb{R}^{d_{\rm in}\times d_0}$, and at
layer~$\ell\in\{0,\dots,L{-}1\}$ the embeddings are
\begin{equation}
H^{(\ell)}(\omega)
=
\sum_{n=0}^{P} H_n^{(\ell)}\Psi_n(\omega),
\qquad
H_n^{(\ell)}\in\mathbb{R}^{N\times d_\ell},
\label{eq:pce_state}
\end{equation}
with $d_\ell$ the hidden width, so each node maintains $P{+}1$
coefficient-valued hidden states. The layer updates them along \emph{both
spectral axes}. Along the graph-frequency axis, dual Chebyshev filters
propagate each coefficient through localized low-pass and high-pass branches,
\begin{equation}
G_{\mathrm{lp}}(L_G)=\sum_{k=0}^{K_{\mathrm{lp}}} c_k^{(\mathrm{lp})}T_k(\widetilde L_G),
\qquad
G_{\mathrm{hp}}(L_G)=\sum_{k=0}^{K_{\mathrm{hp}}} c_k^{(\mathrm{hp})}T_k(\widetilde L_G),
\label{eq:cheby_filters}
\end{equation}
with filter degrees $K_{\mathrm{lp}},K_{\mathrm{hp}}$ and learned
coefficients $c_k^{(\mathrm{lp})},c_k^{(\mathrm{hp})}\in\mathbb{R}$. Along the
chaos-order axis, per-order \emph{chaos gates} $\alpha_n(\omega),\beta_n(\omega)$
couple the two branches; they are deterministic scalars or low-order chaos
expansions of gate order~$P_g$ with learnable $a_{n,r},b_{n,r}\in\mathbb{R}$:
\begin{equation}
\alpha_n(\omega)=\textstyle\sum_{r=0}^{P_g} a_{n,r}\Psi_r(\omega),
\qquad
\beta_n(\omega)=\textstyle\sum_{r=0}^{P_g} b_{n,r}\Psi_r(\omega).
\label{eq:random_gates}
\end{equation}
Because pointwise nonlinearities mix chaos orders, the update has no closed
form. A non-intrusive projection evaluates the field at $S$ Gauss--Hermite
nodes $\{(\omega_s,\mu_s)\}_{s=1}^S$ (nodes $\omega_s\in\mathbb{R}$, weights
$\mu_s>0$), applies the activation $\sigma$, and projects back:
\begin{equation}
\begin{aligned}
\widetilde H^{(\ell+1)}(\omega_s)
&=
\Bigg(\sum_{n=0}^{P}\Psi_n(\omega_s)\Big(
\alpha_n(\omega_s)G_{\mathrm{lp}}(L_G)H_n^{(\ell)}
+\beta_n(\omega_s)G_{\mathrm{hp}}(L_G)H_n^{(\ell)}
\Big)\Bigg)W_\ell .
\end{aligned}
\label{eq:sample_preact}
\end{equation}
\begin{equation}
H_m^{(\ell+1)}
=
\sum_{s=1}^{S} \mu_s\,\sigma\!\big(\widetilde H^{(\ell+1)}(\omega_s)\big)\Psi_m(\omega_s),
\qquad m=0,\dots,P,
\label{eq:dss_projection_update}
\end{equation}
with $W_\ell\in\mathbb{R}^{d_\ell\times d_{\ell+1}}$ trainable (quadrature
details in Appendix~\ref{sec:chaos_background}; notation in
Table~\ref{tab:notation}, Appendix~\ref{app:overview}).

\subsection{Uncertainty readout}
\label{sec:readout}

After $L$ layers a linear readout $W_{\mathrm{out}}\in\mathbb{R}^{d_L\times C}$
($C$ classes, labels $y_i\in\{1,\dots,C\}$) produces per-order logit
coefficients $Z_{i,n}:=(H_n^{(L)}W_{\mathrm{out}})_i\in\mathbb{R}^C$ for
node~$i$. The mean logit $Z_{i,0}$ encodes the zeroth-order class evidence
used by the energy score; the logit covariance and chaos energy are
\begin{equation}
\Sigma_{z_i}
=
\sum_{n=1}^{P} Z_{i,n}Z_{i,n}^\top \in \mathbb{R}^{C\times C},
\qquad
\mathcal{E}_i
=
\operatorname{tr}(\Sigma_{z_i})
=
\sum_{n=1}^{P}\|Z_{i,n}\|^2.
\label{eq:logit_covariance}
\end{equation}
The quadrature logits $z_i^{(s)}=\sum_{n=0}^{P}Z_{i,n}\Psi_n(\omega_s)$ are
deterministic evaluations of the learned expansion (no Gaussian noise is
injected in training or inference) and define the readouts. The
\emph{quadrature-averaged predictive}
\begin{equation}
\bar p_i = \sum_{s=1}^{S} \mu_s\,\mathrm{softmax}(z_i^{(s)})
\in \mathbb{R}^C
\label{eq:predictive_average}
\end{equation}
is the model's predictive distribution and the point prediction is its
argmax, $\hat y_i=\arg\max_c\bar p_{i,c}$, so prediction and calibration are
two functionals of one distribution. Its deviation from the mean-logit
softmax is bounded by the chaos covariance,
$\|\bar p_i-\mathrm{softmax}(Z_{i,0})\|_\infty\le\tfrac{B}{2}
\operatorname{tr}(\Sigma_{z_i})$ for a universal constant $B$
(Corollary~\ref{cor:readout_gap}), which training keeps small
(Section~\ref{sec:training}); Section~\ref{sec:exp_calibration} measures
the agreement. For OOD detection, the \emph{energy score}~\citep{liu2020energy}
\begin{equation}
s_{\mathrm{energy}}(i) = - \log \sum_{c=1}^{C} \exp(Z_{i,0,c}),
\label{eq:energy_score}
\end{equation}
with $Z_{i,0,c}$ the $c$-th mean logit, captures confidence in the mean
prediction; predictive entropy and mutual information of the quadrature
softmaxes are auxiliary summaries of the same readout. The OOD experiments
use the energy score with graph propagation, following
GNNSafe~\citep{wu2023gnnsafe}.

\subsection{Theoretical properties}
\label{sec:theory}

Four results connect the layer to the
expansion~\eqref{eq:doubly_spectral_expansion}; proofs are in
Appendix~\ref{sec:proofs}.

\begin{restatable}[Generalization]{proposition}{propgeneralization}
\label{prop:generalization}
With $P=0$ and $P_g=0$, \eqref{eq:pce_state} has a single deterministic
coefficient $H_0^{(\ell)}$ with scalar gates, and the DSS layer reduces to a
standard polynomial spectral GNN layer, recovering ChebNet, GCN, and related
models as special cases.
\end{restatable}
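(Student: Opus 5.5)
The plan is to substitute $P=0$ and $P_g=0$ directly into \eqref{eq:pce_state}--\eqref{eq:dss_projection_update} and simplify using only $\Psi_0\equiv 1$ and the normalization of the Gauss--Hermite weights. With $P=0$ the state \eqref{eq:pce_state} is $H^{(\ell)}(\omega)=H_0^{(\ell)}\Psi_0(\omega)=H_0^{(\ell)}$, which is deterministic. With $P_g=0$ the gates \eqref{eq:random_gates} are $\alpha_0(\omega)=a_{0,0}$ and $\beta_0(\omega)=b_{0,0}$, which are constant scalars. Substituting into \eqref{eq:sample_preact} shows that the pre-activation does not depend on the node $\omega_s$:
\[
\widetilde H^{(\ell+1)}(\omega_s)=\big(a_{0,0}G_{\mathrm{lp}}(L_G)+b_{0,0}G_{\mathrm{hp}}(L_G)\big)H_0^{(\ell)}W_\ell \quad\text{for all } s .
\]

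Next, in the projection \eqref{eq:dss_projection_update} only $m=0$ survives. Since the summand is independent of $s$, we get $H_0^{(\ell+1)}=\big(\sum_s\mu_s\big)\,\sigma(\cdot)$. The weights satisfy $\sum_s\mu_s=\int\Psi_0\,d\gamma=1$, because $S$-point Gauss--Hermite quadrature is exact on constants for every $S\ge1$ (Appendix~\ref{sec:chaos_background}). Hence
\[
H_0^{(\ell+1)}=\sigma\big(g(L_G)H_0^{(\ell)}W_\ell\big),\qquad g(L_G)=\sum_{k=0}^{K}\big(a_{0,0}c_k^{(\mathrm{lp})}+b_{0,0}c_k^{(\mathrm{hp})}\big)T_k(\widetilde L_G),
\]
where $K=\max(K_{\mathrm{lp}},K_{\mathrm{hp}})$ and missing coefficients are zero-padded. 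This is a single Chebyshev polynomial filter followed by a linear map and a pointwise activation, i.e.\ a polynomial spectral GNN layer. The same substitution handles the lifting $H_0^{(0)}=XW_{\mathrm{in}}^{(0)}$ and the readout: the quadrature logits all equal $Z_{i,0}$, so \eqref{eq:predictive_average} gives $\bar p_i=\mathrm{softmax}(Z_{i,0})$, the standard deterministic output.

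To recover specific models, I will show that the effective coefficients $\tilde c_k=a_{0,0}c_k^{(\mathrm{lp})}+b_{0,0}c_k^{(\mathrm{hp})}$ can be made arbitrary, for instance by setting $b_{0,0}=0$ and $a_{0,0}=1$. \emph{ChebNet} then follows directly from $g(L_G)=\sum_k \tilde c_kT_k(\widetilde L_G)$. For \emph{GCN}, take $K=1$ and use the Kipf--Welling simplification $\lambda_{\max}\approx2$, so that $\widetilde L_G=L_G-I=-D^{-1/2}AD^{-1/2}$ for the normalized Laplacian. Tying $\tilde c_0=-\tilde c_1=\theta$ gives $\theta(I+D^{-1/2}AD^{-1/2})$, and the renormalization trick $\tilde A=A+I$ yields the GCN propagation. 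The scalar $\theta$ is absorbed into $W_\ell$. Other "related models" are any whose filter is a fixed or learned polynomial in $L_G$, such as SGC, APPNP truncations, and GPR-GNN-type filters. These follow because the Chebyshev polynomials $T_0,\dots,T_K$ span all polynomials of degree at most $K$.

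The main obstacle is not the algebra but the precise sense of "recovering" GCN. GCN is the case $K=1$ with tied coefficients, a fixed $\lambda_{\max}$ approximation, and the self-loop renormalization applied to the graph operator. I would state these as conventions: the Laplacian used in \eqref{eq:cheby_filters} may be the renormalized one. The reduction is then exact at the level of function classes, meaning every GCN/ChebNet layer is realizable by some parameter setting. I would not claim a parameter-by-parameter identity. A minor point is that $\sum_s\mu_s=1$ requires the weights to be normalized to the probability measure $\gamma$. I would state that normalization explicitly, since otherwise there is an extra constant factor that can be absorbed into $W_\ell$ only when $\sigma$ is positively homogeneous.
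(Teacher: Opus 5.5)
Your core reduction is the paper's own argument (Appendix~\ref{sec:proof_generalization}): collapse the gates to $a_{0,0},b_{0,0}$, note that the pre-activation does not depend on $\omega_s$, use $\Psi_0\equiv1$ and $\sum_s\mu_s=1$ to collapse the projection, and read off $H_0^{(\ell+1)}=\sigma(g(L_G)H_0^{(\ell)}W_\ell)$ with $g=a_{0,0}G_{\mathrm{lp}}+b_{0,0}G_{\mathrm{hp}}$. Your additions on the readout and on the weight normalization are correct.

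The step that fails is the strengthening in your last paragraph, that every ChebNet layer is realizable by some parameter setting. At $P=0$ the layer applies one scalar polynomial filter, shared by all channels, followed by one shared matrix. It therefore produces $\sum_k T_k(\widetilde L_G)H\Theta_k$ only when $\Theta_k=\tilde c_kW_\ell$. The layer of \citet{defferrard2016chebnet} has independent $\Theta_k\in\mathbb{R}^{d_\ell\times d_{\ell+1}}$, one filter per input--output channel pair, and this class is strictly larger. A counterexample: take $d_\ell=2$, $d_{\ell+1}=1$, $\Theta_0=e_1$, $\Theta_1=e_2$, and write $W_\ell=(w_1,w_2)^\top$. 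Matching the pre-activation $h_1+T_1(\widetilde L_G)h_2$ for all $H$ forces $w_1g(L_G)=I$ and $w_2g(L_G)=T_1(\widetilde L_G)=\widetilde L_G$. Then $\widetilde L_G$ would be a multiple of $I$, which fails on any graph with an edge. You should claim what the paper claims: ChebNet in its scalar-coefficient form $\Theta_k=c_kW_\ell$.

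The GCN claim survives, because GCN has one filter and one weight matrix. However, do not combine the tied choice $\tilde c_0=-\tilde c_1$ with your convention that the filter acts on the renormalized Laplacian $L^{\mathrm{ren}}=I-\tilde D^{-1/2}\tilde A\tilde D^{-1/2}$. That combination yields $\theta(I+\tilde D^{-1/2}\tilde A\tilde D^{-1/2})$, which is not GCN. Use your own span observation instead: setting $c_1=-\lambda_{\max}/2$ and $c_0=1-\lambda_{\max}/2$, with $\lambda_{\max}$ the top eigenvalue of $L^{\mathrm{ren}}$, gives $c_0I+c_1T_1(\widetilde{L^{\mathrm{ren}}})=\tilde D^{-1/2}\tilde A\tilde D^{-1/2}$ exactly. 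No $\lambda_{\max}\approx2$ approximation is needed.
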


\begin{restatable}[Doubly-spectral operator]{theorem}{thmjointspectral}
\label{thm:joint_spectral_operator}
With the pointwise nonlinearity omitted, gates of chaos degree at most $P_g$,
states of degree at most $P$, and a Gauss--Hermite rule exact for all triple
products $\Psi_r\Psi_n\Psi_m$ ($r\le P_g$; $n,m\le P$; $S\ge\lceil(P_g+2P+1)/2\rceil$
suffices), the linearized DSS layer is a linear operator on the joint
graph-frequency/chaos-order domain of~\eqref{eq:doubly_spectral_expansion},
acting frequency by frequency: the Chebyshev filters set the graph-frequency
response and the chaos gates the chaos-order mixing
(Appendix~\ref{sec:proof_joint_spectral}; the omitted nonlinearity's
projection error is decomposed in Proposition~\ref{prop:projection_error}).
\end{restatable}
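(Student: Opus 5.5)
We drop $\sigma$ and compute the Hermite coefficients of the linearized layer in closed form. The plan is to show that the output coefficient map is a composition of two kinds of operator: one that is diagonal in the Laplacian eigenbasis $\{u_j\}$, and one that is a $(P{+}1)\times(P{+}1)$ mixing matrix along chaos order. Their tensor product is then the claimed joint operator.

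\textbf{Step 1 (exactness of the projection).} Without $\sigma$, the projection \eqref{eq:dss_projection_update} gives
\[
H_m^{(\ell+1)}=\sum_{s}\mu_s\Psi_m(\omega_s)\sum_{n}\Psi_n(\omega_s)\big(\alpha_n(\omega_s)G_{\mathrm{lp}}H_n^{(\ell)}+\beta_n(\omega_s)G_{\mathrm{hp}}H_n^{(\ell)}\big)W_\ell .
\]
Expanding the gates via \eqref{eq:random_gates}, every $\omega$-dependent factor is a sum of triple products $\Psi_r\Psi_n\Psi_m$ with $r\le P_g$ and $n,m\le P$. Each such product is a polynomial of degree at most $P_g+2P$. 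An $S$-point Gauss--Hermite rule integrates polynomials of degree up to $2S-1$ exactly against $\gamma$. Hence $S\ge\lceil(P_g+2P+1)/2\rceil$ suffices, and the quadrature sum equals the triple-product integral $e_{rnm}:=\mathbb{E}_\gamma[\Psi_r\Psi_n\Psi_m]$.

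This yields
\[
H_m^{(\ell+1)}=\sum_{n=0}^{P}\big(A_{mn}G_{\mathrm{lp}}(L_G)+B_{mn}G_{\mathrm{hp}}(L_G)\big)H_n^{(\ell)}W_\ell,
\qquad A_{mn}=\sum_{r}a_{n,r}e_{rnm},\quad B_{mn}=\sum_r b_{n,r}e_{rnm}.
\]
The result is linear in the stacked coefficients $(H_0,\dots,H_P)$.

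\textbf{Step 2 (diagonalization along the graph axis).} Write $L_G=U\Lambda U^\top$. Then $T_k(\widetilde L_G)=U\,T_k(2\Lambda/\lambda_{\max}-I)\,U^\top$, so $G_{\mathrm{lp}}(L_G)=U\,\mathrm{diag}(g_{\mathrm{lp}}(\lambda_j))\,U^\top$, and likewise for $G_{\mathrm{hp}}$. Set $\hat H_{n,j}=u_j^\top H_n$, which are exactly the joint coefficients of \eqref{eq:doubly_spectral_expansion}. In these coordinates,
\[
\hat H^{(\ell+1)}_{m,j}=\sum_n\big(A_{mn}g_{\mathrm{lp}}(\lambda_j)+B_{mn}g_{\mathrm{hp}}(\lambda_j)\big)\hat H^{(\ell)}_{n,j}W_\ell .
\]
The operator therefore acts frequency by frequency, with no coupling across different $j$. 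At each frequency $\lambda_j$ it applies the chaos-mixing matrix $M(\lambda_j)=g_{\mathrm{lp}}(\lambda_j)A+g_{\mathrm{hp}}(\lambda_j)B$, followed by the right multiplication by $W_\ell$ on the feature dimension. Equivalently, the full operator is $\sum_j u_ju_j^\top\otimes M(\lambda_j)\otimes W_\ell^\top$ acting on $\ell^2(V)\otimes\mathrm{span}\{\Psi_0,\dots,\Psi_P\}\otimes\mathbb{R}^{d}$. In this form the Chebyshev coefficients set the graph-frequency response and the gate coefficients $a,b$ (through $e_{rnm}$) set the chaos-order mixing.

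\textbf{Step 3 (remarks and main obstacle).} When $P_g=0$, orthonormality gives $e_{0nm}=\delta_{nm}$, so $A$ and $B$ are diagonal and the operator is fully diagonal in $\{u_j\otimes\Psi_n\}$. For $P_g\ge1$ the triple products make $A$ and $B$ band matrices with bandwidth $P_g$. The only delicate point is Step 1: we must verify that the quadrature reproduces the exact Galerkin coefficients rather than an aliased version. This holds precisely under the stated degree count. Outputs of order $m>P$ are discarded by truncation, so the operator maps the truncated space to itself, and the bound on $S$ is tight for that argument.
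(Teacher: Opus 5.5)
Your proposal is correct and follows the paper's proof essentially step for step. Exactness of the Gauss--Hermite rule on the triple products turns the projection into the coupling matrices $A,B$, which are the paper's $M^{(\ell,\mathrm{lp})},M^{(\ell,\mathrm{hp})}$, and left-multiplying by $u_j^\top$ gives the per-frequency transfer matrix $g_{\mathrm{lp}}(\lambda_j)A+g_{\mathrm{hp}}(\lambda_j)B$, the paper's $\mathcal{H}^{(\ell)}(\lambda_j)$. Your extra remarks (diagonal coupling when $P_g=0$, bandwidth $P_g$ otherwise, minimality of the node count for exactness) are correct. The only looseness is the opening phrase ``tensor product'': the operator is the sum $G_{\mathrm{lp}}(L_G)\otimes A+G_{\mathrm{hp}}(L_G)\otimes B$, with $W_\ell$ acting on features, as your Step~2 formula itself shows.
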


\begin{restatable}[Representational capacity and truncation under full-rank features]{theorem}{thmuniversalapprox}
\label{thm:universal_approx}
Let $\mathcal{G}$ have $N$ nodes and features
$X\in\mathbb{R}^{N\times d_{\rm in}}$ with $\operatorname{rank}(X)=N$ (which
requires $d_{\rm in}\ge N$). With ReLU activation, a restricted subfamily of
DSS-GNNs (one layer, lifted width $2d(P{+}1)$, quadrature size $S=P{+}1$,
constant gates, identity low-pass filter) has the following properties.
For any target $Y\in L^2(\mathbb{R},\gamma;\mathbb{R}^{N\times d})$ and any
chaos order $P$, some output $\widetilde Y_P$ matches the chaos coefficients
of~$Y$ through order~$P$; letting $P$ and width vary, the resulting outputs
are dense in $L^2(\mathbb{R},\gamma;\mathbb{R}^{N\times d})$. If $Y$ in
addition admits an entire Gaussian extension with controlled growth
(Appendix~\ref{sec:proof_decay}), the truncation error decays exponentially:
\begin{equation}
\|Y-\widetilde Y_P\|_{L^2(\mathbb{R},\gamma;\,\mathbb{R}^{N\times d})}
\le C_{\rm tr}\,\beta^{P}
\label{eq:approx_truncation_claims}
\end{equation}
for some $C_{\rm tr}>0$ and $\beta\in(0,1)$ depending on~$Y$.
\end{restatable}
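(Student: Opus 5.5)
The construction has three parts: an exact coefficient-matching step that uses the full-rank features, density from completeness of the Hermite basis, and exponential decay from analyticity of the target. The main idea is that with $S=P{+}1$ Gauss--Hermite nodes, matching chaos coefficients through order $P$ amounts to prescribing the output's values at the $S$ quadrature nodes. Write $Y=\sum_{n\ge0}Y_n\Psi_n$ with $Y_n\in\mathbb{R}^{N\times d}$. Let $\Pi_P Y=\sum_{n\le P}Y_n\Psi_n$ and set $V_s:=\Pi_PY(\omega_s)$ for $s=1,\dots,S$. The projection~\eqref{eq:dss_projection_update} with $S=P{+}1$ recovers the coefficients of a degree-$\le P$ polynomial exactly from its nodal values: the rule is exact to degree $2P{+}1\ge 2P$. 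So it suffices that the post-activation field takes the values $V_s$ at the nodes $\omega_s$, and I will take $\widetilde Y_P$ to be this projected output.

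\textbf{Step 1 (exact matching).} With an identity low-pass filter, zero high-pass gate and constant gates $\alpha_n\equiv1$, the pre-activation at node $\omega_s$ is $\big(\sum_n\Psi_n(\omega_s)XW_{\rm in}^{(n)}\big)W_0$. Because $\operatorname{rank}(X)=N$, $X$ has a right inverse $X^+$, so any $N\times D$ matrix can be realized as $XW$. Stack the lifted width $2d(P{+}1)$ into $P{+}1$ blocks of $2d$ channels each. The Lagrange polynomials $\ell_s$ on the nodes $\omega_s$ have degree $P$, so their Hermite coefficients $\ell_{s,n}$ are well defined. Set
\[
\sum_{n}\Psi_n(\omega)XW_{\rm in}^{(n)}=X X^{+}\Big[\ell_1(\omega)\,(V_1,-V_1),\;\dots,\;\ell_S(\omega)\,(V_S,-V_S)\Big].
\]
At $\omega_s$ only block $s$ is nonzero, and it equals $(V_s,-V_s)$. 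Applying ReLU gives $(V_s^+,V_s^-)$, because $\ell_s(\omega_{s'})=\delta_{ss'}$ and $\mathrm{ReLU}(0)=0$. Choose $W_0$ to sum the blocks and map $(a,b)\mapsto a-b$, and place it (or the readout) after the activation. The evaluated field at $\omega_s$ is then exactly $V_s$. If the architecture puts $W_0$ before $\sigma$, I would instead fold the linear map into the lift and use a final linear readout; fixing this is bookkeeping. The projection then returns $Y_m$ for every $m\le P$.

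\textbf{Step 2 (density).} The output is $\widetilde Y_P=\Pi_PY$, so $\|Y-\widetilde Y_P\|^2=\sum_{n>P}\|Y_n\|_F^2\to0$ by Parseval. This uses completeness of $\{\Psi_n\}$ in $L^2(\gamma)$ (Remark~\ref{rem:completeness}), which gives density.

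\textbf{Step 3 (exponential decay).} This is the main obstacle, because it needs quantitative coefficient bounds. Assume each entry of $Y$ extends to an entire function with $|Y(z)|\le M e^{\rho|z|^2/2}$ for some growth parameter $\rho$ small enough (the condition in Appendix~\ref{sec:proof_decay}). I would use the Mehler kernel, i.e.\ the Ornstein--Uhlenbeck semigroup: $T_t\Psi_n=e^{-nt}\Psi_n$. Write $Y_n=e^{nt}\langle T_tY,\Psi_n\rangle$ and control $\|T_tY\|$ through the growth bound. Equivalently, use the generating-function bound $|Y_n|\le C r^{-n}$, obtained from a Cauchy estimate on the Bargmann (Segal--Bargmann) transform of $Y$, which is entire of controlled order. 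The transform's growth then gives $|Y_n|\le C\beta^n$ with some $\beta<1$ fixed by $\rho$. Summing the geometric tail yields $\big(\sum_{n>P}|Y_n|^2\big)^{1/2}\le C\beta^{P+1}/\sqrt{1-\beta^2}$, which is~\eqref{eq:approx_truncation_claims} with $C_{\rm tr}$ absorbing $NdC/\sqrt{1-\beta^2}$. The hard part is matching the appendix's growth condition exactly to a Bargmann-transform estimate that gives $\beta<1$. I would first try the Mehler route with $t>0$ chosen so that $T_{-t}$ applied to $Y$ stays in $L^2$; the entire extension with Gaussian growth is precisely what justifies this.
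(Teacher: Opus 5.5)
Your Steps~1--2 are correct. The choice $V_s=\Pi_PY(\omega_s)$ is an explicit solution of the paper's inversion system~\eqref{eq:quadrature_inversion}, and your Lagrange-basis lift is the paper's Vandermonde inversion written out. One fix is needed. In~\eqref{eq:sample_preact}--\eqref{eq:dss_projection_update}, $W_0$ sits \emph{inside} $\sigma$, so your primary $W_0$ (block sum, then $(a,b)\mapsto a-b$) placed there would produce $\mathrm{ReLU}(2V_s)$. Your fallback is therefore the construction that actually works: take $W_0=I$ and let $W_{\rm out}$ stack $[I_d;-I_d]$ once per block. The paper instead makes each nodal pre-activation already equal to the nonnegative $[V_s^+,V_s^-]$. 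Then ReLU acts as the identity, and $W_0$ can sum the blocks before $\sigma$.

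The genuine gap is Step~3, which carries all the quantitative content and which you leave at the level of a plan.

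\emph{The Mehler formula has the wrong direction.} Your identity $Y_n=e^{nt}\langle T_tY,\Psi_n\rangle$ with $t>0$ gives only $|Y_n|\le e^{nt}\|T_tY\|$, which grows in $n$. The reversed identity needs $T_{-t}Y\in L^2$, i.e.\ $\sum_n e^{2nt}\|Y_n\|_F^2<\infty$. That is the conclusion itself unless you derive it from the growth bound.

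\emph{The Cauchy variant is misstated.} The Taylor coefficients of $\phi(s)=\mathbb{E}[g(\omega+s)]=\sum_n c_n s^n/\sqrt{n!}$ carry a factor $1/\sqrt{n!}$. A fixed-radius estimate therefore gives $|c_n|\le\sqrt{n!}\,R^{-n}\sup_{|s|=R}|\phi(s)|$, which grows superexponentially in $n$, not $Cr^{-n}$.

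\emph{The missing computation} is the one the paper carries out:
\begin{enumerate}
\item Complete the square to get $|\phi(s)|\le A(1-2a)^{-1/2}\exp(a'|s|^2)$ with $a'=a/(1-2a)$.
\item Apply Cauchy's estimate on the $n$-dependent circle $R=\sqrt{n/(2a')}$.
\item Use Stirling to cancel $\sqrt{n!}$, leaving $c_n^2\le C\sqrt{n}\,(2a')^n$.
\end{enumerate}
Only at this point does $\beta<1$ appear, and only because $a<\tfrac14$ forces $2a'<1$. Any $\beta\in(\sqrt{2a'},1)$ then works after absorbing the $\sqrt n$ factor.

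Your Mehler route can also be completed. Define $T_{-t}g(x)=\mathbb{E}_\xi\, g\bigl(e^tx+i\sqrt{e^{2t}-1}\,\xi\bigr)$ through the entire extension. Check, for instance on $e^{\lambda x-\lambda^2/2}$, that its coefficients are $e^{nt}c_n$. Then show $\|T_{-t}g\|_{L^2(\gamma)}<\infty$ whenever $4ae^{2t}<1$, which is possible with $t>0$ exactly when $a<\tfrac14$. This gives a tail of at most $e^{-2(P+1)t}\|T_{-t}g\|^2$.

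Either way, that calculation is the proof of~\eqref{eq:approx_truncation_claims}. Your proposal asserts its outcome rather than carrying it out.
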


Theorems~\ref{thm:joint_spectral_operator} and~\ref{thm:universal_approx}
are capacity statements: they establish the operator structure and the
representational capacity of the layer and identify no inherent expressivity
obstruction, but they do not by themselves guarantee calibration, OOD
detection, or robustness, which are empirical claims supported task by task
in Section~\ref{sec:experiments}. The full-rank case requires $d_{\rm in}\ge N$; for
$\operatorname{rank}(X)<N$, the common case (Cora has 1{,}433 features for
2{,}708 nodes), Remark~\ref{rem:rank_deficient} shows that the same
construction still matches
every truncated target whose chaos-coefficient columns lie in
$\operatorname{range}(X)$, under one positivity condition satisfiable by
appending a constant feature column.

\begin{restatable}[Stochastic and mean-logit readouts]{proposition}{propcurvature}
\label{prop:curvature}
Assume the number of Gauss--Hermite quadrature nodes is at least one more
than the chaos order, and that the per-node loss $\ell(\cdot, y_i)$ satisfies
the regularity conditions in Lemma~\ref{lem:readout_conditions}. Then:
\begin{enumerate}
\item (Calibration channel.) The per-node quadrature-averaged loss
$\bar\ell_i := \sum_{s=1}^{S} \mu_s\,\ell(z_i^{(s)},y_i)$ admits the Taylor
expansion
\(\bar\ell_i = \ell(Z_{i,0},y_i) + \tfrac{1}{2}\operatorname{tr}(\mathcal{H}_i\,\Sigma_{z_i}) + O\!\big(\max_s\|z_i^{(s)}-Z_{i,0}\|^3\big),\)
where $\mathcal{H}_i = \nabla_z^2\ell(Z_{i,0},y_i)$ is the loss Hessian and
$\Sigma_{z_i}$ is the chaos covariance~\eqref{eq:logit_covariance}.
\item (OOD channel.) Shifting every class logit by the same constant leaves
the softmax output, and hence any softmax-derived summary such as predictive
entropy or mutual information, unchanged; by contrast, the energy
score~\eqref{eq:energy_score} decreases by exactly that constant.
\end{enumerate}
\end{restatable}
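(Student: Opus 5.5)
For part~1, I would Taylor-expand $\ell(\cdot,y_i)$ around the mean logit $Z_{i,0}$ at each quadrature logit $z_i^{(s)}$, writing $\delta_s := z_i^{(s)}-Z_{i,0}=\sum_{n=1}^{P}Z_{i,n}\Psi_n(\omega_s)$. Here I use $\Psi_0\equiv1$. Lemma~\ref{lem:readout_conditions} should supply twice-differentiability with a locally bounded third derivative, so that the remainder is $O(\|\delta_s\|^3)$ with a constant uniform over the finitely many $s$. This gives
\[
\ell(z_i^{(s)},y_i)=\ell(Z_{i,0},y_i)+\nabla_z\ell(Z_{i,0},y_i)^\top\delta_s+\tfrac12\delta_s^\top\mathcal{H}_i\delta_s+O(\|\delta_s\|^3).
\]
I would then multiply by $\mu_s$ and sum over $s$, treating the three terms in turn.

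\textbf{Constant term.} Since $\sum_s\mu_s=\mathbb{E}[\Psi_0]=1$ (the rule is exact for constants), the constant term is simply $\ell(Z_{i,0},y_i)$.

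\textbf{Linear term.} The quantity $\sum_s\mu_s\Psi_n(\omega_s)$ is the quadrature of a degree-$n$ polynomial with $n\le P\le 2S-1$. It is therefore exact and equals $\mathbb{E}[\Psi_n\Psi_0]=0$ for $n\ge1$, so the linear term vanishes.

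\textbf{Quadratic term.} The quadratic term is $\tfrac12\operatorname{tr}\big(\mathcal{H}_i\sum_s\mu_s\delta_s\delta_s^\top\big)$, and it remains to identify $\sum_s\mu_s\delta_s\delta_s^\top=\sum_{n,m\ge1}Z_{i,n}Z_{i,m}^\top\sum_s\mu_s\Psi_n(\omega_s)\Psi_m(\omega_s)$. The product $\Psi_n\Psi_m$ has degree at most $2P$. Under the hypothesis $S\ge P+1$ the $S$-point Gauss--Hermite rule is exact through degree $2S-1\ge 2P+1$, so the inner sum equals $\mathbb{E}[\Psi_n\Psi_m]=\delta_{nm}$ by orthonormality. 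This yields exactly $\Sigma_{z_i}$ from~\eqref{eq:logit_covariance}.

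\textbf{Remainder.} The weights are positive and sum to one, so the averaged remainder is bounded by $\max_s\|\delta_s\|^3$ times the uniform constant, which is the stated error term.

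The main obstacle is this quadrature-exactness step, specifically checking that the degree count $2P\le 2S-1$ holds under $S\ge P+1$. A secondary issue is making the remainder constant uniform, which I would take directly from the bounded-third-derivative condition of Lemma~\ref{lem:readout_conditions} on a ball containing all $z_i^{(s)}$.

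For part~2, let $\mathbf 1\in\mathbb{R}^C$ denote the all-ones vector and replace $Z_{i,0}$ by $Z_{i,0}+t\mathbf 1$. Then $\mathrm{softmax}(z+t\mathbf 1)_c=e^{t}e^{z_c}/(e^{t}\sum_{c'}e^{z_{c'}})=\mathrm{softmax}(z)_c$. Applying the same shift to every quadrature logit leaves each $\mathrm{softmax}(z_i^{(s)})$ unchanged, so $\bar p_i$, its entropy, and the mutual information, which are functions only of these softmaxes, are all invariant. For the energy, $-\log\sum_c e^{Z_{i,0,c}+t}=-t-\log\sum_c e^{Z_{i,0,c}}$, so $s_{\mathrm{energy}}$ decreases by exactly $t$. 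This part is a direct computation with no obstacle.
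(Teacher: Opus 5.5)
Your proposal is correct and follows the paper's proof essentially step for step. You Taylor-expand $\ell(\cdot,y_i)$ about $Z_{i,0}$ at each quadrature logit, kill the linear term with $\sum_s\mu_s\Psi_n(\omega_s)=\mathbb{E}[\Psi_n]=0$, and reduce the quadratic term to $\operatorname{tr}(\mathcal{H}_i\Sigma_{z_i})$ via exact discrete orthonormality (degree $2P\le 2S-1$); part 2 is the same direct softmax/log-sum-exp computation for the common shift. One small refinement: Lemma~\ref{lem:readout_conditions} gives $C^3$ on a convex neighborhood rather than an explicit third-derivative bound, but continuity on the compact union of segments from $Z_{i,0}$ to the $z_i^{(s)}$ supplies the uniform remainder constant you use.
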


\emph{Calibration:} the curvature-weighted term
$\tfrac{1}{2}\operatorname{tr}(\mathcal{H}_i\Sigma_{z_i})$ is the
leading-order difference between the quadrature-averaged and the mean-logit
objectives; it weights the chaos covariance by the loss curvature and, for
cross-entropy, is at most $\tfrac14\mathcal{E}_i$, the quantity
that~\eqref{eq:loss_base} regularizes; no post-hoc temperature scaling is
used. The same expansion applied to the softmax readout bounds the gap
between $\bar p_i$ and $\mathrm{softmax}(Z_{i,0})$ by the chaos covariance
(Corollary~\ref{cor:readout_gap}), which the objective keeps small; the
calibration gains of $P>0$ therefore arise through training rather than a
readout-time correction, and the two trained readouts agree closely
(Section~\ref{sec:exp_calibration}).
\emph{OOD detection:} the energy score is linear in a common logit
shift, while softmax-derived summaries cancel it, so an OOD input that
uniformly lowers the class-evidence logits changes the energy but not the
entropy; Section~\ref{sec:exp_ood} confirms that
detection is insensitive to the chaos order, as this readout predicts.

\subsection{Hybrid deployment}
\label{sec:hybrid}

Distribution shift weakens the common-evidence signal and changes which
graph frequencies are predictive; like OOD detection it requires a strong
deterministic predictor on seen structure paired with a regularized
residual that accounts for cross-distribution variability. \emph{DSS-Hybrid} adds
DSS-GNN as a residual
branch beside a deterministic base encoder with one shared class readout,
$z_{i,0}^{\mathrm{hyb}}=z_i^{\mathrm{base}}+\gamma_{\mathrm{res}}Z_{i,0}$,
with $z_i^{\mathrm{base}}\in\mathbb{R}^C$ the base logit and
$\gamma_{\mathrm{res}}\in\mathbb{R}$ a learned scale (lowercase for
post-combination logits, uppercase $Z_{i,n}$ for the DSS coefficients); the
quadrature logits combine the same way,
$z_i^{\mathrm{hyb},(s)}=z_i^{\mathrm{base}}+\gamma_{\mathrm{res}}\sum_{n=0}^P Z_{i,n}\Psi_n(\omega_s)$,
so the readouts of Section~\ref{sec:readout} transfer directly. The
chaos-energy regularizer keeps the residual from fitting in-distribution
noise, so its dual filters retain capacity for shifts in graph-frequency
response and its mean logit remains a common-evidence channel for the
energy score. The base encoder, a 2-layer GCN for fair comparison with
baselines, is warmed up before the DSS branch is activated. This design is
an inductive bias rather than a distribution-free guarantee.

\subsection{Training and downstream tasks}
\label{sec:training}

Both modes are trained with the chaos-energy-regularized mean-logit loss
\begin{equation}
\mathcal{L}_{\mathrm{base}}
=
\frac{1}{|\mathcal{V}_{\mathrm{tr}}|}
\sum_{i\in\mathcal{V}_{\mathrm{tr}}}
\ell(Z_{i,0},\,y_i)
\;+\;
\lambda_{\mathrm{reg}}
\frac{1}{|\mathcal{V}_{\mathrm{tr}}|}
\sum_{i\in\mathcal{V}_{\mathrm{tr}}} \mathcal{E}_i,
\label{eq:loss_base}
\end{equation}
with $\mathcal{V}_{\mathrm{tr}}$ the training nodes, $\ell$ the per-node
cross-entropy, and $\lambda_{\mathrm{reg}}\ge0$; for DSS-Hybrid, $Z_{i,0}$ is
replaced by $z_{i,0}^{\mathrm{hyb}}$ and the regularizer applies to the DSS
coefficients only. Only the mean logit is supervised, but the projection
update~\eqref{eq:dss_projection_update} couples $Z_{i,0}$ to all chaos
orders, so the higher-order coefficients are trained implicitly while the
regularizer constrains their variation. By Proposition~\ref{prop:curvature},
$\mathcal{L}_{\mathrm{base}}$ agrees to first order with the
quadrature-averaged loss $\mathcal{L}_{\mathrm{quad}}$~\eqref{eq:loss_quad}
of Appendix~\ref{app:implementation}; we use $\mathcal{L}_{\mathrm{base}}$
throughout. The trace that Corollary~\ref{cor:readout_gap} bounds is the
regularized chaos energy $\mathcal{E}_i$; where the selected
$\lambda_{\mathrm{reg}}$ is small or zero, the measured readout gap remains
small (Table~\ref{tab:readout_coincidence}).

\section{Experiments}
\label{sec:experiments}

\subsection{Setup}
\label{sec:exp_setup}

For \emph{calibration}, standalone DSS-GNN is compared with
TFE-GNN~\citep{duan2024tfe} and G-$\Delta$UQ~\citep{trivedi2024gdeltauq} on
14 node classification
benchmarks~\citep{kipf2017gcn,shchur2018pitfalls,pei2020geomgcn,platonov2023critical},
reporting accuracy and the Brier score~\citep{brier1950} over 10 splits
without post-hoc calibration. For \emph{OOD detection}, DSS-Hybrid is
evaluated on the GNNSafe benchmark~\citep{wu2023gnnsafe} (nine node-OOD
settings on Cora, Amazon-Photo, and Coauthor-CS, plus cross-graph Twitch and
Arxiv; 3 seeds) against GNNSafe, GNNSafe++, and GPN~\citep{stadler2021gpn} at the values
reported by \citet{wu2023gnnsafe}, and against
Graph-EBM~\citep{fuchsgruber2024gebm} (scored with its own diffusion),
MC-dropout ($M{=}20$)~\citep{gal2016dropout} and Deep Ensembles
($M{=}5$)~\citep{lakshminarayanan2017simple} on the same GCN backbone with
the same score propagation. For \emph{distribution shift}, DSS-Hybrid
is trained with empirical risk minimization (ERM) on 7 concept-shift settings of
GOOD~\citep{gui2022good} and compared with the 12 GOOD/TAR
baselines~\citep{arjovsky2019irm,sagawa2020groupdro,tar} and G-$\Delta$UQ.
Point predictions are $\arg\max\bar p_i$ (Section~\ref{sec:readout});
Table~\ref{tab:acc_brier} reports the mean-logit readout, which Section~\ref{sec:exp_calibration} shows to agree with it. All experiments
use
$\mathcal{L}_{\mathrm{base}}$~\eqref{eq:loss_base}; in the OOD experiments DSS-Hybrid,
like GNNSafe++, adds the GNNSafe++ energy-margin regularizer at per-dataset
margins. Baselines, datasets, and configurations are in Appendices~\ref{app:baselines_datasets}--\ref{app:implementation};
code is available at \coderepo.

\subsection{Node classification and calibration}
\label{sec:exp_calibration}

Table~\ref{tab:acc_brier} shows that DSS-GNN achieves the lowest Brier score
among the compared uncertainty-aware baselines on all 14 datasets,
homophilous and heterophilous, without post-hoc calibration, and the best
accuracy on 13. The Brier gains over the deterministic $P=0$ arm are largest
on Citeseer, Texas, and Minesweeper (Table~\ref{tab:p_brier}), consistent
with the curvature-weighted correction of Proposition~\ref{prop:curvature}.
This is not an accuracy effect: standard GNNs match or exceed DSS-GNN in
accuracy on Cora and Citeseer while remaining less calibrated (Appendix~\ref{app:simple_baselines}), and at
$P=0$ DSS-GNN is a competitive dual-filter spectral GNN
(Proposition~\ref{prop:generalization}), so the gains are attributable to
the chaos expansion (Table~\ref{tab:decompose}). Figure~\ref{fig:proposition2}
visualizes the
correction $c_i=\tfrac12\operatorname{tr}(\mathcal{H}_i\Sigma_{z_i})$ on
Minesweeper: it is largest where stochastic variation aligns with
high-curvature directions and concentrates on the hardest nodes.

\paragraph{Prediction and calibration use one predictive distribution.}
Under the full protocol with both readouts evaluated at the same checkpoints,
$\arg\max\bar p_i$ and $\arg\max Z_{i,0}$ disagree on at most $0.21\%$ of
test nodes (exactly zero on 10 of 14 datasets), with accuracy within $0.04$
points and Brier within $0.003$ (Table~\ref{tab:readout_coincidence},
Appendix~\ref{app:crosseval}); in hybrid mode the disagreement is $0.0000$ on
all 14 datasets. The agreement is structural: for $S\ge P{+}1$
Corollary~\ref{cor:readout_gap}
bounds the readout gap by $\operatorname{tr}(\Sigma_{z_i})$, which the
objective keeps small (Section~\ref{sec:training}), so the reported
calibration is that of the distribution the predictions are drawn from.

\paragraph{Chaos order.}
\label{sec:ablation}
Sweeping $P\in\{0,\dots,6,8\}$ (Appendix~\ref{app:ablation},
Figure~\ref{fig:p_ablation}) improves Brier on 12 of 14 datasets at $P>0$,
and at least one of $P=1$ or $P=2$ is within $0.02$ Brier of the best swept
order on every dataset, consistent with the exponential truncation bound of
Theorem~\ref{thm:universal_approx}. $P$ is selected per dataset on
validation Brier (Table~\ref{tab:selection}); the larger selected orders are
near-ties rather than requirements, and we recommend $P=1\text{--}2$ as the
default. Within the shared-factor family $P=1$ is exactly a Gaussian
embedding; the higher orders add non-Gaussian structure (Citeseer Brier
$0.346$ to $0.308$ at $P=2$; Tables~\ref{tab:p_acc}--\ref{tab:p_brier}).

\begin{figure*}[t]
\centering
\includegraphics[width=\textwidth]{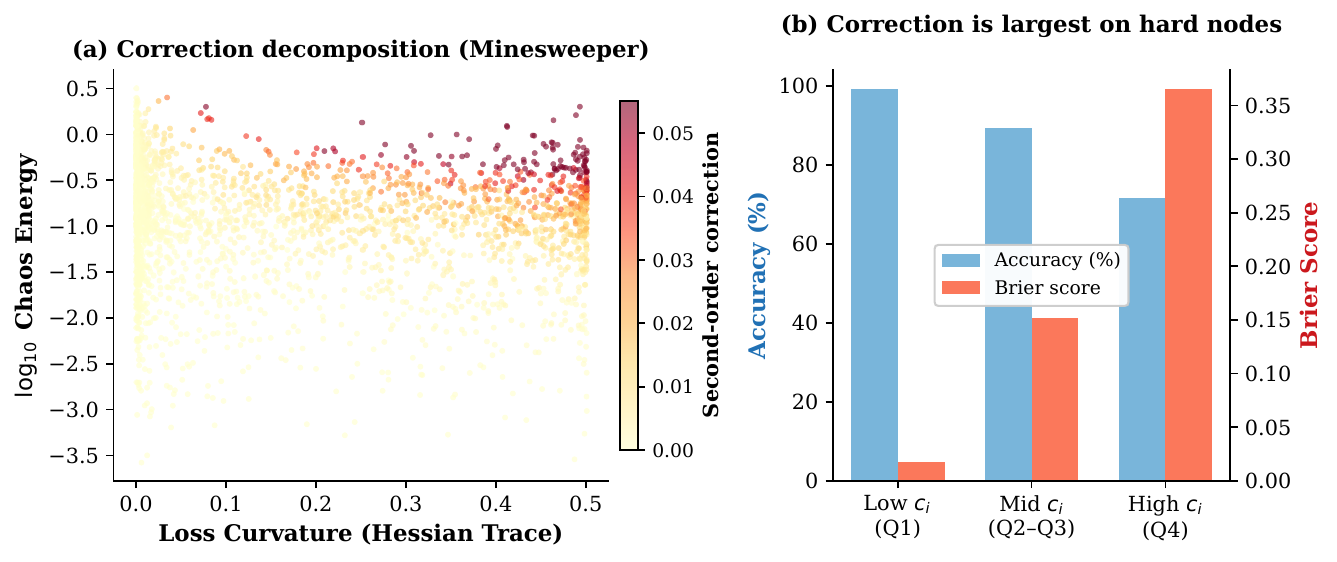}
\caption{\textbf{(a)}~Second-order correction $c_i$
(Proposition~\ref{prop:curvature}) on Minesweeper. Each node is positioned by
loss curvature ($\operatorname{tr}(\mathcal{H}_i)$) and chaos energy
($\mathcal{E}_i$); color shows correction magnitude.
\textbf{(b)}~Nodes binned by correction quartile: the correction concentrates
on hard nodes (low accuracy, high Brier).}
\label{fig:proposition2}
\end{figure*}

%

\begin{table*}[t]
\centering
\small
\caption{Node classification: test accuracy (\%, $\uparrow$) and Brier score ($\downarrow$). Mean $\pm$ standard deviation over 10 runs. Best result per dataset in \textbf{bold}. DSS-GNN uses the per-dataset configuration of Table~\ref{tab:selection} and the mean-logit readout.}
\label{tab:acc_brier}
\resizebox{\textwidth}{!}{%
\begin{tabular}{l ccc ccc}
\toprule
 & \multicolumn{3}{c}{Accuracy (\%)} & \multicolumn{3}{c}{Brier Score} \\
\cmidrule(lr){2-4} \cmidrule(lr){5-7}
Dataset & DSS-GNN & TFE-GNN & G-$\Delta$UQ & DSS-GNN & TFE-GNN & G-$\Delta$UQ \\
\midrule
Cora       & \textbf{86.63 $\pm$ 1.26} & 71.25 $\pm$ 1.90 & 84.35 $\pm$ 1.97 & \textbf{0.207 $\pm$ 0.019} & 0.449 $\pm$ 0.024 & 0.226 $\pm$ 0.023 \\
Citeseer   & \textbf{80.20 $\pm$ 1.28} & 68.87 $\pm$ 1.87 & 70.85 $\pm$ 2.22 & \textbf{0.308 $\pm$ 0.010} & 0.463 $\pm$ 0.020 & 0.433 $\pm$ 0.026 \\
PubMed     & \textbf{89.72 $\pm$ 0.31} & 82.84 $\pm$ 0.76 & 88.18 $\pm$ 0.66 & \textbf{0.156 $\pm$ 0.005} & 0.265 $\pm$ 0.011 & 0.179 $\pm$ 0.008 \\
Texas      & \textbf{91.31 $\pm$ 3.44} & 84.92 $\pm$ 4.26 & 9.02 $\pm$ 1.83  & \textbf{0.240 $\pm$ 0.111} & 0.255 $\pm$ 0.083 & 0.779 $\pm$ 0.029 \\
Cornell    & \textbf{85.11 $\pm$ 5.12} & 81.28 $\pm$ 5.45 & 21.06 $\pm$ 3.49 & \textbf{0.231 $\pm$ 0.079} & 0.290 $\pm$ 0.069 & 0.793 $\pm$ 0.023 \\
Wisconsin  & \textbf{93.25 $\pm$ 3.39} & 76.50 $\pm$ 23.95 & 52.25 $\pm$ 17.80 & \textbf{0.104 $\pm$ 0.047} & 0.292 $\pm$ 0.186 & 0.571 $\pm$ 0.109 \\
Chameleon  & \textbf{75.36 $\pm$ 1.19} & 73.41 $\pm$ 1.53 & 37.31 $\pm$ 2.36 & \textbf{0.339 $\pm$ 0.019} & 0.365 $\pm$ 0.016 & 0.765 $\pm$ 0.011 \\
Squirrel   & 68.06 $\pm$ 2.21 & \textbf{71.56 $\pm$ 1.42} & 25.88 $\pm$ 1.91 & \textbf{0.394 $\pm$ 0.027} & 0.395 $\pm$ 0.017 & 0.795 $\pm$ 0.006 \\
CS         & \textbf{96.17 $\pm$ 0.23} & 95.05 $\pm$ 0.24 & 94.37 $\pm$ 0.23 & \textbf{0.062 $\pm$ 0.004} & 0.082 $\pm$ 0.005 & 0.085 $\pm$ 0.004 \\
\midrule
Roman-Emp. & \textbf{78.84 $\pm$ 0.61} & 70.35 $\pm$ 0.60 & 51.24 $\pm$ 1.07 & \textbf{0.300 $\pm$ 0.007} & 0.414 $\pm$ 0.006 & 0.641 $\pm$ 0.008 \\
Amz-Rat.   & \textbf{49.72 $\pm$ 0.64} & 48.79 $\pm$ 0.43 & 44.43 $\pm$ 0.42 & \textbf{0.634 $\pm$ 0.005} & 0.637 $\pm$ 0.004 & 0.674 $\pm$ 0.003 \\
Minesweeper & \textbf{87.66 $\pm$ 1.10} & 82.42 $\pm$ 0.33 & 80.22 $\pm$ 0.22 & \textbf{0.169 $\pm$ 0.014} & 0.248 $\pm$ 0.003 & 0.288 $\pm$ 0.003 \\
Tolokers   & \textbf{79.88 $\pm$ 0.49} & 78.18 $\pm$ 0.10 & 79.75 $\pm$ 0.55 & \textbf{0.268 $\pm$ 0.005} & 0.382 $\pm$ 0.004 & 0.289 $\pm$ 0.007 \\
Questions  & \textbf{97.17 $\pm$ 0.04} & 97.07 $\pm$ 0.09 & 97.06 $\pm$ 0.05 & \textbf{0.053 $\pm$ 0.001} & 0.061 $\pm$ 0.001 & 0.054 $\pm$ 0.001 \\
\bottomrule
\end{tabular}%
}
\end{table*}

\subsection{OOD detection}
\label{sec:exp_ood}

Tables~\ref{tab:gnnsafe_local_table2style}--\ref{tab:gnnsafe_local_table1style}
compare DSS-Hybrid with GNNSafe++~\citep{wu2023gnnsafe}, the
strongest OOD baseline: it has the best AUROC on 6 of 9 node-OOD cells (Cora
$94.32/97.60/94.11$ versus $90.62/95.56/92.75$ for structure/feature/label;
Amazon-Photo feature/label $99.66/97.52$; Coauthor-CS label $98.07$) and
is within $0.13$ points of the best method on the three remaining cells,
where both it and GNNSafe++ exceed $99.6$. On Twitch it improves over
GNNSafe++ on every metric (AUROC $95.75$ vs $95.36$, FPR95 $22.40$ vs
$33.57$); on Arxiv it gives the best FPR95 and ID accuracy while remaining
within $1.41$ AUROC points of GNNSafe++ ($73.36$ vs $74.77$). DSS-Hybrid follows the
GNNSafe++ protocol (same GCN backbone configuration, energy-margin objective
with its per-dataset margins, and score propagation;
Appendix~\ref{app:implementation}), so the two differ in the DSS
residual branch. Against MC-dropout and
Deep Ensembles, which receive the same propagation, DSS-Hybrid has the
higher AUROC on all 11 settings in one forward pass versus their 20 passes
and 5 models; the published GPN values are below GNNSafe++ throughout.

\begin{table}[t]
\centering
\small
\caption{OOD detection (AUROC, \%) on the nine node-OOD settings and the two
cross-graph settings (AUPR, FPR95, and ID accuracy for the latter are in
Table~\ref{tab:gnnsafe_local_table1style}).
GNNSafe/GNNSafe++ and GPN~\citep{stadler2021gpn} values as reported by
\citet{wu2023gnnsafe} on this benchmark (their Tables~1--2; their appendix
lists $81.77/93.24$ for GPN on Coauthor-CS feature/label; GPN out-of-memory
on Arxiv); Graph-EBM~\citep{fuchsgruber2024gebm}, MC-dropout ($M{=}20$),
and Deep Ensembles ($M{=}5$) run in our pipeline on the same GCN backbone
(3 seeds; the latter two with the same score propagation). Bold: best.}
\label{tab:gnnsafe_local_table2style}
\resizebox{\textwidth}{!}{%
\begin{tabular}{l ccc ccc ccc cc}
\hline
Model & \multicolumn{3}{c}{\textsc{Cora}} & \multicolumn{3}{c}{\textsc{Amazon-Photo}} & \multicolumn{3}{c}{\textsc{Coauthor-CS}} & \multicolumn{2}{c}{\textsc{Cross-graph}} \\
 & Structure & Feature & Label & Structure & Feature & Label & Structure & Feature & Label & Twitch & Arxiv \\
\hline
GNNSafe     & 87.52 & 93.44 & 92.80 & 99.58 & 98.55 & 97.35 & 99.60 & 99.64 & 97.23 & 66.82 & 71.06 \\
GNNSafe++   & 90.62 & 95.56 & 92.75 & \textbf{99.82} & 99.64 & 97.51 & \textbf{99.99} & \textbf{99.97} & 97.89 & 95.36 & \textbf{74.77} \\
Graph-EBM   & 61.14 & 72.42 & 92.69 & 75.22 & 86.49 & 97.24 & 72.75 & 89.28 & 97.91 & 44.43 & 52.80 \\
MC-dropout  & 87.50 & 93.12 & 93.03 & 98.67 & 98.50 & 96.80 & 99.51 & 99.53 & 97.25 & 68.58 & 66.34 \\
Deep Ensemble & 87.87 & 93.63 & 93.75 & 98.58 & 98.43 & 97.35 & 98.18 & 98.45 & 94.97 & 72.25 & 67.72 \\
GPN         & 77.47 & 85.88 & 90.34 & 97.17 & 87.91 & 92.72 & 34.67 & 72.56 & 83.65 & 51.73 & OOM \\
DSS-Hybrid  & \textbf{94.32} & \textbf{97.60} & \textbf{94.11} & 99.69 & \textbf{99.66} & \textbf{97.52} & 99.96 & 99.95 & \textbf{98.07} & \textbf{95.75} & 73.36 \\
\hline
\end{tabular}%
}
\vspace{-0.4em}
\end{table}

\begin{table}[t]
\centering
\small
\caption{Cross-graph OOD (AUROC, AUPR: $\uparrow$; FPR95, false positive rate
at 95\% true positive rate: $\downarrow$; ID: in-distribution; bold: best).
GNNSafe/GNNSafe++ and GPN values as reported by \citet{wu2023gnnsafe} (OOM:
out of memory on a 24\,GB GPU); Graph-EBM~\citep{fuchsgruber2024gebm}, MC-dropout, and Deep Ensembles run in
our pipeline (3 seeds).}
\label{tab:gnnsafe_local_table1style}
\begin{tabular}{l cccc cccc}
\hline
 & \multicolumn{4}{c}{\textsc{Twitch}} & \multicolumn{4}{c}{\textsc{Arxiv}} \\
Model & AUROC & AUPR & FPR95 & ID Acc. & AUROC & AUPR & FPR95 & ID Acc. \\
\hline
GNNSafe     & 66.82 & 70.97 & 76.24 & 70.40 & 71.06 & 80.44 & 87.01 & 53.39 \\
GNNSafe++   & 95.36 & 97.12 & 33.57 & 70.18 & \textbf{74.77} & \textbf{83.21} & 77.43 & 53.50 \\
Graph-EBM   & 44.43 & 57.91 & 94.84 & 63.04 & 52.80 & 65.88 & 97.40 & 53.45 \\
MC-dropout  & 68.58 & 81.25 & 94.84 & 64.10 & 66.34 & 74.88 & 89.69 & 53.56 \\
Deep Ensemble & 72.25 & 83.19 & 94.29 & 66.94 & 67.72 & 75.98 & 87.32 & 47.58 \\
GPN         & 51.73 & 66.36 & 95.51 & 68.09 & OOM & OOM & OOM & OOM \\
DSS-Hybrid  & \textbf{95.75} & \textbf{97.51} & \textbf{22.40} & \textbf{70.52} & 73.36 & 79.11 & \textbf{76.67} & \textbf{53.99} \\
\hline
\end{tabular}
\end{table}

Detection comes from the energy readout, with propagation, acting on the DSS
mean logit~\eqref{eq:energy_score}, which Proposition~\ref{prop:curvature}
formalizes as the readout that retains common-evidence shifts. Under this
identical readout the DSS representation separates ID from OOD more widely
than the GCN encoder alone (the Cora columns of Table~\ref{tab:gnnsafe_local_table2style}), and a controlled ablation
that varies only the chaos order
$P\in\{0,\dots,3\}$ at one fixed configuration leaves energy AUROC flat
(Cora-structure $91.0/91.2/90.9/90.8$; the level differs from
Table~\ref{tab:gnnsafe_local_table2style} as the ablation is a separate set
of runs; Appendix~\ref{app:ablation}), as the design predicts since
the energy score reads the mean logit alone. Score propagation contributes
up to $+18.2$ AUROC (Twitch) and $+14.5$ (Cora-structure) at identical
checkpoints (Appendix~\ref{app:ablation}). Figure~\ref{fig:ood_density}
shows the resulting wider energy-density gap between ID and OOD modes
relative to the energy-based baselines.

\begin{figure}[!htbp]
\centering
\includegraphics[width=0.9\textwidth]{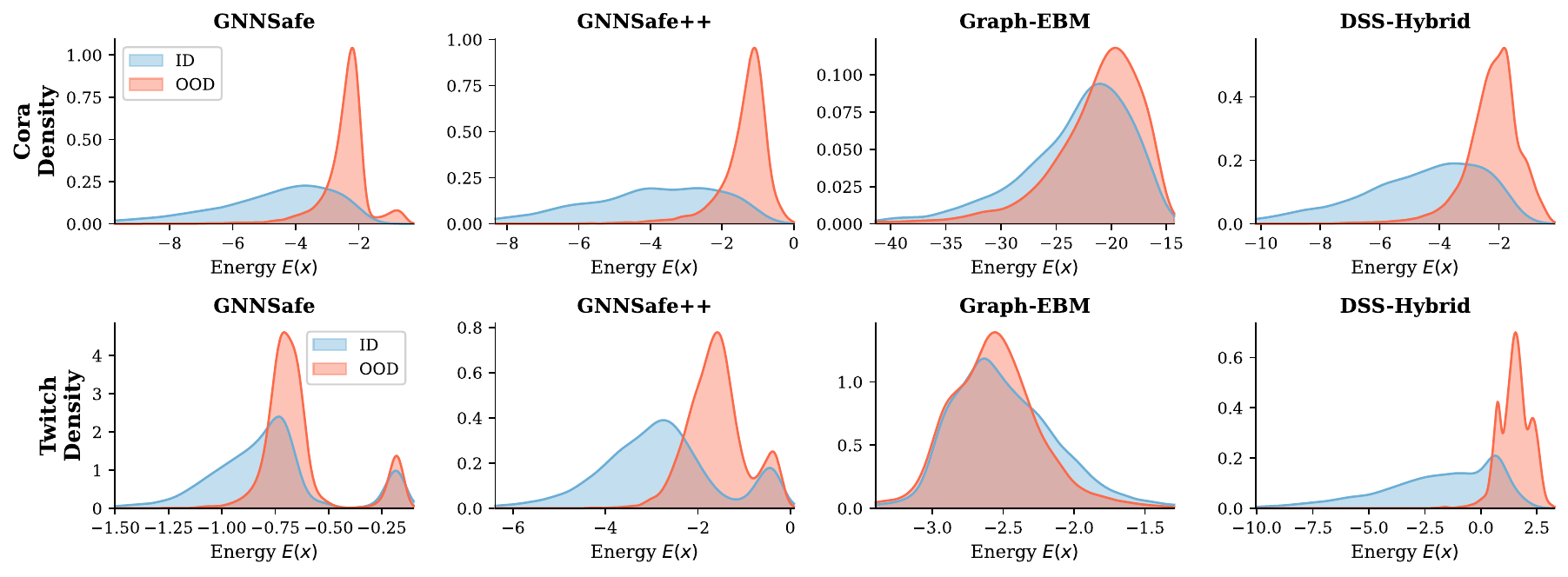}
\vspace{-0.6em}
\caption{Energy density for ID (blue) vs.\ OOD (red). Top: Cora/structure.
Bottom: Twitch (cross-graph). Lower energy indicates higher confidence.
DSS-Hybrid separates the ID and OOD modes on both datasets.}
\label{fig:ood_density}
\vspace{-0.6em}
\end{figure}

\subsection{Distribution-shifted classification}
\label{sec:exp_good}

The GOOD benchmark~\citep{gui2022good} tests generalization \emph{under} distribution shift. Table~\ref{tab:good_table2_style} compares DSS-Hybrid,
trained with standard ERM, against the GOOD/TAR baselines and G-$\Delta$UQ
on seven concept-shift settings (covariate-shift splits in
Appendix~\ref{app:covariate}): it has the best mean shifted accuracy in
every case, by margins from
$0.08$ (GOOD-Cora/word) to $9.94$ points (GOOD-WebKB/university). This is consistent with the inductive bias of Section~\ref{sec:hybrid}: the
regularized low-/high-pass residual can model the shift in which graph
frequencies are predictive while the base predictor is preserved. Section~\ref{sec:exp_crosseval} isolates the residual pathway as a unit (hybrid versus its identically trained GCN base), and
Appendix~\ref{app:ablation} (Table~\ref{tab:good_pabl}) isolates the chaos
order under shift: at the validation-selected standalone configurations, $P{=}1$
improves shifted accuracy over $P{=}0$ on GOOD-Cora/word ($+1.7$),
GOOD-Cora/degree ($+1.2$), and GOOD-Arxiv/time ($+0.6$), so on those
settings the \emph{uncertainty} representation, not only the dual-filter
structure, contributes.

\begin{table}[t]
\centering
\caption{Test accuracy (\%) on the concept-shift GOOD node benchmark for
DSS-Hybrid (standard ERM training) and 13 baselines; ERM through TAR as
reported by \citet{tar}, G-$\Delta$UQ run in our pipeline. Best in bold,
second-best underlined; OOM: out of memory.}
\label{tab:good_table2_style}
\resizebox{\textwidth}{!}{%
\begin{tabular}{lccccccc}
\toprule
Model & GOOD-CBAS & GOOD-WebKB & GOOD-Twitch & GOOD-Cora & GOOD-Cora & GOOD-Arxiv & GOOD-Arxiv \\
 & color & university & language & word & degree & time & degree \\
\midrule
ERM & 82.43 & 27.16 & 51.59 & 64.03 & 60.30 & 65.64 & 54.81 \\
IRM & 82.00 & 26.06 & 49.78 & 63.93 & 60.26 & 65.54 & 56.72 \\
VREx & 82.86 & 26.61 & 55.75 & 64.03 & 60.53 & 65.92 & 56.68 \\
Coral & 81.57 & 28.07 & 51.80 & 64.04 & 60.30 & 65.79 & 55.14 \\
DANN & 83.57 & 29.36 & 51.67 & 63.96 & 60.23 & 65.67 & 55.34 \\
SRGNN & 82.14 & 26.42 & 51.58 & 63.96 & 60.27 & 65.64 & 55.08 \\
EERM & 65.71 & 29.91 & OOM & 63.42 & 60.21 & OOM & OOM \\
FLOOD & 84.29 & 28.62 & 54.22 & 64.01 & 60.31 & 65.66 & 58.59 \\
CIT & 83.71 & 28.99 & OOM & 63.77 & 60.05 & OOM & OOM \\
KL-DRO & 81.14 & 29.54 & 51.87 & 64.03 & 60.52 & 65.51 & 54.70 \\
GroupDRO & 82.71 & 29.17 & 52.24 & 64.10 & 60.43 & 65.93 & 56.24 \\
TAR & \underline{87.29} & \underline{30.83} & \underline{57.20} & \underline{64.73} & \underline{61.73} & \underline{66.08} & 59.26 \\
G-$\Delta$UQ & 67.14 & 30.77 & 54.97 & 58.90 & 60.06 & 65.46 & \underline{64.34} \\
\midrule
DSS-Hybrid & \textbf{88.57} & \textbf{40.77} & \textbf{61.21} & \textbf{64.81} & \textbf{62.73} & \textbf{66.76} & \textbf{66.43} \\
\bottomrule
\end{tabular}%
}
\end{table}

\subsection{Standalone versus hybrid: cross-evaluation and deployment guidance}
\label{sec:exp_crosseval}

Each mode is also evaluated on the other's tasks under the same protocols.
Table~\ref{tab:head_to_head} summarizes the outcome, and
Tables~\ref{tab:crosseval_calibration}--\ref{tab:crosseval_good} in
Appendix~\ref{app:crosseval} give the per-dataset numbers with standard
deviations and ID accuracies (protocols in
Appendix~\ref{app:crosseval_protocol}).

\begin{table}[t]
\centering
\small
\caption{Both deployment modes on all three tasks. Each cell summarizes the
full per-dataset comparison in Appendix~\ref{app:crosseval}
(Tables~\ref{tab:crosseval_calibration}, \ref{tab:crosseval_ood},
and~\ref{tab:crosseval_good}); the standalone OOD and GOOD cells and the
hybrid calibration cell are the cross-evaluations; the other cells restate
Tables~\ref{tab:acc_brier}, \ref{tab:gnnsafe_local_table2style}--\ref{tab:gnnsafe_local_table1style},
and~\ref{tab:good_table2_style}.}
\label{tab:head_to_head}
\begin{tabular}{>{\raggedright\arraybackslash}p{0.17\textwidth} >{\raggedright\arraybackslash}p{0.39\textwidth} >{\raggedright\arraybackslash}p{0.34\textwidth}}
\toprule
Task & Standalone DSS-GNN & DSS-Hybrid \\
\midrule
Calibration (14 datasets) & Best Brier on all 14 and best accuracy on 13
(Table~\ref{tab:acc_brier}); stronger than the hybrid on 13 of 14 & Never
loses beyond noise to its own GCN base; Roman-Empire $+27.7$ accuracy,
$-0.33$ Brier \\
OOD detection (11 settings) & Effective on 10 of 11: Cora $79.8/88.2/94.0$,
Photo $96.5$--$98.1$, CS $94.3$--$97.8$, Arxiv $73.7$ (margin objective);
the Twitch energy score does not separate & Best published AUROC on 7 of 11
(Table~\ref{tab:gnnsafe_local_table2style}); stronger than the standalone on
10 of 11 \\
GOOD concept shift (7 settings) & Above every published baseline on 5 of 7,
at ERM level on the other 2 & Strongest on all 7
(Table~\ref{tab:good_table2_style}) \\
\bottomrule
\end{tabular}
\end{table}

\paragraph{Classification and calibration.}
The standalone is the stronger calibrated classifier on 13 of 14 datasets;
Tolokers is the one exception (hybrid $81.2$ accuracy / $0.255$ Brier versus
$79.9$ / $0.268$). Where the
GCN encoder is strong the two are close (Cora $86.6$ vs $84.3$); on
heterophilous graphs the DSS residual compensates for a weak encoder
(Roman-Empire $78.8$ vs $74.4$); and where a plain GCN is unsuitable the
standalone is higher (Texas $91.3$ vs $45.6$, Squirrel $68.1$ vs $26.4$).
Against its own identically trained GCN base, the hybrid never loses beyond
noise on either metric (worst accuracy delta $-0.8$ on Texas, inside one
standard deviation of $7.7$) and gains where the encoder is weak
(Roman-Empire $+27.7$ accuracy / $-0.33$ Brier, Minesweeper $+6.6$ / $-0.11$):
the residual never measurably reduces its encoder's calibration or clean
accuracy, and where the base is unsuitable (small heterophilous graphs) the
hybrid matches it rather than improving on it.

\paragraph{OOD detection.}
The hybrid is the stronger detector on 10 of 11 settings. Its
GCN-plus-residual encoder is stronger in the sparse-label perturbation regime
(Cora-structure $94.3$ vs $79.8$) and for cross-graph transfer (Twitch),
while on Arxiv the standalone run under the GNNSafe++ margin objective is
$0.4$ higher with the best Arxiv ID accuracy ($61.2$ vs $54.0$). The one
standalone exception is Twitch: the classifier trains (ID accuracy
$68.4$ vs $70.5$) but its energy score does not separate the cross-graph OOD
inputs, so the hybrid ($95.8$) is preferred there. On the perturbation settings with
tiny public splits the GNNSafe pipeline gives every backbone BatchNorm, and
plain GCN in the same pipeline does not train without it (ID accuracy $75.7$
with BatchNorm, $45.8$ without, on Cora-structure); the standalone uses one
BatchNorm1d per chaos channel (Appendix~\ref{app:implementation}) in all OOD and GOOD cells but
OOD Twitch and GOOD-Arxiv/degree, and does not need it
under the full-label calibration protocol
(Appendix~\ref{app:bn_calibration}).

\paragraph{Distribution shift.}
On GOOD the standalone is above every baseline of
Table~\ref{tab:good_table2_style} on 5 of 7 settings (WebKB $36.4$ and Twitch
$60.3$ vs TAR $30.8$ and $57.2$) and at ERM's level on the other two (CBAS
$82.4$, Arxiv/time $65.4$), with ID accuracy between $63.5$ and $96.2$
(Appendix~\ref{app:crosseval}); the hybrid is above every baseline on all 7.

\paragraph{Deployment guidance.}
The two modes share one uncertainty representation and the same two
readouts, \eqref{eq:predictive_average} and~\eqref{eq:energy_score}, and
differ in the encoder. Outside the exceptions above, each is the better mode
on its deployment task and remains effective on the other's. We recommend the standalone for calibration-critical single-graph
use, in particular under heterophily, and the hybrid when a strong
conventional encoder exists, for cross-graph transfer, and for OOD detection
in sparse-label regimes; the standalone requires BatchNorm stabilization in
sparse-label regimes, and the hybrid shares its encoder's failures
(Section~\ref{sec:conclusion}).

\section{Conclusion and Limitations}
\label{sec:conclusion}

DSS-GNN is one uncertainty representation, a doubly-spectral stochastic
expansion over graph-frequency and chaos-order axes, with task-matched
readouts and two deployment modes. Empirically, the standalone has the lowest Brier score among
the compared uncertainty-aware baselines on all 14 calibration benchmarks, the hybrid the
strongest AUROC on most node-OOD settings and the strongest shifted accuracy
on all 7 concept-shift benchmarks under ERM, and both modes are effective on
all three tasks with the exceptions noted in Section~\ref{sec:exp_crosseval}.

\paragraph{Limitations.}
(i)~\emph{Two deployment modes.} The modes share one representation and the
same readouts but are not interchangeable in every regime. The standalone
model requires BatchNorm stabilization in sparse-label regimes, where every
baseline backbone in the OOD pipeline also uses it, and uses it on 6 of 7
GOOD settings; under the full-label calibration protocol it is not needed and is not used
(Appendix~\ref{app:bn_calibration}). The hybrid is
preferable when a strong conventional encoder exists and shares that
encoder's limits: on small heterophilous graphs the hybrid matches its base
without improving on it. The standalone energy score does not separate cross-graph OOD inputs on
Twitch, where only the hybrid is effective.
(ii)~\emph{Scalar latent.} All stochastic variation is due to one shared
Gaussian factor, perfectly dependent across nodes, so the logit covariance
across nodes has rank at most $P$. This suffices for the marginal readouts
used here (per-node calibration and energy scores) and is what makes
single-pass quadrature possible, but it cannot represent independent
per-node noise, so joint cross-node uncertainty is not modeled; the
sampling baselines (MC-dropout, Deep Ensembles) have higher Brier than the
standalone and lower AUROC than the hybrid
(Appendix~\ref{app:simple_baselines},
Table~\ref{tab:gnnsafe_local_table2style}). A vector latent of dimension $q$ would cost $S^q$
quadrature nodes on a full tensor grid, which sparse grids mitigate; we leave
this to future work.
(iii)~\emph{Compute.} The chaos expansion adds a $(P{+}1)\times S$ operation
count per layer; the measured per-epoch overhead at $P{=}2$ is $2.9\times$
on Cora and $1.2\times$ on Amazon-Ratings, and about $4\%$ per additional
order on ogbn-arxiv, with memory linear in $P{+}1$ (Table~\ref{tab:timing}).

\label{main:end}

\begin{ack}
This work was supported in part by the National Science Foundation under
grant 2531008. Part of this work was done while Fred Xu was an intern at
Block, Inc., and we thank Block, Inc.\ for its support.
\end{ack}

\bibliographystyle{plainnat}
\bibliography{paper}

@inproceedings{duan2024tfe,
  title     = {Unifying Homophily and Heterophily for Spectral Graph Neural Networks via Triple Filter Ensembles},
  author    = {Duan, Rui and Guang, Mingjian and Wang, Junli and Yan, Chungang and Qi, Hongda and Su, Wenkang and Tian, Can and Yang, Haoran},
  booktitle = {Advances in Neural Information Processing Systems},
  year      = {2024},
  doi       = {10.52202/079017-2966},
  url       = {https://proceedings.neurips.cc/paper_files/paper/2024/hash/a9db2b121c3517fd559ecbe5038701ee-Abstract-Conference.html}
}

@inproceedings{defferrard2016chebnet,
  title     = {Convolutional Neural Networks on Graphs with Fast Localized Spectral Filtering},
  author    = {Defferrard, Micha{\"e}l and Bresson, Xavier and Vandergheynst, Pierre},
  booktitle = {Advances in Neural Information Processing Systems},
  year      = {2016},
  url       = {https://proceedings.neurips.cc/paper/2016/hash/04df4d434d481c5bb723be1b6df1ee65-Abstract.html}
}

@inproceedings{stadler2021gpn,
  title     = {{Graph Posterior Network}: {Bayesian} Predictive Uncertainty for Node Classification},
  author    = {Stadler, Maximilian and Charpentier, Bertrand and Geisler, Simon and Z{\"u}gner, Daniel and G{\"u}nnemann, Stephan},
  booktitle = {Advances in Neural Information Processing Systems},
  year      = {2021},
  url       = {https://proceedings.neurips.cc/paper/2021/hash/95b431e51fc53692913da5263c214162-Abstract.html}
}

@inproceedings{trivedi2024gdeltauq,
  title     = {Accurate and Scalable Estimation of Epistemic Uncertainty for Graph Neural Networks},
  author    = {Trivedi, Puja and Heimann, Mark and Anirudh, Rushil and Koutra, Danai and Thiagarajan, Jayaraman J.},
  booktitle = {International Conference on Learning Representations},
  year      = {2024},
  url       = {https://openreview.net/forum?id=ZL6yd6N1S2}
}

@inproceedings{lin2024gnsd,
  title     = {Graph Neural Stochastic Diffusion for Estimating Uncertainty in Node Classification},
  author    = {Lin, Xixun and Zhang, Wenxiao and Shi, Fengzhao and Zhou, Chuan and Zou, Lixin and Zhao, Xiangyu and Yin, Dawei and Pan, Shirui and Cao, Yanan},
  booktitle = {Proceedings of the 41st International Conference on Machine Learning},
  pages     = {30457--30478},
  volume    = {235},
  series    = {Proceedings of Machine Learning Research},
  publisher = {PMLR},
  year      = {2024},
  url       = {https://proceedings.mlr.press/v235/lin24x.html}
}

@inproceedings{fuchsgruber2024gebm,
  title     = {Energy-based Epistemic Uncertainty for Graph Neural Networks},
  author    = {Fuchsgruber, Dominik and Wollschl{\"a}ger, Tom and G{\"u}nnemann, Stephan},
  booktitle = {Advances in Neural Information Processing Systems},
  year      = {2024},
  doi       = {10.52202/079017-1084},
  url       = {https://proceedings.neurips.cc/paper_files/paper/2024/hash/3cd50f2922b7adaaa9e5113e35bae095-Abstract-Conference.html}
}

@inproceedings{gal2016dropout,
  title     = {Dropout as a {Bayesian} Approximation: Representing Model Uncertainty in Deep Learning},
  author    = {Gal, Yarin and Ghahramani, Zoubin},
  booktitle = {Proceedings of the 33rd International Conference on Machine Learning},
  pages     = {1050--1059},
  volume    = {48},
  series    = {Proceedings of Machine Learning Research},
  publisher = {PMLR},
  year      = {2016},
  url       = {https://proceedings.mlr.press/v48/gal16.html}
}

@inproceedings{lakshminarayanan2017simple,
  title     = {Simple and Scalable Predictive Uncertainty Estimation using Deep Ensembles},
  author    = {Lakshminarayanan, Balaji and Pritzel, Alexander and Blundell, Charles},
  booktitle = {Advances in Neural Information Processing Systems},
  year      = {2017},
  url       = {https://proceedings.neurips.cc/paper_files/paper/2017/hash/9ef2ed4b7fd2c810847ffa5fa85bce38-Abstract.html}
}

@article{xiu_wiener,
  author  = {Xiu, Dongbin and Karniadakis, George Em},
  title   = {The {Wiener--Askey} Polynomial Chaos for Stochastic Differential Equations},
  journal = {SIAM Journal on Scientific Computing},
  volume  = {24},
  number  = {2},
  pages   = {619--644},
  year    = {2002},
  doi     = {10.1137/S1064827501387826},
  url     = {https://doi.org/10.1137/S1064827501387826},
}

@inproceedings{shchur2018pitfalls,
  title     = {Pitfalls of Graph Neural Network Evaluation},
  author    = {Shchur, Oleksandr and Mumme, Maximilian and Bojchevski, Aleksandar and G{\"u}nnemann, Stephan},
  booktitle = {Relational Representation Learning Workshop, NeurIPS 2018},
  year      = {2018},
  doi       = {10.48550/arXiv.1811.05868},
  url       = {https://arxiv.org/abs/1811.05868}
}

@inproceedings{pei2020geomgcn,
  title     = {Geom-{GCN}: Geometric Graph Convolutional Networks},
  author    = {Pei, Hongbin and Wei, Bingzhe and Chang, Kevin Chen-Chuan and Lei, Yu and Yang, Bo},
  booktitle = {International Conference on Learning Representations},
  year      = {2020},
  url       = {https://openreview.net/forum?id=S1e2agrFvS}
}

@inproceedings{platonov2023critical,
  title     = {A critical look at the evaluation of {GNNs} under heterophily: Are we really making progress?},
  author    = {Platonov, Oleg and Kuznedelev, Denis and Diskin, Michael and Babenko, Artem and Prokhorenkova, Liudmila},
  booktitle = {International Conference on Learning Representations},
  year      = {2023},
  url       = {https://openreview.net/forum?id=tJbbQfw-5wv}
}

@inproceedings{liu2020energy,
  title     = {Energy-based Out-of-distribution Detection},
  author    = {Liu, Weitang and Wang, Xiaoyun and Owens, John and Li, Yixuan},
  booktitle = {Advances in Neural Information Processing Systems},
  year      = {2020},
  url       = {https://proceedings.neurips.cc/paper/2020/hash/f5496252609c43eb8a3d147ab9b9c006-Abstract.html}
}

@inproceedings{kipf2017gcn,
  title     = {Semi-Supervised Classification with Graph Convolutional Networks},
  author    = {Kipf, Thomas N. and Welling, Max},
  booktitle = {International Conference on Learning Representations},
  year      = {2017},
  url       = {https://openreview.net/forum?id=SJU4ayYgl}
}

@inproceedings{chien2021gprgnn,
  title     = {Adaptive Universal Generalized {PageRank} Graph Neural Network},
  author    = {Chien, Eli and Peng, Jianhao and Li, Pan and Milenkovic, Olgica},
  booktitle = {International Conference on Learning Representations},
  year      = {2021},
  url       = {https://openreview.net/forum?id=n6jl7fLxrP}
}

@inproceedings{he2021bernnet,
  title     = {{BernNet}: Learning Arbitrary Graph Spectral Filters via {Bernstein} Approximation},
  author    = {He, Mingguo and Wei, Zhewei and Huang, Zengfeng and Xu, Hongteng},
  booktitle = {Advances in Neural Information Processing Systems},
  year      = {2021},
  url       = {https://proceedings.neurips.cc/paper/2021/hash/76f1cfd7754a6e4fc3281bcccb3d0902-Abstract.html}
}

@article{shuman2013gsp,
  title     = {The emerging field of signal processing on graphs: Extending high-dimensional data analysis to networks and other irregular domains},
  author    = {Shuman, David I. and Narang, Sunil K. and Frossard, Pascal and Ortega, Antonio and Vandergheynst, Pierre},
  journal   = {IEEE Signal Processing Magazine},
  volume    = {30},
  number    = {3},
  pages     = {83--98},
  year      = {2013},
  doi       = {10.1109/MSP.2012.2235192},
  url       = {https://doi.org/10.1109/MSP.2012.2235192}
}

@book{nualart2006malliavin,
  title     = {The Malliavin Calculus and Related Topics},
  author    = {Nualart, David},
  publisher = {Springer Berlin Heidelberg},
  series    = {Probability and Its Applications},
  edition   = {2nd},
  year      = {2006},
  doi       = {10.1007/3-540-28329-3},
  url       = {https://doi.org/10.1007/3-540-28329-3}
}

@book{janson1997gaussian,
  title     = {Gaussian Hilbert Spaces},
  author    = {Janson, Svante},
  publisher = {Cambridge University Press},
  year      = {1997},
  doi       = {10.1017/CBO9780511526169},
  url       = {https://doi.org/10.1017/CBO9780511526169}
}

@article{bo2021fagcn,
  title     = {Beyond Low-frequency Information in Graph Convolutional Networks},
  author    = {Bo, Deyu and Wang, Xiao and Shi, Chuan and Shen, Huawei},
  journal   = {Proceedings of the AAAI Conference on Artificial Intelligence},
  volume    = {35},
  number    = {5},
  pages     = {3950--3957},
  year      = {2021},
  doi       = {10.1609/aaai.v35i5.16514},
  url       = {https://ojs.aaai.org/index.php/AAAI/article/view/16514}
}

@inproceedings{wu2023gnnsafe,
  title     = {Energy-based Out-of-Distribution Detection for Graph Neural Networks},
  author    = {Wu, Qitian and Chen, Yiting and Yang, Chenxiao and Yan, Junchi},
  booktitle = {International Conference on Learning Representations},
  year      = {2023},
  url       = {https://openreview.net/forum?id=zoz7Ze4STUL}
}

@inproceedings{sagawa2020groupdro,
  title     = {Distributionally Robust Neural Networks},
  author    = {Sagawa, Shiori and Koh, Pang Wei and Hashimoto, Tatsunori B. and Liang, Percy},
  booktitle = {International Conference on Learning Representations},
  year      = {2020},
  url       = {https://openreview.net/forum?id=ryxGuJrFvS}
}

@article{arjovsky2019irm,
  title     = {Invariant Risk Minimization},
  author    = {Arjovsky, Martin and Bottou, L{\'e}on and Gulrajani, Ishaan and Lopez-Paz, David},
  journal   = {arXiv preprint arXiv:1907.02893},
  year      = {2019},
  doi       = {10.48550/arXiv.1907.02893},
  url       = {https://arxiv.org/abs/1907.02893}
}

@inproceedings{krueger2021vrex,
  title     = {Out-of-Distribution Generalization via Risk Extrapolation ({REx})},
  author    = {Krueger, David and Caballero, Ethan and Jacobsen, Joern-Henrik and Zhang, Amy and Binas, Jonathan and Zhang, Dinghuai and Le Priol, Remi and Courville, Aaron},
  booktitle = {Proceedings of the 38th International Conference on Machine Learning},
  pages     = {5815--5826},
  volume    = {139},
  series    = {Proceedings of Machine Learning Research},
  publisher = {PMLR},
  year      = {2021},
  url       = {https://proceedings.mlr.press/v139/krueger21a.html}
}

@inproceedings{gui2022good,
  title     = {{GOOD}: A Graph Out-of-Distribution Benchmark},
  author    = {Gui, Shurui and Li, Xiner and Wang, Limei and Ji, Shuiwang},
  booktitle = {Advances in Neural Information Processing Systems},
  year      = {2022},
  url       = {https://proceedings.neurips.cc/paper_files/paper/2022/hash/0dc91de822b71c66a7f54fa121d8cbb9-Abstract-Datasets_and_Benchmarks.html}
}

@inproceedings{tar,
  title     = {Topology-Aware Dynamic Reweighting for Distribution Shifts on Graph},
  author    = {Zheng, Weihuang and Liu, Jiashuo and Li, Jiaxing and Wu, Jiayun and Cui, Peng and Kong, Youyong},
  booktitle = {Proceedings of the 42nd International Conference on Machine Learning},
  pages     = {78260--78275},
  volume    = {267},
  series    = {Proceedings of Machine Learning Research},
  publisher = {PMLR},
  year      = {2025},
  url       = {https://proceedings.mlr.press/v267/zheng25k.html}
}

@article{cameron1947orthogonal,
  title     = {The Orthogonal Development of Non-Linear Functionals in Series of {F}ourier-{H}ermite Functionals},
  author    = {Cameron, Robert H. and Martin, William T.},
  journal   = {Annals of Mathematics},
  volume    = {48},
  number    = {2},
  pages     = {385--392},
  year      = {1947},
  doi       = {10.2307/1969178},
  url       = {https://doi.org/10.2307/1969178}
}

@inproceedings{velickovic2018gat,
  title     = {Graph Attention Networks},
  author    = {Veli{\v{c}}kovi{\'c}, Petar and Cucurull, Guillem and Casanova, Arantxa and Romero, Adriana and Li{\`o}, Pietro and Bengio, Yoshua},
  booktitle = {International Conference on Learning Representations},
  year      = {2018},
  url       = {https://openreview.net/forum?id=rJXMpikCZ}
}

@article{wiener1938homogeneous,
  title     = {The Homogeneous Chaos},
  author    = {Wiener, Norbert},
  journal   = {American Journal of Mathematics},
  volume    = {60},
  number    = {4},
  pages     = {897--936},
  year      = {1938},
  doi       = {10.2307/2371268},
  url       = {https://doi.org/10.2307/2371268}
}

@article{nelson1973free,
  title     = {The free {Markoff} field},
  author    = {Nelson, Edward},
  journal   = {Journal of Functional Analysis},
  volume    = {12},
  number    = {2},
  pages     = {211--227},
  year      = {1973},
  doi       = {10.1016/0022-1236(73)90025-6},
  url       = {https://doi.org/10.1016/0022-1236(73)90025-6}
}

@article{ortega2018graph,
  title     = {Graph Signal Processing: Overview, Challenges, and Applications},
  author    = {Ortega, Antonio and Frossard, Pascal and Kova{\v{c}}evi{\'c}, Jelena and Moura, Jos{\'e} M. F. and Vandergheynst, Pierre},
  journal   = {Proceedings of the IEEE},
  volume    = {106},
  number    = {5},
  pages     = {808--828},
  year      = {2018},
  doi       = {10.1109/JPROC.2018.2820126},
  url       = {https://doi.org/10.1109/JPROC.2018.2820126}
}

@book{szego1975orthogonal,
  title     = {Orthogonal Polynomials},
  author    = {Szeg{\H{o}}, G{\'a}bor},
  publisher = {American Mathematical Society},
  edition   = {4th},
    year      = {1975}
}

@article{golub1969calculation,
  title     = {Calculation of {G}auss quadrature rules},
  author    = {Golub, Gene H. and Welsch, John H.},
  journal   = {Mathematics of Computation},
  volume    = {23},
  number    = {106},
  pages     = {221--230},
  year      = {1969},
  doi       = {10.1090/S0025-5718-69-99647-1},
  url       = {https://doi.org/10.1090/S0025-5718-69-99647-1}
}

@article{stokes2020deep,
  title     = {A Deep Learning Approach to Antibiotic Discovery},
  author    = {Stokes, Jonathan M. and Yang, Kevin and Swanson, Kyle and Jin, Wengong and Cubillos-Ruiz, Andres and Donghia, Nina M. and MacNair, Craig R. and French, Shawn and Carfrae, Lindsey A. and Bloom-Ackermann, Zohar and others},
  journal   = {Cell},
  volume    = {180},
  number    = {4},
  pages     = {688--702.e13},
  year      = {2020},
  doi       = {10.1016/j.cell.2020.01.021},
  url       = {https://doi.org/10.1016/j.cell.2020.01.021}
}

@inproceedings{dou2020caregnn,
  title     = {Enhancing Graph Neural Network-based Fraud Detectors against Camouflaged Fraudsters},
  author    = {Dou, Yingtong and Liu, Zhiwei and Sun, Li and Deng, Yutong and Peng, Hao and Yu, Philip S.},
  booktitle = {Proceedings of the 29th ACM International Conference on Information \& Knowledge Management},
  pages     = {315--324},
  publisher = {ACM},
  year      = {2020},
  doi       = {10.1145/3340531.3411903},
  url       = {https://doi.org/10.1145/3340531.3411903}
}

@inproceedings{li2018dcrnn,
  title     = {Diffusion Convolutional Recurrent Neural Network: Data-Driven Traffic Forecasting},
  author    = {Li, Yaguang and Yu, Rose and Shahabi, Cyrus and Liu, Yan},
  booktitle = {International Conference on Learning Representations},
  year      = {2018},
  url       = {https://openreview.net/forum?id=SJiHXGWAZ}
}

@article{oladyshkin2023deeppce,
  title     = {The deep arbitrary polynomial chaos neural network or how {Deep Artificial Neural Networks} could benefit from data-driven homogeneous chaos theory},
  author    = {Oladyshkin, Sergey and Praditia, Timothy and Kr{\"o}ker, Ilja and Mohammadi, Farid and Nowak, Wolfgang and Otte, Sebastian},
  journal   = {Neural Networks},
  volume    = {166},
  pages     = {85--104},
  year      = {2023},
  doi       = {10.1016/j.neunet.2023.06.036},
  url       = {https://doi.org/10.1016/j.neunet.2023.06.036}
}

@article{brier1950,
  title     = {Verification of Forecasts Expressed in Terms of Probability},
  author    = {Brier, Glenn W.},
  journal   = {Monthly Weather Review},
  volume    = {78},
  number    = {1},
  pages     = {1--3},
  year      = {1950},
  doi       = {10.1175/1520-0493(1950)078<0001:VOFEIT>2.0.CO;2},
  url       = {https://doi.org/10.1175/1520-0493(1950)078\%3C0001:VOFEIT\%3E2.0.CO;2}
}

\clearpage
\appendix

\addtocontents{toc}{\protect\setcounter{tocdepth}{2}}
\renewcommand{\contentsname}{Appendix: Table of Contents}
\tableofcontents

\clearpage
\section{Proofs and Derivations}
\label{sec:proofs}

We begin with a self-contained introduction to Wiener chaos and Gauss--Hermite
quadrature (Appendix~\ref{sec:chaos_background}), then present supporting
technical results (Appendix~\ref{sec:supporting_results}), followed by
individual proofs of each main-text theorem
(Appendices~\ref{sec:proof_generalization}--\ref{sec:proof_readouts}).

\subsection{Background: Wiener chaos and Gauss--Hermite quadrature}
\label{sec:chaos_background}

This section collects the probability and approximation theory underlying the
doubly-spectral expansion. The material is classical; we follow the
presentation in \citet{janson1997gaussian} and
\citet{xiu_wiener}. This appendix is self-contained.

\paragraph{Gaussian measure and Hermite polynomials.}
Let $\gamma$ denote the standard Gaussian measure on~$\mathbb{R}$, i.e.,
$d\gamma(\omega) = (2\pi)^{-1/2}e^{-\omega^2/2}\,d\omega$. The Hilbert space
$L^2(\mathbb{R},\gamma)$ consists of all square-integrable functions with
respect to~$\gamma$, equipped with the inner product
$\langle f,g\rangle = \int_{\mathbb{R}} f(\omega)\,g(\omega)\,d\gamma(\omega)
= \mathbb{E}[f(\omega)\,g(\omega)]$ where $\omega\sim\mathcal{N}(0,1)$.

The \emph{probabilists' Hermite polynomials}~\citep{szego1975orthogonal} are
defined by the Rodrigues formula
\begin{equation}
\mathrm{He}_n(\omega)
= (-1)^n\,e^{\omega^2/2}\,\frac{d^n}{d\omega^n}\,e^{-\omega^2/2},
\qquad n=0,1,2,\ldots
\label{eq:rodrigues}
\end{equation}
The first few are $\mathrm{He}_0(\omega)=1$,
$\mathrm{He}_1(\omega)=\omega$,
$\mathrm{He}_2(\omega)=\omega^2-1$,
$\mathrm{He}_3(\omega)=\omega^3-3\omega$. They satisfy the three-term
recurrence
\begin{equation}
\mathrm{He}_{n+1}(\omega)
= \omega\,\mathrm{He}_n(\omega) - n\,\mathrm{He}_{n-1}(\omega),
\label{eq:hermite_recurrence}
\end{equation}
with $\mathrm{He}_{-1}=0$, and the orthogonality relation
$\mathbb{E}[\mathrm{He}_m(\omega)\,\mathrm{He}_n(\omega)] = n!\,\delta_{mn}$.
Normalizing gives the \emph{orthonormal Hermite basis}
\begin{equation}
\Psi_n(\omega) := \frac{\mathrm{He}_n(\omega)}{\sqrt{n!}},
\qquad
\mathbb{E}[\Psi_m\Psi_n] = \delta_{mn},
\label{eq:orthonormal_hermite}
\end{equation}
which forms a complete orthonormal system for
$L^2(\mathbb{R},\gamma)$~\citep{szego1975orthogonal,janson1997gaussian}. Every
$f\in L^2(\mathbb{R},\gamma)$ can therefore be expanded as
$f(\omega)=\sum_{n=0}^{\infty}f_n\,\Psi_n(\omega)$ with
$f_n=\mathbb{E}[f\Psi_n]$ and Parseval's identity
$\|f\|_{L^2(\gamma)}^2=\sum_n f_n^2$.

\paragraph{Wiener--It\^{o} chaos decomposition.}
The Hermite expansion generalizes to a fundamental structural result in
Gaussian analysis. Let $(\Omega,\mathcal{F},\mathbb{P})$ be a probability space
carrying a standard Gaussian random variable~$\omega$, and define the
associated Gaussian subspace
\[
L_\omega^2 := \{f(\omega): f\in L^2(\mathbb{R},\gamma)\}
\subseteq L^2(\Omega,\mathbb{P}).
\]
Define the $n$-th \emph{Wiener chaos} $\mathcal{C}_n$ as the closed linear span
of $\Psi_n(\omega)$ in $L_\omega^2$. Since we use a single Gaussian latent
($q=1$), each $\mathcal{C}_n$ is one-dimensional and spanned by
$\Psi_n(\omega)$; with $q>1$, the corresponding chaos subspace would be indexed
by Hermite multi-indices. The \emph{Wiener--It\^{o} chaos decomposition}
states~\citep{wiener1938homogeneous,cameron1947orthogonal,janson1997gaussian}:
\begin{equation}
L_\omega^2
= \bigoplus_{n=0}^{\infty}\mathcal{C}_n,
\label{eq:wiener_ito}
\end{equation}
that is, every square-integrable random variable measurable with respect to
$\omega$ admits a unique, orthogonal expansion in the Hermite basis. Denoting
the orthogonal projection onto $\mathcal{C}_n$ by~$P_n$, any $f\in L_\omega^2$
satisfies
$f=\sum_{n=0}^{\infty}P_n f$ with $\|f\|^2=\sum_n\|P_n f\|^2$. The zeroth
chaos $\mathcal{C}_0$ consists of constants ($P_0 f = \mathbb{E}[f]$); the
first chaos is the space of centered Gaussian random variables; higher chaoses
capture progressively more complex nonlinear functionals of~$\omega$.

\begin{remark}[Analogy with graph spectral decomposition]
\label{rem:analogy}
The decomposition~\eqref{eq:wiener_ito} is the stochastic counterpart of the
graph spectral decomposition
$\ell^2(V) = \bigoplus_{j=1}^{N}\mathrm{span}(u_j)$. Both decompose a
Hilbert space into eigenspaces of a self-adjoint operator: the graph Laplacian
$L_G$ for $\ell^2(V)$, and the Ornstein--Uhlenbeck operator~$\mathcal{L}$
(defined below) for the Gaussian Hilbert space~$L_\omega^2$.
\end{remark}

\begin{remark}[Completeness of the doubly-spectral basis]
\label{rem:completeness}
Take the signal domain to be the random graph signals generated by the shared
latent: a random variable per node with finite second moment, measurable
with respect to $\omega$, i.e. an element of the tensor-product Hilbert space
$\ell^2(V)\otimes L^2(\mathbb{R},\gamma)$. The graph Fourier eigenvectors
$\{u_j\}_{j=1}^N$ form an orthonormal basis of $\ell^2(V)$ and, by the
Wiener--It\^{o} decomposition~\eqref{eq:wiener_ito}, the normalized Hermite
polynomials $\{\Psi_n\}_{n\ge0}$ form an orthonormal basis of
$L^2(\mathbb{R},\gamma)$. The tensor product of orthonormal bases is an
orthonormal basis of the tensor-product space, so
$\{u_j\otimes\Psi_n\}_{j\le N,\,n\ge0}$ is a complete orthonormal basis of
this domain, and the double expansion~\eqref{eq:doubly_spectral_expansion}
is a bi-spectral decomposition in the strict sense: both factors are
eigenbases of self-adjoint operators (Remark~\ref{rem:analogy}). The scope
caveat is that the space is built over the scalar $\omega$: signals with
independent per-node latents are outside it
(Section~\ref{sec:conclusion}).
\end{remark}

\paragraph{The Ornstein--Uhlenbeck operator and hypercontractivity.}
The \emph{Ornstein--Uhlenbeck (O-U) operator}
$\mathcal{L}$ is defined on its standard dense domain in
$L^2(\mathbb{R},\gamma)$ by
\begin{equation}
\mathcal{L}f(\omega) = f''(\omega) - \omega\,f'(\omega),
\label{eq:ou_operator}
\end{equation}
with eigenvalues and eigenfunctions
$\mathcal{L}\Psi_n = -n\,\Psi_n$~\citep{nualart2006malliavin}. This is
directly analogous to the graph Laplacian having eigenpairs
$(u_j,\lambda_j)$: $-\mathcal{L}$ is the corresponding operator on
$L^2(\mathbb{R},\gamma)$, with the chaos orders $0,1,2,\ldots$ as eigenvalues. The
associated \emph{O-U semigroup} $T_t = e^{t\mathcal{L}}$ acts as
$T_t\Psi_n = e^{-nt}\Psi_n$, damping higher-order chaos components
exponentially.

A fundamental property of $T_t$ is \emph{hypercontractivity}: for $1<p\le q$
and $e^{2t}\ge (q-1)/(p-1)$,
\begin{equation}
\|T_t f\|_{L^q(\gamma)} \le \|f\|_{L^p(\gamma)}
\label{eq:hypercontractivity}
\end{equation}
for all $f\in L^p(\gamma)$~\citep{nelson1973free,janson1997gaussian}.
A direct consequence for chaos components is the \emph{moment comparison
inequality}: for every $g\in\mathcal{C}_n$ and $q\ge 2$,
\begin{equation}
\|g\|_{L^q(\gamma)} \le (q-1)^{n/2}\,\|g\|_{L^2(\gamma)}.
\label{eq:moment_comparison}
\end{equation}
This follows by applying~\eqref{eq:hypercontractivity} with $p=2$ and
$\rho=e^{-t}=(q-1)^{-1/2}$ to $g\in\mathcal{C}_n$, using
$T_t g = e^{-nt}g$~\citep[Corollary~5.11]{janson1997gaussian}.
Equation~\eqref{eq:moment_comparison} bounds the higher moments of a fixed
chaos component; the exponential \emph{decay} of chaos coefficients with
order~$n$ requires additional regularity and is established separately in
Theorem~\ref{thm:universal_approx} via a Cauchy-estimate argument.

\paragraph{Gauss--Hermite quadrature.}
To evaluate integrals against~$\gamma$ numerically, we use Gauss--Hermite
quadrature~\citep{golub1969calculation,xiu_wiener}. An $S$-point rule
$\{(\omega_s,\mu_s)\}_{s=1}^S$ approximates
\begin{equation}
\int_{\mathbb{R}}f(\omega)\,d\gamma(\omega)
\;\approx\;
\sum_{s=1}^{S}\mu_s\,f(\omega_s),
\label{eq:gauss_hermite}
\end{equation}
where the nodes $\omega_1<\cdots<\omega_S$ are the roots of
$\mathrm{He}_S(\omega)$ and the weights $\mu_s>0$ sum to one. The main
property is \emph{polynomial exactness}: the approximation is exact whenever
$f$ is a polynomial of degree at most $2S-1$~\citep{szego1975orthogonal}. In
particular:
\begin{itemize}
\item Orthonormality:
$\sum_{s=1}^{S}\mu_s\,\Psi_m(\omega_s)\,\Psi_n(\omega_s) = \delta_{mn}$
for all $m+n \le 2S-1$.
\item Triple products:
$\sum_{s=1}^{S}\mu_s\,\Psi_r(\omega_s)\Psi_n(\omega_s)\Psi_m(\omega_s) =
\mathbb{E}[\Psi_r\Psi_n\Psi_m]$ whenever $r+n+m\le 2S-1$.
\end{itemize}
For DSS-GNN with chaos order~$P$ and gate order~$P_g$, the most demanding
integrand is the triple product $\Psi_r\Psi_n\Psi_m$ with
$r\le P_g$, $n,m\le P$, which has degree $P_g+2P$. Exactness therefore
requires $S\ge\lceil(P_g+2P+1)/2\rceil$ quadrature nodes. For $P_g\le 1$ the
threshold is $S\ge P+1$; the experiments use $S=4$, which is exact for
$P\le 3$.

\begin{remark}[Non-intrusive vs.\ intrusive projection]
\label{rem:nonintrusive}
In the polynomial chaos literature~\citep{xiu_wiener}, two strategies compute
the chaos coefficients of a transformed variable $\sigma(H(\omega))$:
(i)~\emph{intrusive} methods analytically propagate the expansion through
$\sigma$ using triple-product coefficients (Lemma~\ref{lem:product_triple});
(ii)~\emph{non-intrusive} methods evaluate $\sigma$ at quadrature nodes and
project back~\eqref{eq:gauss_hermite}. DSS-GNN uses the non-intrusive
approach~\eqref{eq:dss_projection_update} because it applies unchanged to
\emph{any} pointwise activation~$\sigma$ without requiring closed-form
triple-product expansions. Theorem~\ref{thm:nonintrusive_equals_intrusive}
proves exact agreement for product terms $\alpha(\omega)H(\omega)$ when the
quadrature rule integrates the relevant triple products exactly; for a general
nonlinear activation~$\sigma$, the non-intrusive projection remains an
approximation whose error is decomposed in
Proposition~\ref{prop:projection_error}.
\end{remark}

\subsection{Supporting technical results}
\label{sec:supporting_results}

\begin{lemma}[Products in chaos: triple-product contraction]
\label{lem:product_triple}
Let $\alpha(\omega)=\sum_{r=0}^{P_g} a_r\Psi_r(\omega)$ and
$H(\omega)=\sum_{n=0}^{P} H_n\Psi_n(\omega)$.
Then the order-$m$ coefficient of the product $\alpha(\omega)H(\omega)$ is
\[
(\alpha H)_m = \sum_{r=0}^{P_g}\sum_{n=0}^{P} a_r H_n\,c_{rnm},
\qquad
c_{rnm}:=\mathbb{E}[\Psi_r\Psi_n\Psi_m].
\]
\end{lemma}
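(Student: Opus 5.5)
The plan is to compute the coefficient directly by orthogonal projection onto $\Psi_m$ in $L^2(\mathbb{R},\gamma)$, using the orthonormality relation~\eqref{eq:orthonormal_hermite}.

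First, we check that the product is a legitimate element of $L^2(\mathbb{R},\gamma)$ (with values in the coefficient space of $H_n$). The product $\alpha(\omega)H(\omega)$ is a polynomial in $\omega$ of degree at most $P_g+P$ with matrix-valued coefficients. Every polynomial has finite moments of all orders under $\gamma$, so the product is square-integrable. By completeness of $\{\Psi_n\}$ and the Wiener--It\^o decomposition~\eqref{eq:wiener_ito}, it then has a unique chaos expansion. Its order-$m$ coefficient is
\[
(\alpha H)_m=\mathbb{E}\big[\alpha(\omega)H(\omega)\Psi_m(\omega)\big],
\]
where the expectation is taken entrywise on the matrix coefficients.

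Second, we substitute the two finite expansions, which gives
\[
\alpha H=\sum_{r=0}^{P_g}\sum_{n=0}^{P}a_rH_n\Psi_r\Psi_n .
\]
Both sums are finite and the $a_r,H_n$ are deterministic, so linearity of expectation lets us pull the coefficients out:
\[
\mathbb{E}[\alpha H\Psi_m]=\sum_{r,n}a_rH_n\,\mathbb{E}[\Psi_r\Psi_n\Psi_m]=\sum_{r,n}a_rH_n\,c_{rnm}.
\]
Each $c_{rnm}$ is finite because it is the Gaussian expectation of a polynomial.

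Finally, we note that $c_{rnm}=0$ whenever $m>r+n$, since $\Psi_r\Psi_n$ then lies in the span of $\Psi_0,\dots,\Psi_{r+n}$ and is orthogonal to $\Psi_m$. Consequently only $m\le P_g+P$ can be nonzero, and the expansion is exactly finite. There is no genuine obstacle here. The only point needing care is that the projection formula is valid, which rests on the square-integrability and uniqueness established in the first step.
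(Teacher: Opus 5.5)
Your proposal is correct and follows the paper's proof: expand the product, project onto $\Psi_m$ using orthonormality, and use linearity of expectation to pull out $a_rH_n$, which leaves $c_{rnm}=\mathbb{E}[\Psi_r\Psi_n\Psi_m]$. Your square-integrability remark and your observation that $c_{rnm}=0$ for $m>r+n$ are valid additions; the paper states the sharper selection rule that $c_{rnm}$ vanishes unless $|r-n|\le m\le r+n$ and $r+n+m$ is even.
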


\begin{proof}
Expand the product in the Hermite basis:
\[
\alpha(\omega)H(\omega)
= \Bigl(\sum_{r=0}^{P_g} a_r\Psi_r(\omega)\Bigr)
  \Bigl(\sum_{n=0}^{P} H_n\Psi_n(\omega)\Bigr)
= \sum_{r,n} a_r H_n\,\Psi_r(\omega)\Psi_n(\omega).
\]
The order-$m$ coefficient is obtained by projecting onto $\Psi_m$ using the
orthonormality $\mathbb{E}[\Psi_i\Psi_j]=\delta_{ij}$:
\[
(\alpha H)_m
= \mathbb{E}\bigl[\alpha(\omega)H(\omega)\Psi_m(\omega)\bigr]
= \sum_{r,n} a_r H_n\,\mathbb{E}[\Psi_r\Psi_n\Psi_m]
= \sum_{r,n} a_r H_n\,c_{rnm}.
\]
The triple-product constants $c_{rnm} = \mathbb{E}[\Psi_r\Psi_n\Psi_m]$ are
computable in closed form from the three-term recurrence of the probabilists'
Hermite polynomials~\citep{xiu_wiener}. In particular,
$c_{rnm} = 0$ unless $|r-n| \le m \le r+n$ and $r+n+m$ is even, which makes
the coupling matrices sparse.
\end{proof}

\begin{theorem}[Non-intrusive projection recovers product coefficients when quadrature is exact]
\label{thm:nonintrusive_equals_intrusive}
Let $\widehat{(\alpha H)}_m := \sum_{s=1}^{S} \mu_s\,\alpha(\omega_s)H(\omega_s)\Psi_m(\omega_s)$
be the discrete projection of the product using a Gauss--Hermite quadrature
rule $\{(\omega_s,\mu_s)\}_{s=1}^S$.
If the quadrature integrates all triple products
$\Psi_r(\omega)\Psi_n(\omega)\Psi_m(\omega)$ exactly for
$r\le P_g$, $n\le P$, $m\le P$, then
$\widehat{(\alpha H)}_m = (\alpha H)_m$ for all $m\le P$.
\end{theorem}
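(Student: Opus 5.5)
The argument is a direct computation: expand both factors, use linearity of the quadrature sum, and invoke the assumed exactness on triple products. First I substitute the finite expansions $\alpha(\omega_s)=\sum_{r=0}^{P_g}a_r\Psi_r(\omega_s)$ and $H(\omega_s)=\sum_{n=0}^{P}H_n\Psi_n(\omega_s)$ into the definition of $\widehat{(\alpha H)}_m$. Since both sums are finite and the coefficients $a_r\in\mathbb{R}$, $H_n\in\mathbb{R}^{N\times d}$ do not depend on $s$, I can interchange the sums:
\[
\widehat{(\alpha H)}_m=\sum_{r=0}^{P_g}\sum_{n=0}^{P}a_rH_n\sum_{s=1}^{S}\mu_s\Psi_r(\omega_s)\Psi_n(\omega_s)\Psi_m(\omega_s).
\]
The matrix-valued $H_n$ is handled entrywise, or equivalently by noting that the quadrature functional is linear on vector-valued integrands.

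Next, for each index triple with $r\le P_g$, $n\le P$ and $m\le P$, the hypothesis says that the inner quadrature sum equals $\mathbb{E}[\Psi_r\Psi_n\Psi_m]=c_{rnm}$. This is exactly the range of indices appearing above, so it gives $\widehat{(\alpha H)}_m=\sum_{r,n}a_rH_nc_{rnm}$. By Lemma~\ref{lem:product_triple}, this sum is the exact order-$m$ chaos coefficient $(\alpha H)_m$.

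It is also worth noting when the hypothesis holds. The integrand $\Psi_r\Psi_n\Psi_m$ has degree at most $P_g+2P$, so Gauss--Hermite exactness up to degree $2S-1$ covers it whenever $S\ge\lceil(P_g+2P+1)/2\rceil$, which matches Theorem~\ref{thm:joint_spectral_operator}.

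I do not expect a real obstacle. The only care needed is bookkeeping: the exactness hypothesis must cover every $(r,n,m)$ that actually occurs, which it does because $\alpha$ and $H$ are truncated at $P_g$ and $P$ and we restrict to $m\le P$. The same argument then extends verbatim to the filtered terms $\alpha_n(\omega)G_{\mathrm{lp}}(L_G)H_n$, because the graph filters act on the $s$-independent coefficients only.
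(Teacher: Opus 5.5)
Your proposal is correct and follows the paper's proof essentially step for step. You expand $\alpha$ and $H$, interchange the finite sums to isolate $\sum_s \mu_s\Psi_r(\omega_s)\Psi_n(\omega_s)\Psi_m(\omega_s)$, replace it by $c_{rnm}$, and conclude with Lemma~\ref{lem:product_triple}. The only difference is cosmetic: you apply the exactness hypothesis directly, whereas the paper justifies it via the degree bound $r+n+m\le P_g+2P\le 2S-1$, which you also record as the sufficient condition.
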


\begin{proof}
Substituting the expansions of $\alpha$ and $H$:
\begin{align*}
\widehat{(\alpha H)}_m
&= \sum_{s=1}^{S} \mu_s\,\alpha(\omega_s)H(\omega_s)\Psi_m(\omega_s) \\
&= \sum_{s=1}^{S} \mu_s \sum_{r=0}^{P_g}\sum_{n=0}^{P} a_r H_n\,
   \Psi_r(\omega_s)\Psi_n(\omega_s)\Psi_m(\omega_s) \\
&= \sum_{r,n} a_r H_n \underbrace{\sum_{s=1}^{S} \mu_s\,
   \Psi_r(\omega_s)\Psi_n(\omega_s)\Psi_m(\omega_s)}_{\displaystyle
   Q_S[\Psi_r\Psi_n\Psi_m]}.
\end{align*}
The integrand $\Psi_r\Psi_n\Psi_m$ is a polynomial of degree $r+n+m \le P_g + 2P$
under the Gaussian weight $\gamma$.  An $S$-point Gauss--Hermite rule integrates
polynomials of degree up to $2S-1$
exactly~\citep{xiu_wiener}. The condition $2S - 1 \ge P_g + 2P$, i.e.\
$S \ge \lceil(P_g + 2P + 1)/2\rceil$, therefore ensures
$Q_S[\Psi_r\Psi_n\Psi_m] = \mathbb{E}[\Psi_r\Psi_n\Psi_m] = c_{rnm}$.
By Lemma~\ref{lem:product_triple},
$\widehat{(\alpha H)}_m = (\alpha H)_m$.
\end{proof}

\begin{proposition}[Discrete projection error decomposition]
\label{prop:projection_error}
Let $\Pi_P$ be the exact $L^2(\mathbb{R},\gamma)$ projection onto the first $P+1$
Wiener chaos orders, and let $\Pi_{P,S}Q$ be the finite chaos expansion whose
coefficients are induced by a quadrature rule:
$(\Pi_{P,S}Q)_m := \sum_{s=1}^S \mu_s\, Q(\omega_s)\Psi_m(\omega_s)$.
Then for any $Q\in L^2(\mathbb{R},\gamma)$,
\[
\|Q-\Pi_{P,S}Q\|_{L^2}
\;\le\;
\underbrace{\|Q-\Pi_P Q\|_{L^2}}_{\text{truncation error}}
+
\underbrace{\|\Pi_P Q-\Pi_{P,S}Q\|_{L^2}}_{\text{quadrature error}}.
\]
Moreover, Gauss--Hermite rules yield exact coefficients whenever
$Q(\omega)\Psi_m(\omega)$ is a polynomial of sufficiently low degree under the
Gaussian weight.
\end{proposition}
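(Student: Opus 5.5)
The first part follows from the triangle inequality in the Hilbert space $L^2(\mathbb{R},\gamma)$, applied coefficient-wise to matrix-valued $Q$ if needed. Write $Q-\Pi_{P,S}Q=(Q-\Pi_PQ)+(\Pi_PQ-\Pi_{P,S}Q)$ and take norms.

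It is worth noting, though it is not needed for the inequality, that the two pieces are orthogonal. The first lies in $\bigoplus_{n>P}\mathcal{C}_n$ by the Wiener--It\^{o} decomposition~\eqref{eq:wiener_ito}. The second lies in $\bigoplus_{n\le P}\mathcal{C}_n$, because both $\Pi_PQ$ and $\Pi_{P,S}Q$ are finite Hermite expansions through order $P$. Pythagoras therefore even gives the sharper identity $\|Q-\Pi_{P,S}Q\|^2=\|Q-\Pi_PQ\|^2+\|\Pi_PQ-\Pi_{P,S}Q\|^2$, which implies the stated bound since $\sqrt{a^2+b^2}\le a+b$. Parseval~\eqref{eq:orthonormal_hermite} identifies the quadrature term as $\big(\sum_{m\le P}(Q_m-\widehat Q_m)^2\big)^{1/2}$, where $Q_m=\mathbb{E}[Q\Psi_m]$ and $\widehat Q_m=\sum_s\mu_sQ(\omega_s)\Psi_m(\omega_s)$.

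For the ``moreover'' claim, apply polynomial exactness of the $S$-point Gauss--Hermite rule (degree $\le 2S-1$), exactly as in the proof of Theorem~\ref{thm:nonintrusive_equals_intrusive}. If $Q\Psi_m$ is a polynomial of degree at most $2S-1$, then $\widehat Q_m=\mathbb{E}[Q\Psi_m]=Q_m$. If this holds for all $m\le P$, the quadrature error vanishes and only the truncation error remains. For example, if $Q$ is a polynomial of degree $D$, it suffices that $D+P\le 2S-1$; the product case $Q=\alpha H$ recovers Theorem~\ref{thm:nonintrusive_equals_intrusive}.

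The only delicate point is well-definedness. $\Pi_{P,S}Q$ uses pointwise values $Q(\omega_s)$, so $Q$ must be taken as a specific function, for instance continuous or the composition $\sigma\circ H$ actually used, rather than an $L^2$ equivalence class. I would state this convention explicitly. Beyond that, no step is a real obstacle.
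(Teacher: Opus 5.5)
Your proposal is correct and follows the same route as the paper's proof: the triangle inequality in $L^2(\mathbb{R},\gamma)$ gives the bound, and Gauss--Hermite polynomial exactness (degree at most $2S-1$) makes the quadrature term vanish. You also add two valid refinements that the paper does not state: the two error pieces lie in orthogonal chaos blocks, so Pythagoras gives an identity rather than just an inequality, and $\Pi_{P,S}Q$ is only defined once a pointwise representative of $Q$ is fixed.
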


\begin{proof}
The triangle inequality in $L^2(\mathbb{R},\gamma)$ gives
$\|Q - \Pi_{P,S}Q\| \le \|Q - \Pi_P Q\| + \|\Pi_P Q - \Pi_{P,S}Q\|$.
For the truncation term, $\Pi_P$ is the orthogonal projection onto
$\bigoplus_{n=0}^{P}\mathcal{C}_n$, the direct sum of the first $P+1$ Wiener
chaos subspaces~\citep{janson1997gaussian}, so
$\|Q - \Pi_P Q\|^2 = \sum_{n>P}\|Q_n\|^2$ where $Q_n = \mathbb{E}[Q\Psi_n]$
are the exact chaos coefficients. For the quadrature term, the $m$-th
coefficient of $\Pi_{P,S}Q$ is
$\sum_s \mu_s Q(\omega_s)\Psi_m(\omega_s)$, which approximates
$\int Q\Psi_m\,d\gamma$. When $Q\Psi_m$ is a polynomial of degree at most
$2S-1$, Gauss--Hermite quadrature integrates it
exactly~\citep{xiu_wiener}, making the quadrature error zero. In the DSS layer
context, the pre-activation $\widetilde H^{(\ell+1)}(\omega)$ before
applying~$\sigma$ is a polynomial of degree at most $P_g + P$
in~$\omega$; after applying $\sigma$ it is no longer polynomial, and the
quadrature error reflects the approximation quality of the $S$-point rule on
this non-polynomial integrand.
\end{proof}

\subsection{Proof of Proposition~\ref{prop:generalization} (Generalization)}
\label{sec:proof_generalization}

\propgeneralization*

\begin{proof}
When $P=0$, the chaos expansion~\eqref{eq:pce_state} contains a single term:
$H^{(\ell)}(\omega) = H_0^{(\ell)}\Psi_0(\omega)$. Since the zeroth
normalized Hermite polynomial satisfies $\Psi_0 \equiv 1$~\citep{xiu_wiener},
the representation is deterministic:
$H^{(\ell)}(\omega) = H_0^{(\ell)}$ for all $\omega \in \Omega$.

\textbf{Step 1 (Gate reduction).}
With $P=0$, the mixing weights~\eqref{eq:random_gates} are indexed by $n=0$
only. Setting the gate order $P_g = 0$ (no stochastic gating when the
representation itself is deterministic) yields scalar constants
$\alpha_0 = a_{0,0}\in\mathbb{R}$ and $\beta_0 = b_{0,0}\in\mathbb{R}$.

\textbf{Step 2 (Pre-activation collapse).}
The quadrature-node pre-activation~\eqref{eq:sample_preact} reduces to
\begin{align*}
\widetilde H^{(\ell+1)}(\omega_s)
&= \Psi_0(\omega_s)\bigl(\alpha_0\,G_{\mathrm{lp}}(L_G)H_0^{(\ell)}
  + \beta_0\,G_{\mathrm{hp}}(L_G)H_0^{(\ell)}\bigr)W_\ell \\
&= \bigl(\alpha_0\,G_{\mathrm{lp}}(L_G)
  + \beta_0\,G_{\mathrm{hp}}(L_G)\bigr)H_0^{(\ell)}W_\ell,
\end{align*}
which is independent of the quadrature node $\omega_s$.

\textbf{Step 3 (Projection collapse).}
The projection~\eqref{eq:dss_projection_update} for $m=0$ becomes
\[
H_0^{(\ell+1)}
= \sum_{s=1}^{S}\mu_s\,\sigma\!\bigl(\widetilde H^{(\ell+1)}(\omega_s)\bigr)
  \Psi_0(\omega_s)
= \sigma\!\bigl(\widetilde H^{(\ell+1)}\bigr)\sum_{s=1}^{S}\mu_s
= \sigma\!\bigl(\widetilde H^{(\ell+1)}\bigr),
\]
where the second equality uses the fact that $\widetilde H^{(\ell+1)}$ does not
depend on~$\omega_s$ (Step~2), and the third uses the normalization
$\sum_s\mu_s = \int_{\mathbb{R}} d\gamma = 1$ of the Gauss--Hermite
weights~\citep{xiu_wiener}.

\textbf{Step 4 (Recovery of spectral GNN).}
Combining Steps~2 and~3, the full DSS layer update at $P=0$ is
\[
H_0^{(\ell+1)}
= \sigma\!\bigl(g(L_G)\,H_0^{(\ell)}\,W_\ell\bigr),
\qquad
g(L_G) := \alpha_0\,G_{\mathrm{lp}}(L_G) + \beta_0\,G_{\mathrm{hp}}(L_G).
\]
Since $G_{\mathrm{lp}}$ and $G_{\mathrm{hp}}$ are Chebyshev polynomials
in~$L_G$~\eqref{eq:cheby_filters}, their linear combination $g(L_G)$ is a
polynomial in~$L_G$ of degree $\max(K_{\mathrm{lp}},K_{\mathrm{hp}})$.
This is exactly the form of a polynomial spectral graph convolution
layer~\citep{defferrard2016chebnet,shuman2013gsp}: the filter $g$ acts in the
graph-frequency domain as
$g(\lambda_j) = \alpha_0\,g_{\mathrm{lp}}(\lambda_j) +
\beta_0\,g_{\mathrm{hp}}(\lambda_j)$, where
$g_{\mathrm{lp}}(\lambda) = \sum_k c_k^{(\mathrm{lp})}T_k(\tilde\lambda)$
is the scalar frequency response of the low-pass branch and
$\tilde\lambda=2\lambda/\lambda_{\max}-1$.

\textbf{Special cases.}
Setting $\beta_0=0$ and $K_{\mathrm{lp}}=1$ with the
renormalization trick of \citet{kipf2017gcn} yields GCN as a
single-layer DSS-GNN with $P=0$.  Allowing arbitrary Chebyshev degree with both
filter branches active recovers ChebNet~\citep{defferrard2016chebnet} in its
scalar-coefficient form (filter weights $\Theta_k=c_kW_\ell$). With
generalized polynomial coefficient learning,
GPR-GNN~\citep{chien2021gprgnn} and BernNet~\citep{he2021bernnet} are similarly
obtained as $P=0$ specializations.
\end{proof}

\subsection{Proof of Theorem~\ref{thm:joint_spectral_operator} (Doubly-spectral operator)}
\label{sec:proof_joint_spectral}

\begin{lemma}[Exact-projection conditions for the joint transfer]
\label{lem:joint_transfer_conditions}
In Theorem~\ref{thm:joint_spectral_operator}, assume the pointwise
nonlinearity is omitted, the gates have chaos degree at most~$P_g$, the state
has chaos degree at most~$P$, and the Gauss--Hermite rule integrates all
triple products $\Psi_r\Psi_n\Psi_m$ with $r\le P_g$ and $n,m\le P$. A
sufficient condition is
$S\ge \lceil(P_g+2P+1)/2\rceil$. Under this condition, the stochastic coupling
matrices in the joint-transfer formula are
\[
M_{mn}^{(\ell,\mathrm{lp})}
:= \sum_{r=0}^{P_g} a_{n,r}\,c_{rnm},
\qquad
M_{mn}^{(\ell,\mathrm{hp})}
:= \sum_{r=0}^{P_g} b_{n,r}\,c_{rnm},
\qquad
c_{rnm}:=\mathbb{E}[\Psi_r\Psi_n\Psi_m].
\]
\end{lemma}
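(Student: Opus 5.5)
The plan is to write out the linearized layer explicitly and identify the coupling matrices by projecting onto each chaos order. With $\sigma$ omitted, the pre-activation \eqref{eq:sample_preact} evaluated at a generic $\omega$ is
\[
\widetilde H(\omega)=\sum_{n=0}^{P}\bigl(\alpha_n(\omega)\Psi_n(\omega)\,G_{\mathrm{lp}}(L_G)H_n+\beta_n(\omega)\Psi_n(\omega)\,G_{\mathrm{hp}}(L_G)H_n\bigr)W_\ell .
\]
The projection \eqref{eq:dss_projection_update} then becomes the quadrature functional $\sum_s\mu_s\widetilde H(\omega_s)\Psi_m(\omega_s)$. The graph filters and $W_\ell$ do not depend on $\omega$, so they factor out of the quadrature sum. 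The update therefore reduces to
\[
H_m^{(\ell+1)}=\sum_{n}\Bigl(\widehat{(\alpha_n\Psi_n)}_m\,G_{\mathrm{lp}}(L_G)+\widehat{(\beta_n\Psi_n)}_m\,G_{\mathrm{hp}}(L_G)\Bigr)H_nW_\ell ,
\]
where the hats denote discrete projections of the scalar products $\alpha_n\Psi_n$ and $\beta_n\Psi_n$.

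The second step evaluates these scalar coefficients. Apply Theorem~\ref{thm:nonintrusive_equals_intrusive} with gate $\alpha=\alpha_n$ and the state taken to be the single basis function $\Psi_n$. Its hypothesis is exactly the assumed exactness of $Q_S[\Psi_r\Psi_n\Psi_m]$ for $r\le P_g$ and $n,m\le P$. Hence $\widehat{(\alpha_n\Psi_n)}_m=(\alpha_n\Psi_n)_m$. By Lemma~\ref{lem:product_triple}, this equals $\sum_{r\le P_g}a_{n,r}c_{rnm}=M^{(\ell,\mathrm{lp})}_{mn}$, and the same argument gives $M^{(\ell,\mathrm{hp})}_{mn}$ for $\beta_n$.

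The sufficiency of $S\ge\lceil(P_g+2P+1)/2\rceil$ follows from Gauss--Hermite polynomial exactness. The triple product $\Psi_r\Psi_n\Psi_m$ has degree $r+n+m\le P_g+2P$, and the rule is exact up to degree $2S-1$. So $2S-1\ge P_g+2P$ suffices, which is the stated threshold.

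There is no real obstacle. The only point needing care is that quadrature exactness is applied coefficientwise: each of the triple sums $\sum_s\mu_s\Psi_r\Psi_n\Psi_m$ is exact individually, and the claim then follows by linearity in $a_{n,r}$. It also helps to note, for the joint-transfer formula, that $c_{rnm}$ vanishes outside the sparsity pattern given in Lemma~\ref{lem:product_triple}.
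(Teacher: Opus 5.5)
Your proposal is correct and takes the paper's route. The paper's proof of this lemma is the same degree count ($r+n+m\le P_g+2P\le 2S-1$), and its Step~2 of the proof of Theorem~\ref{thm:joint_spectral_operator} identifies $M^{(\ell,\mathrm{lp})}_{mn}$ and $M^{(\ell,\mathrm{hp})}_{mn}$ as you do: expand the gates in the linearized update, pull the $\omega$-independent filters and $W_\ell$ out of the quadrature sum, and replace each $\sum_s\mu_s\Psi_r(\omega_s)\Psi_n(\omega_s)\Psi_m(\omega_s)$ by $c_{rnm}$ via Theorem~\ref{thm:nonintrusive_equals_intrusive} (your application of that theorem to the single basis function $\Psi_n$ is only a repackaging of this).
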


\begin{proof}
The product $\Psi_r\Psi_n\Psi_m$ has degree at most $P_g+2P$ under the stated
index ranges. An $S$-point Gauss--Hermite rule is exact for polynomials of
degree at most $2S-1$, so $2S-1\ge P_g+2P$ suffices. The coupling matrix
definitions are obtained by collecting the exactly projected triple-product
coefficients in the linearized DSS update.
\end{proof}

\thmjointspectral*

\begin{proof}
Let $\{u_j\}_{j=1}^N$ be the orthonormal eigenvectors of the graph
Laplacian~$L_G$ with eigenvalues $\{\lambda_j\}_{j=1}^N$, so that
$L_G = \sum_{j=1}^N \lambda_j\,u_j u_j^\top$~\citep{shuman2013gsp}. Because
the Chebyshev filters~\eqref{eq:cheby_filters} are polynomials in~$L_G$, each
eigenvector is simultaneously an eigenvector of both filters:
\begin{equation}
G_{\mathrm{lp}}(L_G)\,u_j = g_{\mathrm{lp}}(\lambda_j)\,u_j,
\qquad
G_{\mathrm{hp}}(L_G)\,u_j = g_{\mathrm{hp}}(\lambda_j)\,u_j,
\label{eq:filter_eigenvectors}
\end{equation}
where $g_{\mathrm{lp}}(\lambda) = \sum_{k} c_k^{(\mathrm{lp})}
T_k(\tilde\lambda)$ and
$g_{\mathrm{hp}}(\lambda) = \sum_{k} c_k^{(\mathrm{hp})}T_k(\tilde\lambda)$
are the scalar frequency responses.

\textbf{Step 1 (Graph Fourier transform of chaos coefficients).}
Define the joint spectral coefficient at graph frequency~$\lambda_j$ and chaos
order~$n$ as
$\hat H_n^{(\ell)}(\lambda_j) := u_j^\top H_n^{(\ell)} \in
\mathbb{R}^{1\times d_\ell}$.
For proof convenience, write
$U_n^{\mathrm{lp}}:=G_{\mathrm{lp}}(L_G)H_n^{(\ell)}$ and
$U_n^{\mathrm{hp}}:=G_{\mathrm{hp}}(L_G)H_n^{(\ell)}$. These spectrally
filtered coefficients from~\eqref{eq:sample_preact} satisfy
\begin{equation}
u_j^\top U_n^{\mathrm{lp}}
= u_j^\top G_{\mathrm{lp}}(L_G)\,H_n^{(\ell)}
= g_{\mathrm{lp}}(\lambda_j)\;\hat H_n^{(\ell)}(\lambda_j),
\label{eq:fourier_filtered}
\end{equation}
and analogously
$u_j^\top U_n^{\mathrm{hp}} =
g_{\mathrm{hp}}(\lambda_j)\;\hat H_n^{(\ell)}(\lambda_j)$.

\textbf{Step 2 (Linearized update in the chaos basis).}
Omitting the pointwise nonlinearity~$\sigma$ (linearization), the DSS
update~\eqref{eq:sample_preact}--\eqref{eq:dss_projection_update} gives
\begin{align}
H_m^{(\ell+1)}
&= \sum_{s=1}^{S} \mu_s\,\widetilde H^{(\ell+1)}(\omega_s)\,\Psi_m(\omega_s)
\notag\\
&= \sum_{s=1}^{S} \mu_s \sum_{n=0}^{P} \Psi_n(\omega_s)
   \bigl(\alpha_n(\omega_s)\,U_n^{\mathrm{lp}}
   + \beta_n(\omega_s)\,U_n^{\mathrm{hp}}\bigr)
   W_\ell\;\Psi_m(\omega_s).
\label{eq:app_linearized}
\end{align}
Expanding the gates $\alpha_n(\omega_s) = \sum_{r=0}^{P_g}
a_{n,r}\Psi_r(\omega_s)$ from~\eqref{eq:random_gates} and exchanging the order
of summation:
\begin{align}
H_m^{(\ell+1)}
&= \sum_{n=0}^{P}\Bigl[
   \Bigl(\sum_{r=0}^{P_g} a_{n,r}
     \underbrace{\sum_{s=1}^{S}\mu_s\,
     \Psi_r(\omega_s)\Psi_n(\omega_s)\Psi_m(\omega_s)}_{=\;c_{rnm}}
   \Bigr) U_n^{\mathrm{lp}}
\notag\\
&\quad\;+
   \Bigl(\sum_{r=0}^{P_g} b_{n,r}\,c_{rnm}\Bigr) U_n^{\mathrm{hp}}
\Bigr]W_\ell,
\label{eq:app_coupling_step}
\end{align}
where the identification
$\sum_s \mu_s\,\Psi_r(\omega_s)\Psi_n(\omega_s)\Psi_m(\omega_s) = c_{rnm} :=
\mathbb{E}[\Psi_r\Psi_n\Psi_m]$ follows from
Theorem~\ref{thm:nonintrusive_equals_intrusive}: the Gauss--Hermite rule
integrates the triple product exactly when $S$ is sufficiently large.

Define the \emph{chaos coupling matrices}
$M^{(\ell,\mathrm{lp})},\,M^{(\ell,\mathrm{hp})} \in
\mathbb{R}^{(P+1)\times(P+1)}$ with entries
\begin{equation}
M_{mn}^{(\ell,\mathrm{lp})}
:= \sum_{r=0}^{P_g} a_{n,r}\,c_{rnm},
\qquad
M_{mn}^{(\ell,\mathrm{hp})}
:= \sum_{r=0}^{P_g} b_{n,r}\,c_{rnm}.
\label{eq:coupling_matrices}
\end{equation}
Then~\eqref{eq:app_coupling_step} simplifies to
\begin{equation}
H_m^{(\ell+1)}
= \sum_{n=0}^{P}
  \bigl(M_{mn}^{(\ell,\mathrm{lp})}\,U_n^{\mathrm{lp}}
  + M_{mn}^{(\ell,\mathrm{hp})}\,U_n^{\mathrm{hp}}\bigr)\,W_\ell.
\label{eq:app_chaos_coupling}
\end{equation}

\textbf{Step 3 (Joint spectral representation).}
Left-multiplying~\eqref{eq:app_chaos_coupling} by $u_j^\top$ and
applying~\eqref{eq:fourier_filtered}:
\[
\hat H_m^{(\ell+1)}(\lambda_j)
= \sum_{n=0}^{P}
  \bigl(M_{mn}^{(\ell,\mathrm{lp})}\,g_{\mathrm{lp}}(\lambda_j)
  + M_{mn}^{(\ell,\mathrm{hp})}\,g_{\mathrm{hp}}(\lambda_j)\bigr)
  \;\hat H_n^{(\ell)}(\lambda_j)\;W_\ell.
\]
Stack the chaos orders into a matrix
$\widehat{\mathbf H}^{(\ell)}(\lambda_j) :=
\bigl[\hat H_0^{(\ell)}(\lambda_j);\;\ldots\;;\;
\hat H_P^{(\ell)}(\lambda_j)\bigr]
\in\mathbb{R}^{(P+1)\times d_\ell}$. The equation above is the
$(m,\cdot)$~row of the matrix relation
\begin{equation}
\widehat{\mathbf H}^{(\ell+1)}(\lambda_j)
= \mathcal{H}^{(\ell)}(\lambda_j)\;
  \widehat{\mathbf H}^{(\ell)}(\lambda_j)\;W_\ell,
\label{eq:app_joint_transfer}
\end{equation}
where the $(P{+}1)\times(P{+}1)$ transfer matrix decomposes as
\begin{equation}
\mathcal{H}^{(\ell)}(\lambda)
= g_{\mathrm{lp}}(\lambda)\;M^{(\ell,\mathrm{lp})}
+ g_{\mathrm{hp}}(\lambda)\;M^{(\ell,\mathrm{hp})}.
\label{eq:app_transfer_decomp}
\end{equation}

\textbf{Interpretation.}
Equation~\eqref{eq:app_joint_transfer} shows that, in the linearized regime,
the DSS layer implements a \emph{learned linear operator on the joint spectral
domain} of the doubly-spectral
expansion~\eqref{eq:doubly_spectral_expansion}. The decomposition
\eqref{eq:app_transfer_decomp} separates two independent mechanisms:
\begin{enumerate}
\item \emph{Graph-frequency response}: the scalar functions
$g_{\mathrm{lp}}(\lambda)$ and $g_{\mathrm{hp}}(\lambda)$ shape how the layer
responds to each graph eigenvalue $\lambda_j$, exactly as in standard spectral
GNNs~\citep{defferrard2016chebnet,shuman2013gsp}.
\item \emph{Chaos-order coupling}: the matrices $M^{(\ell,\mathrm{lp})}$
and $M^{(\ell,\mathrm{hp})}$ control the coupling between chaos
orders~$0,\ldots,P$, enabling the layer to redistribute energy across the
chaos spectrum.
\end{enumerate}
Together, these two axes parameterize a bilinear family of operators on the
tensor-product Hilbert
space~$\ell^2(V)\otimes L^2(\mathbb{R},\gamma)$,
thereby justifying the \emph{doubly-spectral} designation.
\end{proof}

\subsection{Proof of Theorem~\ref{thm:universal_approx} (Representational capacity and truncation)}
\label{sec:proof_universal}

\thmuniversalapprox*

\begin{proof}[Proof of the approximation claim]
The proof proceeds in three stages: (i)~truncation in the chaos-order
axis, (ii)~a rank argument for the quadrature projection, and
(iii)~a one-layer constructive realization using the final linear readout.

\textbf{Step 1 (Chaos-order truncation).}
By the Wiener--It\^{o} chaos decomposition~\citep{janson1997gaussian}, any
$Y\in L^2(\mathbb{R},\gamma;\mathbb{R}^{N\times d})$ admits the expansion
$Y(\omega)=\sum_{n=0}^{\infty}Y_n\Psi_n(\omega)$ with deterministic
coefficients $Y_n=\mathbb{E}[Y\Psi_n]\in\mathbb{R}^{N\times d}$, and by
Parseval's identity
$\|Y\|_{L^2}^2 = \sum_{n=0}^{\infty}\|Y_n\|_F^2 < \infty$.
Fix a truncation order $P$ for the construction. For the density claim, fix
$\varepsilon>0$ and take this $P$ large enough that
$\sum_{n>P}\|Y_n\|_F^2 < \varepsilon^2$; for the explicit rate, keep $P$
arbitrary and apply the tail estimate proved below.

\textbf{Step 2 (Quadrature projection rank).}
Take $S=P+1$ and let $\{(\omega_s,\mu_s)\}_{s=1}^S$ be the associated
$S$-point Gauss--Hermite rule. Define the projection matrix
$\Phi\in\mathbb{R}^{(P+1)\times S}$ with entries
$\Phi_{ns}=\mu_s\,\Psi_n(\omega_s)$.
The Hermite polynomials $\Psi_0,\ldots,\Psi_P$ are linearly independent on
the $S=P+1$ distinct quadrature nodes, and $\mu_s>0$ for all~$s$, so $\Phi$
is invertible. Consequently, for any target coefficients
$\{Y_n\}_{n=0}^P$, there exist matrices $\{V_s\}_{s=1}^S$ in
$\mathbb{R}^{N\times d}$ satisfying
\begin{equation}
\sum_{s=1}^{S}\mu_s\,V_s\,\Psi_n(\omega_s) = Y_n,
\qquad n=0,\ldots,P.
\label{eq:quadrature_inversion}
\end{equation}

\textbf{Step 3 (Constructive one-layer realization with ReLU).}
We construct a one-layer DSS-GNN ($L=1$) whose post-readout output matches the
truncated coefficients exactly. Let $q:=2d$ and define the nonnegative hidden
targets
\[
R_s := [V_s^+,\,V_s^-]\in\mathbb{R}_+^{N\times q},
\qquad
V_s = V_s^+ - V_s^-,
\]
where $V_s^+,V_s^-$ are the entrywise positive and negative parts of $V_s$.
Choose the final readout
\[
W_{\mathrm{out}}
=
\begin{bmatrix}
I_d \\[2pt] -I_d
\end{bmatrix}
\in\mathbb{R}^{q\times d},
\]
so that $R_sW_{\mathrm{out}}=V_s$ for every $s$.

Next define the generalized Vandermonde matrix
\[
\Theta\in\mathbb{R}^{S\times S},
\qquad
\Theta_{st} := \Psi_{t-1}(\omega_s),
\qquad s,t=1,\ldots,S.
\]
Because $\Psi_0,\ldots,\Psi_{S-1}$ are linearly independent and the quadrature
nodes $\omega_1,\ldots,\omega_S$ are distinct, $\Theta$ is invertible. Hence
for the prescribed matrices $R_s$ there exist unique matrices
$A_1,\ldots,A_S\in\mathbb{R}^{N\times q}$ such that
\begin{equation}
\sum_{t=1}^{S}\Psi_{t-1}(\omega_s)\,A_t = R_s,
\qquad s=1,\ldots,S.
\label{eq:block_linear_system}
\end{equation}

Next, we realize these $A_t$ with the DSS layer. Set the lifted input width to
$d_0 = S q$ and partition the lifted features into $S$ blocks
$b_1,\ldots,b_S$, each of width $q$. Choose the graph filters and gates as
\[
G_{\mathrm{lp}}(L_G)=I
\quad (K_{\mathrm{lp}}=0,\; c_0^{(\mathrm{lp})}=1),\qquad
G_{\mathrm{hp}}(L_G)=0,
\]
\[
\alpha_n(\omega)\equiv 1 \;\; (n=0,\ldots,P),\qquad
\beta_n(\omega)\equiv 0,
\]
so $P_g=0$ suffices. Since $\operatorname{rank}(X)=N$, for each
$t=1,\ldots,S$ there exists a matrix $B_t\in\mathbb{R}^{d_{\rm in}\times q}$ with
$XB_t = A_t$. Set $W_{\mathrm{in}}^{(t-1)}$ to place $B_t$ in block $b_t$ and
zeros elsewhere. Then $H_{t-1}^{(0)} = XW_{\mathrm{in}}^{(t-1)}$ is zero
outside block $b_t$ and equals $A_t$ on block $b_t$.

Finally, choose $W_0\in\mathbb{R}^{d_0\times q}$ so that each row-block
corresponding to $b_t$ equals the identity $I_q$. At quadrature node $\omega_s$,
the quadrature-node pre-activation becomes
\[
\widetilde H^{(1)}(\omega_s)
= \sum_{t=1}^{S}\Psi_{t-1}(\omega_s)\,A_t
= R_s
\]
by~\eqref{eq:block_linear_system}. Since each $R_s$ is entrywise nonnegative
and $\sigma=\mathrm{ReLU}$, we obtain
\[
\sigma\!\bigl(\widetilde H^{(1)}(\omega_s)\bigr)=R_s,
\qquad
\sigma\!\bigl(\widetilde H^{(1)}(\omega_s)\bigr)W_{\mathrm{out}} = V_s.
\]
Projecting back to chaos coefficients gives, for every $m=0,\ldots,P$,
\[
H_m^{(1)}W_{\mathrm{out}}
= \sum_{s=1}^{S}\mu_s\,
\sigma\!\bigl(\widetilde H^{(1)}(\omega_s)\bigr)W_{\mathrm{out}}\Psi_m(\omega_s)
= \sum_{s=1}^{S}\mu_s\,V_s\,\Psi_m(\omega_s)
= Y_m,
\]
where the last equality is exactly~\eqref{eq:quadrature_inversion}.

\textbf{Step 4 (Combining with the tail).}
The retained coefficients are matched exactly, so the order-$P$ output
$\widetilde Y_P$ lies in
$\bigoplus_{n=0}^{P}\mathcal{C}_n\otimes\mathbb{R}^{N\times d}$ and has no
chaos coefficients for $n>P$. By orthogonality of the chaos subspaces and
Parseval's identity, it satisfies
\[
\|Y-\widetilde Y_P\|_{L^2}^2
= \sum_{n>P}\|Y_n\|_F^2.
\]
Thus choosing $P$ so that the right-hand side is below $\varepsilon^2$ proves
density in $L^2$.
\end{proof}

\begin{remark}[Practical vs.\ theoretical capacity]
\label{rem:practical_capacity}
The constructive part of Theorem~\ref{thm:universal_approx} is an existence
result for a restricted subfamily of DSS-GNNs: the proof requires
$\operatorname{rank}(X)=N$ (i.e., $d_{\rm in}\ge N$), a one-layer construction with
lifted width $d_0 = 2dS = 2d(P+1)$ and output width $d_1=2d$,
quadrature size $S=P+1$, the ReLU
activation, constant gates ($P_g=0$), the identity low-pass filter, and a
final linear readout that combines positive and negative hidden channels. The
construction uses the identity low-pass filter and constant gates, so it
bounds the capacity of the chaos axis alone;
Theorem~\ref{thm:joint_spectral_operator} describes the graph-frequency axis.
The practical model differs from these conditions ($d_{\rm in}$ is typically
smaller than $N$, $P$ is small, and the filters are learned), so the theorem
establishes representational capacity, while calibration, OOD detection, and
robustness are established empirically.
The explicit exponential bound adds that, for targets satisfying the
entire-extension condition, a low truncation order should capture most of the
gain, which is empirically consistent with the plateau at $P=1\text{--}2$ observed in
Figure~\ref{fig:p_ablation}(b).
\end{remark}

\begin{remark}[Rank-deficient features]
\label{rem:rank_deficient}
When $\operatorname{rank}(X)=r<N$, the same construction still matches every
truncated target whose chaos-coefficient columns lie in
$\operatorname{range}(X)$, under one positivity condition required by the
ReLU step: the nonnegative quadrature-node evaluations $R_s=[V_s^+,V_s^-]$
of Step~3 must themselves have columns in $\operatorname{range}(X)$, since
$A_t=XB_t$ forces every $A_t$, and hence every $R_s$, into that range. The
condition is satisfiable, for example, by appending a constant feature
column to $X$: with $V_s^+:=V_s+c\,\mathbf{1}\mathbf{1}^\top$ and
$V_s^-:=c\,\mathbf{1}\mathbf{1}^\top$ for $c\ge\max_s\|V_s\|_{\max}$, both
parts are nonnegative and have columns in
$\operatorname{range}([X,\mathbf 1])\supseteq\operatorname{range}(X)$, so
every truncated target with columns in $\operatorname{range}([X,\mathbf 1])$
is matched. We state exactly this matching property; no claim is made for
targets whose coefficients are outside that range.
\end{remark}

\subsection{Proof of the Truncation Bound in Theorem~\ref{thm:universal_approx}}
\label{sec:proof_decay}

The analytic Gaussian-growth condition referenced in
Theorem~\ref{thm:universal_approx} is the following: each centered coordinate
$\bar Y_{ik}(\omega):=Y_{ik}(\omega)-\mathbb{E}[Y_{ik}]$ extends to an entire
function satisfying
$|\bar Y_{ik}(z)|\le A_{ik}\exp(a|z|^2)$ for some
$a\in(0,\tfrac14)$ uniform in $(i,k)$, with
$\sum_{i,k}A_{ik}^2<\infty$. Under this condition, set
$\alpha:=2a/(1-2a)<1$; the proof yields the theorem for every
$\beta\in(\sqrt{\alpha},1)$, with $C_{\rm tr}$ depending on
$a$, $\beta$, and $\sum_{i,k}A_{ik}^2$.

\begin{proof}[Proof of the truncation bound]
We prove the bound coordinate-wise and sum.  Fix a centered coordinate
function $g:=\bar Y_{ik}$ with chaos coefficients
$c_n:=\mathbb{E}[g(\omega)\Psi_n(\omega)]$, so that
$g(\omega)=\sum_{n=1}^{\infty}c_n\Psi_n(\omega)$.

\textbf{Step 1 (Hermite generating function).}
The generating function of the normalized Hermite
polynomials~\citep{szego1975orthogonal} gives
$\sum_{n=0}^{\infty}\frac{t^n}{\sqrt{n!}}\Psi_n(\omega)
=\exp(t\omega-t^2/2)$.
Multiplying by $g(\omega)$ and taking the Gaussian expectation yields
\begin{equation}
\phi(s)
\;:=\;
\sum_{n=1}^{\infty}\frac{c_n}{\sqrt{n!}}\,s^n
\;=\;
\mathbb{E}\!\big[g(\omega+s)\big]
\;=\;
\int_{\mathbb{R}} g(x+s)\,d\gamma(x),
\label{eq:hermite_generating}
\end{equation}
which is verified by expanding $g$ in the Hermite basis and using
$\mathbb{E}[\Psi_n(\omega)\exp(s\omega-s^2/2)]=s^n/\sqrt{n!}$.
The series defining $\phi$ converges for every complex $s$ because
$|c_n|\le\|g\|_{L^2(\gamma)}$, so $\phi$ is entire; the right-hand side is
entire by the growth bound below and dominated convergence, and the two
sides agree for real $s$, hence for all $s\in\mathbb{C}$.

\textbf{Step 2 (Growth bound on $\phi$).}
The hypothesis $|g(z)|\le A\exp(a|z|^2)$ with $a<\tfrac14$ gives
\begin{align*}
|\phi(s)|
&\le \int_{\mathbb{R}} A\,\exp\!\big(a|x+s|^2\big)\,d\gamma(x)
= A\,e^{a|s|^2}\int_{\mathbb{R}}
  \exp\!\big(ax^2+2a\,x\operatorname{Re}(s)\big)\,d\gamma(x).
\end{align*}
Completing the square in the exponent
($ax^2+2a\operatorname{Re}(s)\,x-x^2/2
=-({\tfrac12}-a)x^2+2a\operatorname{Re}(s)\,x$, which is a Gaussian
integral convergent for $a<\tfrac12$) yields
\[
|\phi(s)|
\;\le\;
\frac{A}{\sqrt{1-2a}}\;
\exp\!\Big(\frac{a}{1-2a}\,|s|^2\Big).
\]

\textbf{Step 3 (Cauchy estimate and Stirling bound).}
Since $\phi$ is entire, Cauchy's inequality on the circle $|s|=R$ gives
\[
\frac{|c_n|}{\sqrt{n!}}
\;\le\;
R^{-n}\sup_{|s|=R}|\phi(s)|
\;\le\;
\frac{A}{\sqrt{1-2a}}\;R^{-n}\,e^{a'R^2},
\qquad a':=\frac{a}{1-2a}.
\]
Setting $R=\sqrt{n/(2a')}$ (the minimizer of $-n\ln R+a'R^2$):
\[
|c_n|
\;\le\;
\frac{A}{\sqrt{1-2a}}\;\sqrt{n!}\;\Big(\frac{2a'}{n}\Big)^{n/2}e^{n/2}.
\]
By Stirling's inequality $\sqrt{n!}\le e^{1/2}(2\pi n)^{1/4}(n/e)^{n/2}$,
the product $(n/e)^{n/2}(2a'/n)^{n/2}e^{n/2}$ simplifies to $(2a')^{n/2}$,
giving
\begin{equation}
c_n^2
\;\le\;
\frac{A^2\,e\sqrt{2\pi n}}{1-2a}\;(2a')^{n}
\;=\;
\frac{A^2\,e\sqrt{2\pi n}}{1-2a}\;\alpha^{n},
\qquad \alpha:=2a'=\frac{2a}{1-2a}.
\label{eq:coeff_bound}
\end{equation}
Since $a<\tfrac14$, we have $\alpha<1$.

\textbf{Step 4 (Tail bound).}
For any $\eta\in(\alpha,1)$ the $\sqrt{n}$ factor in~\eqref{eq:coeff_bound} is
absorbed into the geometric ratio:
$\sqrt{n}\,\alpha^n\le C_\eta\,\eta^n$, where
$C_\eta:=\sup_{n\ge 1}\sqrt{n}\,(\alpha/\eta)^n<\infty$ since
$\alpha/\eta<1$. Substituting this into~\eqref{eq:coeff_bound} and summing the
resulting geometric series over $n>P$:
\[
\sum_{n=P+1}^{\infty}c_n^2
\;\le\;
\frac{A^2\,e\sqrt{2\pi}\,C_\eta}{1-2a}
\;\frac{\eta^{P+1}}{1-\eta}.
\]
Summing over all coordinates $(i,k)$ and applying Parseval on the graph
eigenbasis ($\sum_{j}\|\hat Y_{n,j}\|^2=\|Y_n\|_{\ell^2(V)}^2
=\sum_{i,k}c_{n,ik}^2$)~\citep{shuman2013gsp}:
\[
\sum_{n=P+1}^{\infty}\sum_{j=1}^{N}\|\hat Y_{n,j}\|^2
\;\le\;
C\,\eta^{P}
\]
for a constant $C$ depending on $a$, $\eta$, and $\sum_{i,k}A_{ik}^2$.
Now fix any $\beta\in(\sqrt{\alpha},1)$ and set $\eta=\beta^2$. Taking square
roots gives
\[
\Big(\sum_{n>P}\sum_{j=1}^{N}\|\hat Y_{n,j}\|^2\Big)^{1/2}
\le C_{\rm tr}\beta^{P},
\qquad C_{\rm tr}:=\sqrt{C}.
\]
Since the construction above matches all retained coefficients exactly,
$\|Y-\widetilde Y_P\|_{L^2}$ equals this omitted tail, which gives the bound
stated in Theorem~\ref{thm:universal_approx}.
Since $\beta>\sqrt{\alpha}$ can be chosen arbitrarily close to $\sqrt{\alpha}$, the
effective decay rate is $e^{-cP}$ with
$c=\tfrac12\ln(1/\alpha)=\tfrac12\ln\!\big(\tfrac{1-2a}{2a}\big)>0$.
\end{proof}

\subsection{Proof of Proposition~\ref{prop:curvature} (Stochastic and mean-logit readouts)}
\label{sec:proof_readouts}

\begin{lemma}[Readout regularity and quadrature conditions]
\label{lem:readout_conditions}
Let $z_i^{(s)}=\sum_{n=0}^{P}Z_{i,n}\Psi_n(\omega_s)$, where $Z_{i,0}$ is the
mean logit. Assume $S\ge P+1$, so the quadrature rule integrates
$\Psi_n\Psi_m$ exactly for $n,m\le P$, and assume
$\ell(\cdot,y_i)$ is $C^3$ on a convex neighborhood of $Z_{i,0}$ containing all
quadrature logits. Then, with
$\delta_i^{(s)}:=z_i^{(s)}-Z_{i,0}$,
\[
\sum_s\mu_s\,\delta_i^{(s)}=0,
\qquad
\sum_s\mu_s\,\delta_i^{(s)}(\delta_i^{(s)})^\top=\Sigma_{z_i},
\]
and Taylor expansion around $Z_{i,0}$ has a remainder of order
$O(\max_s\|\delta_i^{(s)}\|^3)$. For cross-entropy,
$\nabla_z^2\ell(z,y)=\operatorname{diag}(p)-pp^\top$ with
$p=\operatorname{softmax}(z)$; for finite logits its nullspace is
$\mathrm{span}\{\mathbf 1\}$.
\end{lemma}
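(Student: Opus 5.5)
The statement to prove is Lemma~\ref{lem:readout_conditions}. It has four parts: the first quadrature moment of $\delta_i^{(s)}$ vanishes, the second quadrature moment equals $\Sigma_{z_i}$, the Taylor remainder is cubic, and the cross-entropy Hessian has the stated form and nullspace. I would prove them in that order.

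\emph{First and second moments.} By definition,
\[
\delta_i^{(s)}=z_i^{(s)}-Z_{i,0}=\sum_{n=1}^{P}Z_{i,n}\Psi_n(\omega_s),
\]
because $\Psi_0\equiv1$. For the first moment I would write
\[
\sum_s\mu_s\delta_i^{(s)}=\sum_{n=1}^P Z_{i,n}\sum_s\mu_s\Psi_n(\omega_s)\Psi_0(\omega_s).
\]
The inner sum is the quadrature of the polynomial $\Psi_n\Psi_0$, of degree $n\le P\le 2S-1$. By the polynomial exactness of Gauss--Hermite rules (Appendix~\ref{sec:chaos_background}), it equals $\mathbb{E}[\Psi_n]=\delta_{n0}=0$ for $n\ge1$. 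For the second moment I would expand the outer product as
\[
\sum_{n,m=1}^P Z_{i,n}Z_{i,m}^\top\sum_s\mu_s\Psi_n(\omega_s)\Psi_m(\omega_s).
\]
The integrand has degree $n+m\le 2P\le 2S-2$ once $S\ge P+1$, so quadrature is exact and gives $\delta_{nm}$. The double sum then collapses to $\sum_{n\ge1}Z_{i,n}Z_{i,n}^\top=\Sigma_{z_i}$, which is \eqref{eq:logit_covariance}.

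\emph{Taylor remainder.} Let $K$ be the convex hull of the quadrature logits, which lies inside the assumed convex neighborhood of $Z_{i,0}$. Since $\ell(\cdot,y_i)$ is $C^3$, its third derivative is bounded on the compact set $K$ by some constant $M_3$. For each $s$, Taylor's theorem with Lagrange or integral remainder along the segment from $Z_{i,0}$ to $z_i^{(s)}$ gives
\[
\ell(z_i^{(s)})=\ell(Z_{i,0})+\nabla\ell^\top\delta_i^{(s)}+\tfrac12\delta_i^{(s)\top}\mathcal{H}_i\delta_i^{(s)}+r_s,
\qquad |r_s|\le \tfrac{M_3}{6}\|\delta_i^{(s)}\|^3 .
\]
Weighting by $\mu_s$, which are positive and sum to one, bounds the averaged remainder by $\tfrac{M_3}{6}\max_s\|\delta_i^{(s)}\|^3$. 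Combined with the two moment identities, this also yields part~1 of Proposition~\ref{prop:curvature}, using $\sum_s\mu_s\,\delta^\top\mathcal{H}\delta=\operatorname{tr}(\mathcal{H}\Sigma)$.

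\emph{Cross-entropy Hessian.} Write $\ell(z,y)=-z_y+\log\sum_c e^{z_c}$. Its gradient is $p-e_y$, and differentiating $p$ gives $\partial p_c/\partial z_k=p_c(\delta_{ck}-p_k)$, so the Hessian is $\operatorname{diag}(p)-pp^\top$. For the nullspace, note that $v^\top(\operatorname{diag}(p)-pp^\top)v=\sum_c p_c v_c^2-(\sum_c p_cv_c)^2$ is the variance of $v$ under the distribution $p$. For finite logits every $p_c>0$, so this variance is zero iff $v$ is constant, i.e.\ $v\in\operatorname{span}\{\mathbf1\}$. Since the matrix is PSD, a vanishing quadratic form is equivalent to $v$ being in the kernel, so the nullspace is exactly $\operatorname{span}\{\mathbf1\}$.

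The only real subtlety is the remainder constant: the hypothesis on the convex neighborhood is exactly what makes $M_3$ finite and uniform across $s$. Everything else is bookkeeping with the degree condition for exactness.
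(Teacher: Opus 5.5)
Your proposal is correct and follows the paper's proof essentially line by line: exact Gauss--Hermite integration of $\Psi_n$ and $\Psi_n\Psi_m$ gives the two moment identities, the $C^3$ hypothesis gives the cubic remainder, and reading $\operatorname{diag}(p)-pp^\top$ as a categorical covariance (a variance) gives the nullspace; you simply supply more detail than the paper does. The one step you leave implicit is that the segments from $Z_{i,0}$ lie in your convex hull $K$. This follows from your first-moment identity, since $Z_{i,0}=\sum_s\mu_s z_i^{(s)}$ with $\mu_s>0$ summing to one.
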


\begin{proof}
The first identity follows from
$\sum_s\mu_s\Psi_n(\omega_s)=\mathbb{E}[\Psi_n]=0$ for $n\ge1$. The second
follows from exact quadrature orthonormality,
$\sum_s\mu_s\Psi_n(\omega_s)\Psi_m(\omega_s)=\delta_{nm}$, giving
$\sum_s\mu_s\delta_i^{(s)}(\delta_i^{(s)})^\top
=\sum_{n=1}^{P}Z_{i,n}Z_{i,n}^\top=\Sigma_{z_i}$. The $C^3$ condition gives
the stated Taylor remainder. The cross-entropy Hessian is the covariance
matrix of a categorical distribution with probabilities~$p$, so its
finite-logit nullspace consists exactly of common shifts.
\end{proof}

\propcurvature*

\begin{proof}
The quadrature logits at node~$\omega_s$ are
$z_i^{(s)} = \sum_{n=0}^{P} Z_{i,n}\Psi_n(\omega_s)$, and the
quadrature-averaged loss is
$\bar\ell_i = \sum_{s=1}^{S}\mu_s\,\ell(z_i^{(s)},y_i)$.

\textbf{Step 1 (Taylor expansion at each quadrature node).}
Write $z_i^{(s)} = Z_{i,0} + \delta_i^{(s)}$ where
$\delta_i^{(s)} = \sum_{n=1}^{P} Z_{i,n}\Psi_n(\omega_s) \in \mathbb{R}^C$
is the stochastic perturbation. Expanding $\ell$ to second order around
$Z_{i,0}$, and using that $\ell(\cdot,y_i)$ is $C^3$ in a neighborhood of
$Z_{i,0}$:
\[
\ell(z_i^{(s)},y_i)
= \ell(Z_{i,0},y_i)
+ \nabla_z\ell(Z_{i,0},y_i)^\top \delta_i^{(s)}
+ \tfrac{1}{2}(\delta_i^{(s)})^\top
  \nabla_z^2\ell(Z_{i,0},y_i)\,\delta_i^{(s)}
+ O(\|\delta_i^{(s)}\|^3).
\]

\textbf{Step 2 (Quadrature averaging of the linear term).}
The first-order term vanishes under quadrature averaging:
\[
\sum_{s=1}^{S}\mu_s\,\nabla_z\ell^\top\delta_i^{(s)}
= \nabla_z\ell^\top
  \sum_{n=1}^{P}Z_{i,n}
  \underbrace{\sum_{s=1}^{S}\mu_s\,\Psi_n(\omega_s)}_{=\;\mathbb{E}[\Psi_n]\;=\;0}
= 0,
\]
since $\mathbb{E}[\Psi_n]=0$ for $n\ge 1$ by the orthonormality of the Hermite
basis~\eqref{eq:orthonormal_hermite}, and the Gauss--Hermite rule integrates
$\Psi_n$ exactly for $n\le 2S-1$.

\textbf{Step 3 (Quadrature averaging of the quadratic term).}
For the second-order term, abbreviate $\mathcal{H}_i := \nabla_z^2\ell(Z_{i,0},y_i)$ and
compute:
\begin{align*}
\sum_{s=1}^{S}\mu_s\,(\delta_i^{(s)})^\top \mathcal{H}_i\,\delta_i^{(s)}
&= \sum_{s=1}^{S}\mu_s
   \Bigl(\sum_{n=1}^{P}Z_{i,n}\Psi_n(\omega_s)\Bigr)^{\!\top}
   \mathcal{H}_i
   \Bigl(\sum_{m=1}^{P}Z_{i,m}\Psi_m(\omega_s)\Bigr) \\
&= \sum_{n,m=1}^{P} Z_{i,n}^\top \mathcal{H}_i\,Z_{i,m}
   \underbrace{\sum_{s=1}^{S}\mu_s\,\Psi_n(\omega_s)\Psi_m(\omega_s)}_{=\;\delta_{nm}} \\
&= \sum_{n=1}^{P} Z_{i,n}^\top \mathcal{H}_i\,Z_{i,n}
\;=\; \operatorname{tr}\!\Big(\mathcal{H}_i\sum_{n=1}^{P}Z_{i,n}Z_{i,n}^\top\Big)
\;=\; \operatorname{tr}(\mathcal{H}_i\,\Sigma_{z_i}),
\end{align*}
where the orthonormality $\sum_s\mu_s\Psi_n(\omega_s)\Psi_m(\omega_s)=\delta_{nm}$
holds by exactness of the Gauss--Hermite rule for polynomials of degree
$n+m\le 2P\le 2S-1$, and $\Sigma_{z_i}=\sum_{n=1}^{P}Z_{i,n}Z_{i,n}^\top$ is the
logit covariance from~\eqref{eq:logit_covariance}.

\textbf{Step 4 (Combining).}
Substituting Steps~2 and~3 into the averaged Taylor expansion:
\begin{equation}
\bar\ell_i
= \sum_{s}\mu_s\,\ell(z_i^{(s)},y_i)
= \ell(Z_{i,0},y_i) + 0
  + \tfrac{1}{2}\operatorname{tr}\!\big(\nabla_z^2\ell(Z_{i,0},y_i)\,
    \Sigma_{z_i}\big) + O\!\Big(\max_s\|\delta_i^{(s)}\|^3\Big).
\label{eq:curvature_prop}
\end{equation}

\textbf{Step 5 (Positive semi-definiteness and nullspace characterization for cross-entropy).}
For the cross-entropy loss $\ell(z,y) = -\log p_{y}$ where
$p = \operatorname{softmax}(z)$, the Hessian is
$\nabla_z^2\ell = \operatorname{diag}(p) - pp^\top$. This is the covariance
matrix of a categorical distribution with probabilities~$p$, hence positive
semi-definite: for any $v\in\mathbb{R}^C$,
$v^\top(\operatorname{diag}(p)-pp^\top)v = \mathbb{E}_{\hat y\sim p}[v_{\hat y}^2]
- (\mathbb{E}_{\hat y\sim p}[v_{\hat y}])^2 = \operatorname{Var}_{\hat y\sim p}(v_{\hat y})
\ge 0$. Since $\mathcal{H}_i\succeq 0$ and $\Sigma_{z_i}\succeq 0$,
\[
\operatorname{tr}(\mathcal{H}_i\Sigma_{z_i})
= \operatorname{tr}\!\big(\Sigma_{z_i}^{1/2}\mathcal{H}_i\Sigma_{z_i}^{1/2}\big)\ge 0.
\]
Moreover, $\operatorname{tr}(\mathcal{H}_i\Sigma_{z_i})=0$ if and only if
$\Sigma_{z_i}^{1/2}\mathcal{H}_i\Sigma_{z_i}^{1/2}=0$, equivalently
$\operatorname{range}(\Sigma_{z_i})\subseteq \operatorname{null}(\mathcal{H}_i)$.
For finite logits, softmax probabilities satisfy $p_c>0$ for every class, and
the Hessian nullspace is
$\operatorname{null}(\mathcal{H}_i)=\mathrm{span}\{\mathbf{1}\}$; thus purely common logit
offsets do not contribute to the correction. The special case
$\Sigma_{z_i}=0$ recovers the non-stochastic model, while in the fully
confident limit $p\to e_y$ we have $\mathcal{H}_i\to 0$.

\textbf{Step 6 (Energy shift identity).}
For any scalar $a$,
\[
\operatorname{softmax}(z+a\mathbf 1)_c
=
\frac{e^{z_c+a}}{\sum_{r=1}^C e^{z_r+a}}
=
\frac{e^{z_c}}{\sum_{r=1}^C e^{z_r}}
=
\operatorname{softmax}(z)_c.
\]
Thus any uncertainty summary that is a function only of the softmax
probabilities, such as predictive entropy, is invariant under common-logit
shifts. By contrast,
\begin{equation}
s_{\mathrm{energy}}(z+a\mathbf 1)
=
-\log\sum_{c=1}^C e^{z_c+a}
=
-\log\!\Big(e^a\sum_{c=1}^C e^{z_c}\Big)
=
s_{\mathrm{energy}}(z)-a.
\label{eq:energy_common_shift}
\end{equation}

For completeness, the analogous second-order expansion for the
quadrature-averaged energy follows by the same calculation. Write
$z_i^{(s)}=Z_{i,0}+\delta_i^{(s)}$ and
$A(z)=\log\sum_c e^{z_c}$. The gradient and Hessian of $A$ are
\[
\nabla A(z)=p,
\qquad
\nabla^2 A(z)=\operatorname{diag}(p)-pp^\top=:H_A(z),
\]
where $p=\operatorname{softmax}(z)$. Since
$s_{\mathrm{energy}}=-A$, Taylor expansion around $Z_{i,0}$ gives
\[
s_{\mathrm{energy}}(z_i^{(s)})
=
s_{\mathrm{energy}}(Z_{i,0})
-p_i^\top\delta_i^{(s)}
-\tfrac12(\delta_i^{(s)})^\top H_A(Z_{i,0})\delta_i^{(s)}
+O(\|\delta_i^{(s)}\|^3).
\]
The quadrature average of the linear term vanishes as in Step~2, and the
quadratic term averages as in Step~3:
\[
\sum_s\mu_s(\delta_i^{(s)})^\top H_A(Z_{i,0})\delta_i^{(s)}
=
\operatorname{tr}\!\big(H_A(Z_{i,0})\Sigma_{z_i}\big),
\]
using exact quadrature orthonormality for pairwise products $\Psi_n\Psi_m$
with $n,m\le P$. Therefore
\begin{equation}
\sum_s\mu_s\,s_{\mathrm{energy}}(z_i^{(s)})
=
s_{\mathrm{energy}}(Z_{i,0})
-
\tfrac12\operatorname{tr}\!\big(H_A(Z_{i,0})\,\Sigma_{z_i}\big)
+ O\!\Big(\max_s\|\delta_i^{(s)}\|^3\Big).
\label{eq:energy_second_order}
\end{equation}
The Hessian $H_A(z)$ is the same categorical covariance matrix as the
cross-entropy Hessian, so the stochastic energy correction also vanishes for
common-logit stochastic variation. The non-stochastic common shift of the mean
logit, however, is read out linearly by~\eqref{eq:energy_common_shift}.
\end{proof}

\begin{corollary}[Readout gap]
\label{cor:readout_gap}
Assume $S\ge P+1$ as in Lemma~\ref{lem:readout_conditions}. Let
$f(z)=\mathrm{softmax}(z)$ and
$B:=\sup_{z\in\mathbb{R}^C}\max_c\|\nabla_z^2 f_c(z)\|_{\mathrm{op}}$, which
is finite because every entry of $\nabla_z^2 f_c$ is a polynomial in the
probabilities $f(z)\in[0,1]^C$; since
$\nabla_z^2 f_c=f_c\big[(e_c-f)(e_c-f)^\top-(\operatorname{diag}(f)-ff^\top)\big]$
is $f_c$ times a difference of positive semidefinite matrices of operator
norm at most $2(1-f_c)^2$ and $\tfrac12$, $B\le\tfrac12$ for every $C$. Then
for every node~$i$,
\[
\|\bar p_i-\mathrm{softmax}(Z_{i,0})\|_\infty
\;\le\;\tfrac{B}{2}\operatorname{tr}(\Sigma_{z_i}),
\]
and $\arg\max_c\bar p_{i,c}=\arg\max_c\mathrm{softmax}(Z_{i,0})_c$ whenever
the gap between the two largest entries of $\mathrm{softmax}(Z_{i,0})$
exceeds $B\operatorname{tr}(\Sigma_{z_i})$.
\end{corollary}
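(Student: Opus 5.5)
The plan is a second-order Taylor argument applied componentwise to the softmax. It reuses the two quadrature moment identities of Lemma~\ref{lem:readout_conditions}, but takes the remainder in Lagrange (mean-value) form instead of the $O(\|\delta\|^3)$ form of Proposition~\ref{prop:curvature}. That change is what makes the bound exact rather than asymptotic.

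\textbf{Taylor step.} Fix node $i$ and class $c$, and write $z_i^{(s)}=Z_{i,0}+\delta_i^{(s)}$. Since $f_c$ is smooth on all of $\mathbb{R}^C$, Taylor's theorem with Lagrange remainder along the segment from $Z_{i,0}$ to $z_i^{(s)}$ gives
\[
f_c(z_i^{(s)})=f_c(Z_{i,0})+\nabla f_c(Z_{i,0})^\top\delta_i^{(s)}+\tfrac12(\delta_i^{(s)})^\top\nabla^2 f_c(\xi_{s,c})\,\delta_i^{(s)}
\]
for some $\xi_{s,c}$ on that segment. No neighborhood or convexity assumption is needed here, because $B$ is a global supremum.

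\textbf{Averaging.} Multiply by $\mu_s$ and sum over $s$, using $\sum_s\mu_s=1$. By Lemma~\ref{lem:readout_conditions}, $\sum_s\mu_s\delta_i^{(s)}=0$, which is valid since $S\ge P+1$. Hence the linear term drops and
\[
\bar p_{i,c}-\mathrm{softmax}(Z_{i,0})_c=\tfrac12\sum_s\mu_s(\delta_i^{(s)})^\top\nabla^2 f_c(\xi_{s,c})\,\delta_i^{(s)}.
\]
Bound each quadratic form by $B\|\delta_i^{(s)}\|^2$, and use $\mu_s>0$. Then $\sum_s\mu_s\|\delta_i^{(s)}\|^2=\operatorname{tr}\big(\sum_s\mu_s\delta_i^{(s)}(\delta_i^{(s)})^\top\big)=\operatorname{tr}(\Sigma_{z_i})$, again by the lemma. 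This gives $|\bar p_{i,c}-\mathrm{softmax}(Z_{i,0})_c|\le\tfrac B2\operatorname{tr}(\Sigma_{z_i})$ uniformly in $c$, which is the $\ell_\infty$ claim.

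\textbf{Bounding $B$.} This is the only step requiring care, so I would verify the stated bound $B\le\tfrac12$ explicitly.
\begin{itemize}
\item Differentiating $\nabla f_c=f_c(e_c-f)$ once more yields the displayed Hessian formula.
\item For the first PSD piece, $\|(e_c-f)(e_c-f)^\top\|_{\mathrm{op}}=\|e_c-f\|^2=(1-f_c)^2+\sum_{r\ne c}f_r^2\le 2(1-f_c)^2$, since $\sum_{r\ne c}f_r^2\le(\sum_{r\ne c}f_r)^2=(1-f_c)^2$.
\item For the second PSD piece, $\operatorname{diag}(f)-ff^\top$ has norm at most $\tfrac12$, since its eigenvalues are variances of a $[0,1]$-bounded variable after recentering.
\item A difference $A_1-A_2$ of PSD matrices satisfies $\|A_1-A_2\|_{\mathrm{op}}\le\max(\|A_1\|_{\mathrm{op}},\|A_2\|_{\mathrm{op}})$.
\end{itemize}
Combining these, $\|\nabla^2 f_c\|_{\mathrm{op}}\le\max\big(2f_c(1-f_c)^2,\,f_c/2\big)$. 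The first term is maximized at $f_c=\tfrac13$ with value $\tfrac{8}{27}<\tfrac12$, and the second is at most $\tfrac12$, so $B\le\tfrac12$.

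\textbf{Argmax claim.} Let $c^*$ be the top class of $\mathrm{softmax}(Z_{i,0})$, and suppose the top gap exceeds $B\operatorname{tr}(\Sigma_{z_i})$. By the $\ell_\infty$ bound, $\bar p_{i,c^*}$ decreases by at most $\tfrac B2\operatorname{tr}(\Sigma_{z_i})$ and every other $\bar p_{i,c}$ increases by at most the same amount. The total change is therefore strictly less than the gap, so $c^*$ remains the strict argmax of $\bar p_i$.
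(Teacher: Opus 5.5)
Your proof is correct and follows the paper's own argument. Both take a second-order Taylor expansion of each $f_c$ along the segment from $Z_{i,0}$ to $z_i^{(s)}$, with remainder bounded by $\tfrac{B}{2}\|\delta_i^{(s)}\|^2$ (you use the Lagrange form, the paper the integral form, which is immaterial), then apply the two quadrature identities of Lemma~\ref{lem:readout_conditions}, and finish with the same top-two-gap argument for the argmax.

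One justification needs fixing, in your optional check of $B\le\tfrac12$. The eigenvalues of $\operatorname{diag}(f)-ff^\top$ are variances of $v_{\hat y}$ for a unit vector $v$, whose entries span a range of up to $\sqrt2$; they are not confined to $[0,1]$. Your phrasing would give the false bound $\tfrac14$, whereas $C=2$, $f=(\tfrac12,\tfrac12)$ attains $\tfrac12$. The correct argument is that the variance is at most $(\sqrt2)^2/4=\tfrac12$; alternatively, Gershgorin gives the row bound $2f_r(1-f_r)\le\tfrac12$.
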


\begin{proof}
Write $z_i^{(s)}=Z_{i,0}+\delta_i^{(s)}$. Taylor's theorem with integral
remainder gives, for each class~$c$,
\[
f_c(z_i^{(s)})
= f_c(Z_{i,0})+\nabla f_c(Z_{i,0})^\top\delta_i^{(s)}
+\int_0^1(1-t)\,(\delta_i^{(s)})^\top
\nabla^2 f_c\big(Z_{i,0}+t\delta_i^{(s)}\big)\,\delta_i^{(s)}\,dt,
\]
and the remainder is bounded in absolute value by
$\tfrac{B}{2}\|\delta_i^{(s)}\|^2$. Averaging with the quadrature weights,
the linear term vanishes because $\sum_s\mu_s\delta_i^{(s)}=0$
(Lemma~\ref{lem:readout_conditions}), and
$\sum_s\mu_s\|\delta_i^{(s)}\|^2=\operatorname{tr}(\Sigma_{z_i})$ by the same
lemma, which gives the bound. Each component of $\bar p_i$ therefore lies
within $\tfrac{B}{2}\operatorname{tr}(\Sigma_{z_i})$ of the corresponding
component of $\mathrm{softmax}(Z_{i,0})$, so the two argmax rules agree
whenever the top-two gap of $\mathrm{softmax}(Z_{i,0})$ exceeds
$B\operatorname{tr}(\Sigma_{z_i})$. Since~\eqref{eq:loss_base} penalizes
$\mathcal{E}_i=\operatorname{tr}(\Sigma_{z_i})$ on training nodes, the bound
is the structural reason the two readouts agree closely in practice
(Section~\ref{sec:exp_calibration}, Table~\ref{tab:readout_coincidence}).
\end{proof}

\section{Baseline Details}
\label{app:baselines_datasets}

\subsection{Baseline methods}

Table~\ref{tab:method_capabilities} summarizes which tasks each method
evaluates. G-$\Delta$UQ is the closest baseline in scope because it reports all
three task families; we compare against it on calibration
(Table~\ref{tab:acc_brier}) and distribution shift
(Table~\ref{tab:good_table2_style}). On OOD detection, the anchor-sampling
mechanism in G-$\Delta$UQ causes out-of-memory errors in the GNNSafe
OOD pipeline, preventing a direct comparison. Across the evaluated
benchmarks, the DSS representation gives the strongest calibration and GOOD
results among the compared methods, with the best AUROC on 7 of 11 OOD
settings. The
calibration and GOOD results use plain cross-entropy or ERM training without
task-specific robust objectives; in the OOD detection experiments
DSS-Hybrid is trained with the same energy-margin objective as GNNSafe++
(Appendix~\ref{app:implementation}).

\begin{table}[h]
\centering
\small
\caption{Tasks evaluated by each method. \cmark = evaluated in the
original paper; (\cmark) = evaluated as a secondary metric or in the appendix;
\cmark$^*$ = evaluated in the original paper but OOM on our benchmark;
\xmark = not evaluated in the original paper or not designed for that task.}
\label{tab:method_capabilities}
\begin{tabular}{lccc}
\toprule
Method & Calibration & OOD Detection & Robust Classif. \\
\midrule
MC-dropout~\citep{gal2016dropout}           & (\cmark) & \xmark & \xmark \\
Deep Ensembles~\citep{lakshminarayanan2017simple} & \cmark & (\cmark) & \xmark \\
G-$\Delta$UQ~\citep{trivedi2024gdeltauq}    & \cmark & \cmark$^*$ & \cmark \\
TFE-GNN~\citep{duan2024tfe}                 & (\cmark) & \xmark & \xmark \\
GNNSafe~\citep{wu2023gnnsafe}               & \xmark & \cmark & \xmark \\
GNNSafe++~\citep{wu2023gnnsafe}             & \xmark & \cmark & \xmark \\
Graph-EBM~\citep{fuchsgruber2024gebm}       & (\cmark) & \cmark & \xmark \\
GPN~\citep{stadler2021gpn}                   & \cmark & \cmark & \xmark \\
IRM~\citep{arjovsky2019irm}                 & \xmark & \xmark & \cmark \\
GroupDRO~\citep{sagawa2020groupdro}          & \xmark & \xmark & \cmark \\
TAR~\citep{tar}                              & \xmark & \xmark & \cmark \\
\midrule
\textbf{DSS-GNN / DSS-Hybrid (ours)}       & \cmark & \cmark & \cmark \\
\bottomrule
\end{tabular}
\end{table}

We compare against the following baselines across the three evaluation tasks.

\paragraph{Calibration baselines.}
\begin{itemize}[leftmargin=1.5em, itemsep=2pt]
\item \textbf{TFE-GNN}~\citep{duan2024tfe}: A spectral GNN that combines three
  complementary polynomial filters (low-pass, high-pass, and band-pass) via
  learned attention weights. It uses a propagate-first architecture where
  spectral filtering is applied once before an MLP backbone.
\item \textbf{G-$\Delta$UQ}~\citep{trivedi2024gdeltauq}: An intrinsic
  uncertainty method that estimates predictive uncertainty through anchor
  distances in the learned embedding space, without requiring ensembles or
  Monte Carlo sampling.
\end{itemize}

\paragraph{OOD detection baselines.}
\begin{itemize}[leftmargin=1.5em, itemsep=2pt]
\item \textbf{GNNSafe}~\citep{wu2023gnnsafe}: Uses the energy score
  $E(x) = -T\log\sum_c\exp(z_c/T)$ as an OOD detection signal, propagated
  over the graph adjacency to exploit neighborhood consistency. Does not
  require OOD exposure during training.
\item \textbf{GNNSafe++}~\citep{wu2023gnnsafe}: Extends GNNSafe with an
  energy-based regularization loss that uses OOD training data to
  separate in-distribution and OOD energy scores. Uses per-dataset margin
  hyperparameters $(m_{\mathrm{in}}, m_{\mathrm{out}}, \lambda)$.
\item \textbf{Graph-EBM}~\citep{fuchsgruber2024gebm}: Models the joint
  distribution of node features and labels using an energy-based model with
  graph-aware components, including local and group energy terms propagated
  through graph diffusion.
\item \textbf{MC-dropout}~\citep{gal2016dropout} and \textbf{Deep
  Ensembles}~\citep{lakshminarayanan2017simple}: Stochastic baselines on the
  same GCN backbone and training protocol, with $M{=}20$ dropout passes at
  test time and $M{=}5$ independently trained models respectively; both are
  scored with the same energy score and graph propagation as the
  energy-based methods (Appendix~\ref{app:implementation}).
\item \textbf{GPN}~\citep{stadler2021gpn}: Graph Posterior Network, an
  evidential model that propagates per-node Dirichlet evidence over the
  graph; we report the values obtained on this benchmark by
  \citet{wu2023gnnsafe}.
\end{itemize}

\paragraph{Distribution-shift baselines.}
Table~\ref{tab:good_table2_style} evaluates the GOOD/TAR baselines under
concept shift and adds G-$\Delta$UQ. Results for ERM through TAR are taken
from the TAR paper~\citep{tar}. Representative baselines include:
\begin{itemize}[leftmargin=1.5em, itemsep=2pt]
\item \textbf{ERM}: Standard empirical risk minimization without any
  robustness modification.
\item \textbf{IRM}~\citep{arjovsky2019irm}: Invariant Risk Minimization,
  which penalizes features that are not invariant across training environments.
\item \textbf{GroupDRO}~\citep{sagawa2020groupdro}: Distributionally robust
  optimization that upweights the worst-performing group during training.
\item \textbf{VREx}~\citep{krueger2021vrex}: Variance Risk Extrapolation,
  which penalizes variance of risks across training environments.
\item \textbf{TAR}~\citep{tar}: Topology-Aware Dynamic Reweighting, which
  uses graph-local transport to reweight training nodes based on loss and
  neighborhood structure.
\end{itemize}

\section{Dataset Statistics and Licenses}
\label{app:datasets}

Tables~\ref{tab:cal_stats}--\ref{tab:good_stats} summarize the datasets used
in our experiments.  Table~\ref{tab:dataset_sources} lists the source
repositories and licenses for all datasets.  All datasets are publicly
available and were used in accordance with their respective licenses.

\begin{table}[h]
\centering
\small
\caption{Dataset sources and licenses.}
\label{tab:dataset_sources}
\resizebox{\textwidth}{!}{%
\begin{tabular}{lll}
\toprule
Dataset(s) & Repository & License \\
\midrule
Cora, Citeseer, PubMed
  & \url{https://github.com/kimiyoung/planetoid} & MIT \\
Texas, Cornell, Wisconsin,
  & \multirow{2}{*}{\url{https://github.com/bingzhewei/geom-gcn}} & \multirow{2}{*}{not stated} \\
\quad Chameleon, Squirrel & & \\
Coauthor-CS, Amazon-Photo
  & \url{https://github.com/shchur/gnn-benchmark} & MIT \\
Roman-Empire, Amazon-Ratings,
  & \multirow{2}{*}{\url{https://github.com/yandex-research/heterophilous-graphs}} & \multirow{2}{*}{MIT} \\
\quad Minesweeper, Tolokers, Questions & & \\
Twitch
  & \url{https://github.com/benedekrozemberczki/MUSAE} & GPL-3.0 \\
ogbn-arxiv
  & \url{https://github.com/snap-stanford/ogb} & MIT \\
GOOD benchmark
  & \url{https://github.com/divelab/GOOD} & GPL-3.0 \\
\bottomrule
\end{tabular}%
}
\end{table}

\subsection{Calibration benchmarks}

\begin{table}[h]
\centering
\small
\caption{Node classification datasets for calibration experiments
(Section~\ref{sec:exp_calibration}). Results use 10 splits per dataset
(Appendix~\ref{app:implementation}); edges are counted in both directions.
Cornell uses the corrected files of the geom-gcn repository, whose initial
release duplicated the Texas features and labels. Datasets below the mid-rule are from the heterophilous suite of
\citet{platonov2023critical}.}
\label{tab:cal_stats}
\begin{tabular}{lrrrr}
\toprule
Dataset & Nodes & Edges & Classes & Features \\
\midrule
Cora        &   2{,}708 &  10{,}556 &  7 & 1{,}433 \\
Citeseer    &   3{,}327 &   9{,}228 &  6 & 3{,}703 \\
PubMed      &  19{,}717 &  88{,}651 &  3 &    500 \\
Texas       &     183 &     574 &  5 & 1{,}703 \\
Cornell     &     183 &     557 &  5 & 1{,}703 \\
Wisconsin   &     251 &     916 &  5 & 1{,}703 \\
Chameleon   &   2{,}277 &  62{,}792 &  5 & 2{,}325 \\
Squirrel    &   5{,}201 & 396{,}846 &  5 & 2{,}089 \\
CS          &  18{,}333 & 163{,}788 & 15 & 6{,}805 \\
\midrule
Roman-Empire  & 22{,}662 &  65{,}854 & 18 &   300 \\
Amazon-Ratings & 24{,}492 & 186{,}100 &  5 &   300 \\
Minesweeper   & 10{,}000 &  78{,}804 &  2 &     7 \\
Tolokers      & 11{,}758 & 1{,}038{,}000 &  2 &    10 \\
Questions     & 48{,}921 & 307{,}080 &  2 &   301 \\
\bottomrule
\end{tabular}
\end{table}

\subsection{OOD detection benchmarks}

\begin{table}[h]
\centering
\small
\caption{OOD detection datasets (Section~\ref{sec:exp_ood}). For node-OOD
(Cora, Amazon-Photo, Coauthor-CS), OOD nodes are generated via structure
(stochastic block model), feature (random interpolation of node features), or
label (held-out classes)
perturbations. For cross-graph OOD, the ID and OOD graphs are from different
domains. Statistics are for the in-distribution graph.}
\label{tab:ood_stats}
\begin{tabular}{llrrrr}
\toprule
Dataset & OOD type & Nodes & Edges & Classes & Features \\
\midrule
Cora          & structure / feature / label &   2{,}708 &  10{,}556 &  7 & 1{,}433 \\
Amazon-Photo  & structure / feature / label &   7{,}650 & 238{,}162 &  8 &   745 \\
Coauthor-CS   & structure / feature / label &  18{,}333 & 163{,}788 & 15 & 6{,}805 \\
\midrule
Twitch (DE$\to$ES/FR/RU) & cross-graph &  9{,}498 & 306{,}276 &  2 & 3{,}170 \\
Arxiv ($\le$2015$\to$2018--20) & cross-graph (temporal) &  169{,}343 & 2{,}315{,}598 & 40 &   128 \\
\bottomrule
\end{tabular}
\end{table}

\subsection{GOOD distribution-shift benchmarks}

Table~\ref{tab:good_stats} reports the node-level concept-shift settings from
GOOD~\citep{gui2022good} used in our distribution-shift experiments.

\begin{table}[h]
\centering
\small
\caption{GOOD benchmark settings (Section~\ref{sec:exp_good}). All use concept
shift. The domain column indicates the covariate that defines the distribution
shift. Node counts are for the full graph before splitting; GOOD-Cora is
built on the full Cora citation graph (CoraFull), not on the 7-class
Planetoid subset of Table~\ref{tab:cal_stats}.}
\label{tab:good_stats}
\begin{tabular}{llrrrr}
\toprule
Setting & Domain & Nodes & Edges & Classes & Features \\
\midrule
GOOD-Cora / degree  & node degree  & 19{,}793 & 126{,}842 & 70 & 8{,}710 \\
GOOD-Cora / word    & word diversity & 19{,}793 & 126{,}842 & 70 & 8{,}710 \\
GOOD-Arxiv / degree & node degree  & 169{,}343 & 2{,}315{,}598 & 40 &   128 \\
GOOD-Arxiv / time   & publication year & 169{,}343 & 2{,}315{,}598 & 40 &   128 \\
GOOD-CBAS / color   & node color   &     700 &   3{,}962 &  4 &     4 \\
GOOD-WebKB / university & university &     617 &   1{,}138 &  5 & 1{,}703 \\
GOOD-Twitch / language & user language & 34{,}120 & 892{,}346 &  2 &   128 \\
\bottomrule
\end{tabular}
\end{table}

\section{Implementation Details}
\label{app:implementation}

\subsection{General training protocol}

All DSS-GNN models are trained with the mean-logit cross-entropy loss
$\mathcal{L}_{\mathrm{base}}$~\eqref{eq:loss_base}; the quadrature-averaged
loss
\begin{equation}
\mathcal{L}_{\mathrm{quad}}
=
\frac{1}{|\mathcal{V}_{\mathrm{tr}}|}
\sum_{i\in\mathcal{V}_{\mathrm{tr}}}
\bar\ell_i
\;+\;
\lambda_{\mathrm{reg}}
\frac{1}{|\mathcal{V}_{\mathrm{tr}}|}
\sum_{i\in\mathcal{V}_{\mathrm{tr}}} \mathcal{E}_i,
\qquad
\bar\ell_i = \sum_{s=1}^{S} \mu_s\,\ell(z_i^{(s)}, y_i),
\label{eq:loss_quad}
\end{equation}
supervises every quadrature node, agrees with
$\mathcal{L}_{\mathrm{base}}$ to first order by
Proposition~\ref{prop:curvature}, and is used only in that analysis. The loss
includes the chaos energy regularizer; $\lambda_{\mathrm{reg}}$ is selected
per dataset on validation Brier from $\{0,10^{-3},10^{-2},10^{-1}\}$
(Table~\ref{tab:selection}; Table~\ref{tab:lreg_ablation} reports its effect
at $P{=}2$), with $0.01$ as the default. Training uses early
stopping on validation loss with the patience listed per task below. All
calibration results report mean $\pm$ standard deviation over 10 splits,
identical for every method. For the nine datasets above the mid-rule of
Table~\ref{tab:cal_stats} the splits are random with fixed seeds: a
class-balanced training set whose per-class quota totals 60\% of the nodes
when every class is large enough, 20\% of the nodes for validation, and the
remainder for testing. The five heterophilous-suite datasets use the first
10 public 50/25/25 splits of \citet{platonov2023critical}.

\subsection{Calibration (Table~\ref{tab:acc_brier})}

\paragraph{Shared defaults.}
Chebyshev filter degrees $K_{\mathrm{lp}}{=}K_{\mathrm{hp}}{=}4$, chaos
order centered at $P{=}2$ in the base grid, quadrature nodes $S{=}4$,
$\lambda_{\mathrm{reg}}{=}0.01$, deterministic scalar gates ($P_g{=}0$),
loss type = mean, activation = ReLU, learning rate $0.01$, weight decay
$5{\times}10^{-4}$, propagation dropout $0.5$, linear dropout $0.0$, epochs
$1000$, patience $200$.  The best
architecture variant per dataset was selected from a grid over propagation
type (Chebyshev / TFE-adjacency / propagate-first), graph filter symmetry
(symmetric / random-walk), optimizer (Adam / RMSprop), hidden dimension
($64$ / $128$), number of layers ($2$ / $3$), and dropout ($0.5$ / $0.6$ /
$0.8$), evaluated on validation Brier score. Roman-Empire uses hidden 32 and
$K_{\mathrm{lp}}{=}K_{\mathrm{hp}}{=}2$, outside this grid, Minesweeper uses learning rate $0.005$, and Chameleon
uses sum aggregation. The chaos order $P$ is a regular
hyperparameter selected per dataset on validation Brier over the sweep of
Appendix~\ref{app:ablation}; Table~\ref{tab:selection} lists the selected
order per dataset together with that single configuration's accuracy and
Brier.

\paragraph{Per-dataset configurations.}
Table~\ref{tab:per_dataset_config} lists the architecture variant and any
hyperparameter overrides from the shared defaults for each dataset.

\begin{table}[ht]
\centering
\small
\caption{Per-dataset non-$P$ DSS-GNN configuration for the calibration experiments
(Table~\ref{tab:acc_brier}).  ``Prop-first'' = propagate-once structure
(spectral filtering followed by MLP); ``TFE-adj'' = TFE-style adjacency
propagation; ``Cheb'' = Chebyshev filters on rescaled Laplacian.
Unlisted hyperparameters use the shared defaults above; $P$,
$\lambda_{\mathrm{reg}}$, and $S$ per dataset are listed in
Table~\ref{tab:selection}.}
\label{tab:per_dataset_config}
\begin{tabular}{llcccl}
\toprule
Dataset & Architecture & Hidden & $L$ & Optimizer & Overrides \\
\midrule
Cora        & Prop-first (rw) & 64  & 2 & Adam    & dropout 0.8 \\
Citeseer    & Cheb            & 64  & 2 & Adam    & none \\
PubMed      & Cheb            & 128 & 2 & Adam    & none \\
Texas       & Prop-first      & 64  & 2 & RMSprop & none \\
Cornell     & Cheb            & 128 & 2 & Adam    & dropout 0.8 \\
Wisconsin   & Cheb            & 64  & 2 & Adam    & dropout 0.6 \\
Chameleon   & Prop-first (sum) & 64  & 2 & Adam   & none \\
Squirrel    & Prop-first (rw) & 64  & 2 & RMSprop & none \\
CS          & TFE-adj         & 64  & 2 & Adam    & none \\
Roman-Emp.  & TFE-adj         & 32  & 3 & Adam    & hidden 32, $K{=}2$ \\
Amz-Rat.    & TFE-adj         & 64  & 3 & Adam    & none \\
Minesweeper & Cheb            & 64  & 2 & Adam    & lr 0.005 \\
Tolokers    & Prop-first (rw) & 64  & 2 & RMSprop & none \\
Questions   & Prop-first      & 64  & 2 & RMSprop & none \\
\bottomrule
\end{tabular}
\end{table}

\paragraph{Selected chaos order per dataset.}
Table~\ref{tab:selection} lists, for each dataset, the chaos order and
regularization strength selected on validation Brier, with the accuracy and
Brier of that configuration over the same 10 splits, taken from the sweeps of
Tables~\ref{tab:p_acc} and~\ref{tab:p_brier}, and its validation Brier
(recorded in a separate run of the same configuration on the same splits);
Table~\ref{tab:acc_brier} reports these configurations. DSS-GNN has the best
mean Brier on all 14 datasets, including the closest comparisons
(Texas $0.240$ vs $0.255$, Squirrel $0.394$ vs $0.395$, Amazon-Ratings
$0.634$ vs $0.637$, all TFE-GNN; Questions $0.053$ vs $0.054$,
G-$\Delta$UQ). At least one of $P{=}1$ or $P{=}2$ is within $0.02$ Brier of
the best swept order on every dataset (Section~\ref{sec:ablation}), so the
larger selected orders are near-ties rather than requirements. The
quadrature size is $S{=}4$ throughout: exact for the polynomial filter and
gate terms at $P\le3$ (Appendix~\ref{sec:chaos_background}), while at
$P\ge4$ the highest-order Hermite products are integrated approximately. The
recommended default $P{=}1$--$2$ is exact for these terms at $S{=}4$, and the $S$ sweep at $P{=}2$
(Table~\ref{tab:S_ablation}) moves Brier by at most $0.011$ on 12 of 14
datasets.

\begin{table}[ht]
\centering
\small
\caption{Selected configuration per dataset for the calibration experiments
(Table~\ref{tab:acc_brier}): chaos order $P$, chaos energy regularization
$\lambda_{\mathrm{reg}}$, quadrature size $S$, and that single
configuration's test accuracy (\%) and Brier score (mean $\pm$ SD over the
same 10 splits). The last column reports the validation Brier of the
selected configuration, recorded in a separate run on the same 10 splits
(for Cornell, in the same run).}
\label{tab:selection}
\begin{tabular}{l ccc cc c}
\toprule
Dataset & $P$ & $\lambda_{\mathrm{reg}}$ & $S$ & Accuracy & Brier & Val.\ Brier \\
\midrule
Cora           & 1 & 0.01 & 4 & 86.63 $\pm$ 1.26 & 0.207 $\pm$ 0.019 & 0.203 \\
Citeseer       & 2 & 0 & 4 & 80.20 $\pm$ 1.28 & 0.308 $\pm$ 0.010 & 0.315 \\
PubMed         & 5 & 0.01 & 4 & 89.72 $\pm$ 0.31 & 0.156 $\pm$ 0.005 & 0.152 \\
Texas          & 2 & 0.01 & 4 & 91.31 $\pm$ 3.44 & 0.240 $\pm$ 0.111 & 0.190 \\
Cornell        & 1 & 0.01  & 4 & 85.11 $\pm$ 5.12 & 0.231 $\pm$ 0.079 & 0.166 \\
Wisconsin      & 2 & 0 & 4 & 93.25 $\pm$ 3.39 & 0.104 $\pm$ 0.047 & 0.088 \\
Chameleon      & 1 & 0.01 & 4 & 75.36 $\pm$ 1.19 & 0.339 $\pm$ 0.019 & 0.333 \\
Squirrel       & 1 & 0.1 & 4 & 68.06 $\pm$ 2.21 & 0.394 $\pm$ 0.027 & 0.415 \\
CS             & 1 & 0.01 & 4 & 96.17 $\pm$ 0.23 & 0.062 $\pm$ 0.004 & 0.060 \\
Roman-Empire   & 1 & 0 & 4 & 78.84 $\pm$ 0.61 & 0.300 $\pm$ 0.007 & 0.304 \\
Amazon-Ratings & 8 & 0.01 & 4 & 49.72 $\pm$ 0.64 & 0.634 $\pm$ 0.005 & 0.637 \\
Minesweeper    & 6 & 0.001 & 4 & 87.66 $\pm$ 1.10 & 0.169 $\pm$ 0.014 & 0.170 \\
Tolokers       & 2 & 0.01  & 4 & 79.88 $\pm$ 0.49 & 0.268 $\pm$ 0.005 & 0.305 \\
Questions      & 4 & 0.001 & 4 & 97.17 $\pm$ 0.04 & 0.053 $\pm$ 0.001 & 0.054 \\
\bottomrule
\end{tabular}
\end{table}

\subsection{OOD detection (Section~\ref{sec:exp_ood})}

DSS-Hybrid uses a 2-layer GCN base encoder (hidden 64, dropout 0.0) with a
DSS residual branch ($K_{\mathrm{lp}}{=}3$, $K_{\mathrm{hp}}{=}2$, $P{=}1$,
$S{=}4$, deterministic scalar gates).  The base encoder is warmed up for 50
epochs before the DSS branch is activated; the residual scale
$\gamma_{\mathrm{res}}$ is initialized to $0.1$.  All methods run in our pipeline use Adam with learning rate $0.01$, weight
decay $0.01$, 200 epochs, and 3 seeds, following the GNNSafe protocol. The
GNNSafe pipeline uses BatchNorm in its GCN backbones, and so does the
DSS-Hybrid base; the standalone model of Appendix~\ref{app:crosseval} adds
per-chaos-channel BatchNorm in all OOD and GOOD cells but OOD Twitch and
GOOD-Arxiv/degree.  OOD detection scores are
computed with the energy score~\eqref{eq:energy_score} at temperature
$T{=}1$, propagated over the graph adjacency with $K{=}2$ hops and mixing
$\alpha{=}0.5$, following the GNNSafe protocol~\citep{wu2023gnnsafe}.
DSS-Hybrid is trained with the GNNSafe++ energy-margin regularizer, using
the per-dataset margin hyperparameters
$(m_{\mathrm{in}}, m_{\mathrm{out}}, \lambda)$ of \citet{wu2023gnnsafe}, as
GNNSafe++ is; Graph-EBM, MC-dropout, and Deep Ensembles are trained without
it, as GNNSafe is.
MC-dropout keeps dropout active at test time with BatchNorm frozen and
aggregates $M{=}20$ stochastic passes; Deep Ensembles aggregate $M{=}5$
independently initialized backbones; both are scored with the same energy
score and propagation as the energy-based methods, applied to the mean
logits across passes or members.

\paragraph{Standalone model in the OOD and GOOD pipelines.}
For the cross-evaluation of Appendix~\ref{app:crosseval}, the standalone
model adds one BatchNorm1d per chaos channel between hidden DSS layers where
noted; Table~\ref{tab:acc_brier} uses no BatchNorm. The per-cell
configurations are listed in Appendix~\ref{app:crosseval}.

\paragraph{Graph-EBM reproduction.}
Graph-EBM is run with a port of the official
\texttt{GraphEBMWrapper}~\citep{fuchsgruber2024gebm} at the official default
hyperparameters: the multi-scale corrected-conditional energy model with
Mahalanobis logit correction, three energy scales, and label-propagation
diffusion (diagonal untied covariance, $\gamma{=}1$, unit energy weights,
$\alpha{=}0.5$ with $k{=}10$ steps). The scorer is fitted on the same GCN
encoder (with BatchNorm, as in the GNNSafe pipeline), trained with the same
hyperparameters and seeds as the other baselines run in our pipeline;
Graph-EBM's own diffusion replaces the GNNSafe score propagation for this
row. On an exported Cora/structure run (trained GCN logits and embeddings,
both graphs, labels, and masks) the port and the official implementation
agree at Spearman rank correlation $0.994$, and exactly (rank correlation
$1.0$) once the interval and covariance estimators of the official
implementation are used.

\subsection{Distribution shift (Table~\ref{tab:good_table2_style})}

DSS-Hybrid uses a 2-layer GCN base encoder with a DSS residual branch
($K_{\mathrm{lp}}{=}3$, $K_{\mathrm{hp}}{=}2$, $P{=}1$, $S{=}4$, hidden 64,
deterministic scalar gates, 3 layers total).  Training uses Adam with learning
rate $10^{-3}$, weight decay $5{\times}10^{-4}$, dropout $0.5$, 500 epochs,
patience 200.  The base
encoder is warmed up for 25 epochs.  Standard ERM training is used (no
robust objective).  Published baseline results (ERM through TAR) are taken
from the TAR paper~\citep{tar}; G-$\Delta$UQ is run in our pipeline.

\subsection{Software, hardware, and compute budget}

Experiments use PyTorch and PyTorch Geometric (the released environment pins
PyTorch 2.5) on NVIDIA A100 (40\,GB) and H100 GPUs; the ogbn-arxiv timing
runs of Table~\ref{tab:timing} use one H100, and the Cornell experiments run
on CPU.  Mixed precision (FP16) is used for calibration experiments to
reduce memory on larger graphs.  On the A100 runs, per-epoch wall-clock times range from
${\sim}35$\,ms (Cora, $P{=}0$) to ${\sim}650$\,ms (Amazon-Ratings, $P{=}3$);
see Table~\ref{tab:timing} for a detailed breakdown.

\paragraph{Compute estimate.}
The reported experiments (calibration: 14 datasets $\times$ 10 runs;
OOD: 5 datasets $\times$ 3 seeds; GOOD: 7 settings) total approximately
8 A100 GPU-hours.  The ablation studies (chaos order $P$, regularization
$\lambda_{\mathrm{reg}}$, quadrature nodes $S$, and contribution
decomposition; Appendix~\ref{app:ablation}) add approximately
100 GPU-hours.  Hyperparameter selection (grid search over architecture
variant, hidden dimension, optimizer, and dropout) required an additional
${\sim}25$ GPU-hours.  The total compute for the project, including
preliminary experiments not reported in the paper, is approximately
200 A100 GPU-hours.

Code is available at \coderepo.

\section{Ablation Study Details}
\label{app:ablation}

\subsection{Polynomial order \texorpdfstring{$P$}{P} across all datasets}

Figure~\ref{fig:p_ablation} summarizes the chaos-order sweep referenced in
Section~\ref{sec:ablation}, and Tables~\ref{tab:p_acc} and~\ref{tab:p_brier}
report the per-dataset classification accuracy and Brier score for DSS-GNN
across all 14 datasets for $P\in\{0,1,2,3,4,5,6,8\}$. All results
are mean $\pm$ standard deviation over 10 random splits. For each dataset, the
best-performing $P$ is highlighted in bold.

\begin{figure}[ht]
\centering
\includegraphics[width=.9\textwidth]{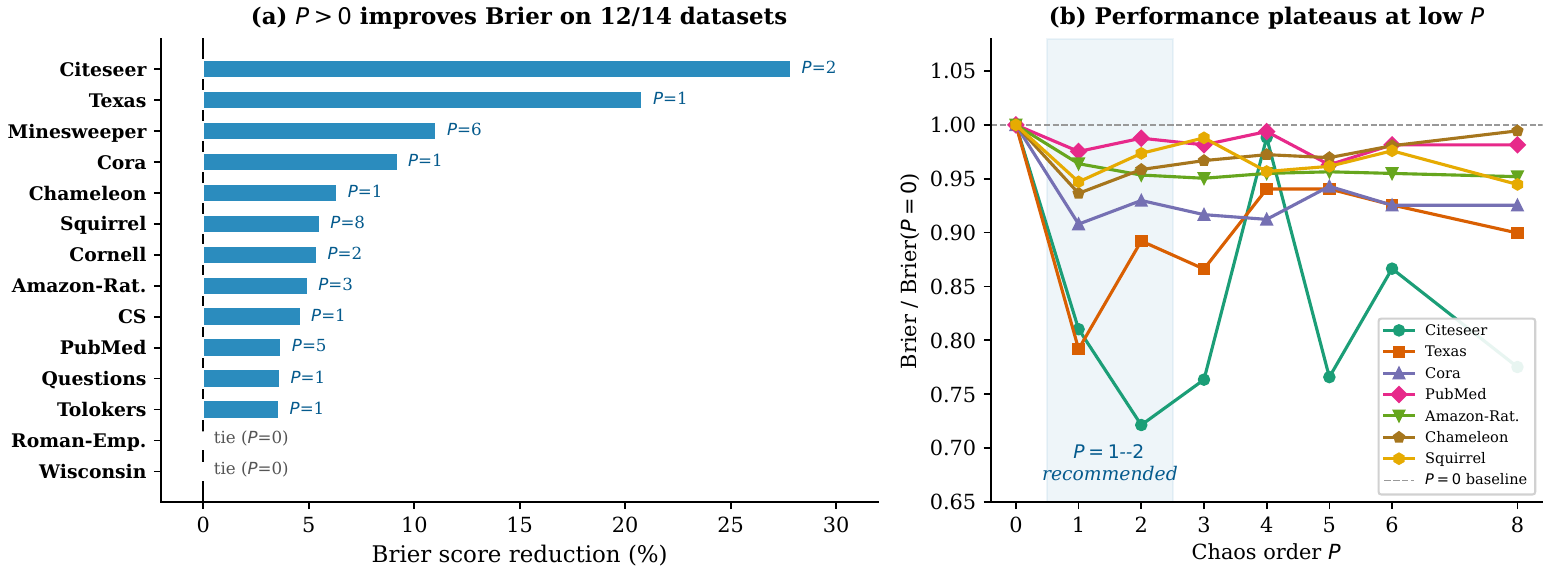}
\caption{\textbf{(a)}~Percentage Brier reduction of the best swept order
$P{>}0$ over $P{=}0$ (Table~\ref{tab:p_brier}). \textbf{(b)}~Brier relative
to $P{=}0$ versus chaos order, for seven datasets.}
\label{fig:p_ablation}
\end{figure}

\begin{table}[ht]
\centering
\caption{Accuracy (\%) of DSS-GNN for varying chaos order $P$ across all datasets.}
\label{tab:p_acc}
\resizebox{\textwidth}{!}{
\scriptsize
\begin{tabular}{lcccccccc}
\toprule
Dataset & $P=0$ & $P=1$ & $P=2$ & $P=3$ & $P=4$ & $P=5$ & $P=6$ & $P=8$ \\
\midrule
Cora & 85.12 $\pm$ 1.06 & \textbf{86.63} $\pm$ 1.26 & 86.33 $\pm$ 1.12 & 86.33 $\pm$ 0.82 & \textbf{86.63} $\pm$ 1.14 & 86.32 $\pm$ 1.13 & 86.61 $\pm$ 0.84 & 86.38 $\pm$ 0.96 \\
Citeseer & 65.39 $\pm$ 1.02 & 79.21 $\pm$ 0.79 & \textbf{80.20} $\pm$ 1.28 & 78.47 $\pm$ 0.80 & 71.50 $\pm$ 4.58 & 78.92 $\pm$ 1.13 & 76.77 $\pm$ 2.19 & 78.49 $\pm$ 1.08 \\
PubMed & 89.12 $\pm$ 0.57 & 89.33 $\pm$ 0.49 & 89.33 $\pm$ 0.66 & 89.38 $\pm$ 0.67 & 89.33 $\pm$ 0.54 & \textbf{89.72} $\pm$ 0.31 & 89.40 $\pm$ 0.57 & 89.39 $\pm$ 0.51 \\
Texas & 75.08 $\pm$ 23.70 & 88.36 $\pm$ 4.57 & \textbf{91.31} $\pm$ 3.44 & 89.18 $\pm$ 4.92 & \textbf{91.31} $\pm$ 3.30 & 87.54 $\pm$ 5.59 & 89.01 $\pm$ 2.82 & 87.37 $\pm$ 5.43 \\
Cornell & 83.62 $\pm$ 6.80 & 85.11 $\pm$ 5.12 & \textbf{85.74} $\pm$ 6.73 & 83.62 $\pm$ 5.39 & 84.26 $\pm$ 6.87 & 83.83 $\pm$ 7.38 & 83.62 $\pm$ 6.53 & 82.98 $\pm$ 7.61 \\
Wisconsin & \textbf{93.25} $\pm$ 3.46 & 92.87 $\pm$ 2.92 & \textbf{93.25} $\pm$ 3.39 & 88.50 $\pm$ 3.35 & 93.00 $\pm$ 3.46 & 90.75 $\pm$ 2.84 & 89.63 $\pm$ 4.82 & 90.50 $\pm$ 2.92 \\
Chameleon & 74.53 $\pm$ 2.32 & \textbf{75.36} $\pm$ 1.19 & 74.53 $\pm$ 1.74 & 74.51 $\pm$ 1.57 & 74.77 $\pm$ 1.86 & 74.92 $\pm$ 1.60 & 74.40 $\pm$ 1.65 & 73.63 $\pm$ 1.69 \\
Squirrel & 66.35 $\pm$ 2.56 & \textbf{68.06} $\pm$ 2.21 & 67.26 $\pm$ 2.13 & 66.99 $\pm$ 1.24 & 67.39 $\pm$ 2.44 & 67.23 $\pm$ 1.55 & 67.02 $\pm$ 1.50 & 67.71 $\pm$ 1.51 \\
CS & 95.94 $\pm$ 0.20 & \textbf{96.17} $\pm$ 0.23 & 96.01 $\pm$ 0.25 & 96.03 $\pm$ 0.19 & 96.13 $\pm$ 0.25 & 95.96 $\pm$ 0.29 & 96.02 $\pm$ 0.23 & 96.02 $\pm$ 0.17 \\
Roman-Emp. & \textbf{79.01} $\pm$ 0.27 & 78.84 $\pm$ 0.61 & 78.81 $\pm$ 0.61 & 78.98 $\pm$ 0.87 & 78.75 $\pm$ 0.32 & 78.77 $\pm$ 0.69 & \textbf{79.01} $\pm$ 0.27 & 78.97 $\pm$ 0.79 \\
Amazon-Rat. & 45.40 $\pm$ 0.76 & 48.13 $\pm$ 1.14 & 49.41 $\pm$ 0.71 & 49.65 $\pm$ 1.15 & 49.36 $\pm$ 0.70 & 49.05 $\pm$ 0.73 & 49.35 $\pm$ 0.83 & \textbf{49.72} $\pm$ 0.64 \\
Minesweeper & 86.14 $\pm$ 0.60 & 86.93 $\pm$ 0.51 & 87.37 $\pm$ 1.10 & 86.37 $\pm$ 1.06 & 86.53 $\pm$ 0.96 & 86.17 $\pm$ 0.45 & \textbf{87.66} $\pm$ 1.10 & 87.42 $\pm$ 0.93 \\
Tolokers & 79.33 $\pm$ 0.66 & 79.82 $\pm$ 0.45 & \textbf{79.88} $\pm$ 0.49 & 79.28 $\pm$ 0.79 & 79.62 $\pm$ 0.56 & 79.32 $\pm$ 0.66 & 79.21 $\pm$ 0.75 & 79.36 $\pm$ 0.79 \\
Questions & 97.15 $\pm$ 0.06 & 97.15 $\pm$ 0.06 & 97.13 $\pm$ 0.06 & 97.16 $\pm$ 0.06 & \textbf{97.17} $\pm$ 0.04 & 97.15 $\pm$ 0.06 & 97.15 $\pm$ 0.06 & 97.14 $\pm$ 0.09 \\
\bottomrule
\end{tabular}
}
\end{table}

\begin{table}[ht]
\centering
\caption{Brier Score ($\downarrow$) of DSS-GNN for varying chaos order $P$ across all datasets.}
\label{tab:p_brier}
\resizebox{\textwidth}{!}{
\scriptsize
\begin{tabular}{lcccccccc}
\toprule
Dataset & $P=0$ & $P=1$ & $P=2$ & $P=3$ & $P=4$ & $P=5$ & $P=6$ & $P=8$ \\
\midrule
Cora & 0.228 $\pm$ 0.013 & \textbf{0.207} $\pm$ 0.019 & 0.212 $\pm$ 0.016 & 0.209 $\pm$ 0.016 & 0.208 $\pm$ 0.019 & 0.215 $\pm$ 0.015 & 0.211 $\pm$ 0.011 & 0.211 $\pm$ 0.015 \\
Citeseer & 0.427 $\pm$ 0.008 & 0.346 $\pm$ 0.008 & \textbf{0.308} $\pm$ 0.010 & 0.326 $\pm$ 0.012 & 0.422 $\pm$ 0.027 & 0.327 $\pm$ 0.012 & 0.370 $\pm$ 0.023 & 0.331 $\pm$ 0.013 \\
PubMed & 0.162 $\pm$ 0.007 & 0.158 $\pm$ 0.007 & 0.160 $\pm$ 0.008 & 0.159 $\pm$ 0.008 & 0.161 $\pm$ 0.007 & \textbf{0.156} $\pm$ 0.005 & 0.159 $\pm$ 0.008 & 0.159 $\pm$ 0.007 \\
Texas & 0.269 $\pm$ 0.082 & \textbf{0.213} $\pm$ 0.111 & 0.240 $\pm$ 0.111 & 0.233 $\pm$ 0.096 & 0.253 $\pm$ 0.096 & 0.253 $\pm$ 0.086 & 0.249 $\pm$ 0.114 & 0.242 $\pm$ 0.082 \\
Cornell & 0.241 $\pm$ 0.087 & 0.231 $\pm$ 0.079 & \textbf{0.228} $\pm$ 0.088 & 0.250 $\pm$ 0.090 & 0.263 $\pm$ 0.129 & 0.247 $\pm$ 0.107 & 0.252 $\pm$ 0.079 & 0.273 $\pm$ 0.111 \\
Wisconsin & \textbf{0.103} $\pm$ 0.048 & 0.107 $\pm$ 0.041 & 0.104 $\pm$ 0.047 & 0.161 $\pm$ 0.060 & \textbf{0.103} $\pm$ 0.048 & 0.129 $\pm$ 0.037 & 0.153 $\pm$ 0.052 & 0.135 $\pm$ 0.040 \\
Chameleon & 0.362 $\pm$ 0.019 & \textbf{0.339} $\pm$ 0.019 & 0.347 $\pm$ 0.022 & 0.350 $\pm$ 0.016 & 0.352 $\pm$ 0.016 & 0.351 $\pm$ 0.018 & 0.355 $\pm$ 0.013 & 0.360 $\pm$ 0.015 \\
Squirrel & 0.416 $\pm$ 0.028 & 0.394 $\pm$ 0.027 & 0.405 $\pm$ 0.024 & 0.411 $\pm$ 0.019 & 0.398 $\pm$ 0.028 & 0.400 $\pm$ 0.016 & 0.406 $\pm$ 0.014 & \textbf{0.393} $\pm$ 0.017 \\
CS & 0.065 $\pm$ 0.003 & \textbf{0.062} $\pm$ 0.004 & 0.063 $\pm$ 0.003 & 0.064 $\pm$ 0.003 & 0.063 $\pm$ 0.003 & 0.064 $\pm$ 0.004 & 0.063 $\pm$ 0.003 & 0.063 $\pm$ 0.003 \\
Roman-Emp. & \textbf{0.298} $\pm$ 0.005 & 0.300 $\pm$ 0.007 & 0.300 $\pm$ 0.008 & \textbf{0.298} $\pm$ 0.005 & 0.299 $\pm$ 0.005 & 0.300 $\pm$ 0.009 & 0.299 $\pm$ 0.007 & \textbf{0.298} $\pm$ 0.005 \\
Amazon-Rat. & 0.666 $\pm$ 0.003 & 0.642 $\pm$ 0.010 & 0.635 $\pm$ 0.006 & \textbf{0.633} $\pm$ 0.006 & 0.636 $\pm$ 0.006 & 0.637 $\pm$ 0.005 & 0.636 $\pm$ 0.004 & 0.634 $\pm$ 0.005 \\
Minesweeper & 0.190 $\pm$ 0.007 & 0.180 $\pm$ 0.006 & 0.170 $\pm$ 0.014 & 0.185 $\pm$ 0.011 & 0.181 $\pm$ 0.011 & 0.187 $\pm$ 0.005 & \textbf{0.169} $\pm$ 0.014 & 0.171 $\pm$ 0.012 \\
Tolokers & 0.278 $\pm$ 0.010 & \textbf{0.268} $\pm$ 0.004 & \textbf{0.268} $\pm$ 0.005 & 0.276 $\pm$ 0.012 & 0.280 $\pm$ 0.009 & 0.277 $\pm$ 0.010 & 0.278 $\pm$ 0.009 & 0.279 $\pm$ 0.009 \\
Questions & 0.055 $\pm$ 0.001 & \textbf{0.053} $\pm$ 0.001 & 0.054 $\pm$ 0.002 & \textbf{0.053} $\pm$ 0.001 & \textbf{0.053} $\pm$ 0.001 & \textbf{0.053} $\pm$ 0.001 & \textbf{0.053} $\pm$ 0.001 & \textbf{0.053} $\pm$ 0.002 \\
\bottomrule
\end{tabular}
}
\end{table}

On four datasets the best swept order exceeds two (PubMed, Squirrel,
Amazon-Ratings, and Minesweeper); on every dataset $P=1$ or $P=2$ is within
0.02 Brier of the best swept value.

\subsection{Chaos order on OOD detection and under distribution shift}
\label{app:chaos_ood_good}

\paragraph{OOD detection.}
Varying only the chaos order $P\in\{0,1,2,3\}$ in the DSS-Hybrid OOD
configuration (GCN base, energy score with propagation, energy-margin
training, every other hyperparameter fixed; 3 runs) leaves energy AUROC
flat: Cora-structure $91.0/91.2/90.9/90.8$ and Amazon-Photo/feature
$99.4/99.5/99.5/99.5$ for $P=0,1,2,3$, Coauthor-CS likewise flat, and on
ogbn-arxiv the AUROC moves by at most $0.8$ across $P=0$ to $3$ (the runs of
Table~\ref{tab:timing}). The absolute levels differ from
Table~\ref{tab:gnnsafe_local_table2style} because the ablation is a separate
set of runs at one configuration per dataset. This is the behavior the
design predicts: the energy score reads the mean logit
alone~\eqref{eq:energy_score}, so $P$ can be selected for calibration without
affecting detection, and the detection margin comes from the DSS
representation under a fixed readout (Section~\ref{sec:exp_ood}).

\paragraph{Distribution shift.}
At the validation-selected standalone configurations of
Appendix~\ref{app:crosseval}, Table~\ref{tab:good_pabl} varies only $P$
(3 seeds). $P{=}1$ is the argmax on both the OOD-validation and the test
split on all three settings, so at these configurations the chaos order
contributes $+1.7$, $+1.2$, and $+0.6$ points of shifted accuracy over the
deterministic $P{=}0$ arm.

\begin{table}[ht]
\centering
\small
\caption{Shifted test accuracy (\%) of standalone DSS-GNN on GOOD concept
shift as the chaos order varies at the validation-selected configuration
(mean $\pm$ SD over 3 seeds).}
\label{tab:good_pabl}
\begin{tabular}{l ccc}
\toprule
Setting & $P{=}0$ & $P{=}1$ & $P{=}2$ \\
\midrule
GOOD-Cora / word    & 63.24 $\pm$ 0.40 & \textbf{64.89} $\pm$ 0.21 & 64.01 $\pm$ 0.25 \\
GOOD-Cora / degree  & 61.38 $\pm$ 0.81 & \textbf{62.60} $\pm$ 0.59 & 61.22 $\pm$ 0.07 \\
GOOD-Arxiv / time   & 64.81 $\pm$ 0.27 & \textbf{65.43} $\pm$ 1.07 & 64.48 $\pm$ 0.27 \\
\bottomrule
\end{tabular}
\end{table}

\subsection{Energy-score propagation}
\label{app:propagation}

Table~\ref{tab:prop_ablation} isolates the score-propagation step of the
energy readout on DSS-Hybrid: the raw energy and the propagated energy are
computed from the same trained checkpoints (one fixed configuration, 3
runs), so the difference is due to propagation alone. Propagation helps on
9 of 11 settings, by up to $+18.2$ AUROC (Twitch) and $+14.5$
(Cora-structure); the two exceptions are Cora/label ($-6.7$), where the
fixed smoothing averages away part of the leave-out signal inside
homophilous neighborhoods, and Amazon-Photo/structure ($-0.4$), at noise
level. The same step transfers to the standalone model (Cora/label $+0.9$,
Amazon-Photo/label $+13.4$ at identical checkpoints). We apply propagation
uniformly for protocol parity with GNNSafe++.

\begin{table}[ht]
\centering
\small
\caption{Effect of energy-score propagation on DSS-Hybrid: change in AUROC
(points) from the raw to the propagated energy score at identical
checkpoints (3 runs, fixed configuration).}
\label{tab:prop_ablation}
\begin{tabular}{l ccc}
\toprule
Dataset & Structure & Feature & Label \\
\midrule
Cora         & $+14.5$ & $+5.4$ & $-6.7$ \\
Amazon-Photo & $-0.4$  & $+1.7$ & $+3.9$ \\
Coauthor-CS  & $+1.9$  & $+1.1$ & $+1.3$ \\
\midrule
Twitch (cross-graph) & \multicolumn{3}{c}{$+18.2$} \\
Arxiv (cross-graph)  & \multicolumn{3}{c}{$+7.3$} \\
\bottomrule
\end{tabular}
\end{table}

\subsection{Chaos energy regularization \texorpdfstring{$\lambda_{\mathrm{reg}}$}{lambda reg}}

Table~\ref{tab:lreg_ablation} reports the Brier score as the chaos energy
regularization strength $\lambda_{\mathrm{reg}}$ varies from 0 to 0.1 with
$P=2$ fixed. Each column is an independent set of 10-split runs; at each
dataset's selected $\lambda_{\mathrm{reg}}$ the cell agrees with the $P{=}2$
column of Table~\ref{tab:p_brier} within one standard deviation. The
regularizer of~\eqref{eq:loss_base} penalizes large chaos coefficients; at
$\lambda_{\mathrm{reg}}=0$ the chaos expansion is still active (all $P+1$
channels are learned), but without an explicit energy penalty. Moderate
regularization ($\lambda_{\mathrm{reg}} \in \{10^{-3}, 10^{-2}\}$) yields the
best calibration on the majority of datasets. On Texas,
$\lambda_{\mathrm{reg}}$ matters most ($0.366$ at $0$ versus $0.213$ at
$10^{-2}$); on Citeseer, Wisconsin, and Roman-Empire
$\lambda_{\mathrm{reg}}{=}0$ is best and larger values degrade Brier, which is
why $\lambda_{\mathrm{reg}}$ is selected per dataset on validation Brier.

\begin{table}[ht]
\centering
\small
\caption{Effect of chaos energy regularization $\lambda_{\mathrm{reg}}$ on
Brier score ($\downarrow$), with $P=2$ fixed. Bold indicates the best
$\lambda_{\mathrm{reg}}$ per dataset.}
\label{tab:lreg_ablation}
\begin{tabular}{l cccc}
\toprule
Dataset & $\lambda_{\mathrm{reg}}{=}0$ & $10^{-3}$ & $10^{-2}$ & $10^{-1}$ \\
\midrule
Cora        & 0.219 & 0.219 & \textbf{0.207} & 0.220 \\
Citeseer    & \textbf{0.308} & 0.310 & 0.318 & 0.499 \\
PubMed      & 0.159 & \textbf{0.156} & \textbf{0.156} & 0.157 \\
Texas       & 0.366 & 0.246 & \textbf{0.213} & 0.251 \\
Cornell     & 0.230 & 0.237 & \textbf{0.228} & 0.342 \\
Wisconsin   & \textbf{0.103} & 0.107 & 0.108 & 0.174 \\
Chameleon   & 0.362 & 0.354 & \textbf{0.339} & 0.342 \\
Squirrel    & 0.405 & 0.407 & 0.398 & \textbf{0.393} \\
CS          & 0.064 & 0.066 & \textbf{0.062} & 0.083 \\
\midrule
Roman-Emp.  & \textbf{0.298} & 0.323 & 0.603 & 0.789 \\
Amazon-Rat. & 0.663 & 0.660 & \textbf{0.633} & \textbf{0.633} \\
Minesweeper & 0.186 & \textbf{0.169} & 0.190 & 0.228 \\
Tolokers    & 0.281 & 0.275 & \textbf{0.268} & 0.280 \\
Questions   & 0.054 & \textbf{0.053} & 0.054 & 0.055 \\
\bottomrule
\end{tabular}
\end{table}

\subsection{Number of quadrature nodes \texorpdfstring{$S$}{S}}

Table~\ref{tab:S_ablation} reports the Brier score as the number of
Gauss--Hermite quadrature nodes $S$ varies from 2 to 8, with $P=2$ fixed.
Each column is an independent set of 10-split runs; the $S{=}4$ column agrees
with the $P{=}2$ column of Table~\ref{tab:p_brier} within one standard
deviation.
For the deterministic gates used in these runs ($P_g=0$), the theoretical
minimum for exact chaos arithmetic at $P=2$ is
$S\ge\lceil(P_g+2P+1)/2\rceil = 3$ (Appendix~\ref{sec:chaos_background}).
Brier varies by at most $0.011$ across $S\in\{2,\dots,8\}$ on 12 of 14
datasets. The exceptions are Wisconsin, where $S{=}2$, below the theoretical
minimum, gives a higher Brier score ($0.142$ vs $0.109$ at $S{=}3$),
consistent with the exactness requirement of
Theorem~\ref{thm:nonintrusive_equals_intrusive}, and Cornell, where the
spread across $S$ ($0.218$ to $0.286$) is within one standard deviation of
its 10-split results.
The chaos-order sweeps of Tables~\ref{tab:p_acc}--\ref{tab:p_brier} use
$S{=}4$ throughout, exact for $P\le3$; at $P\ge4$ the highest-order Hermite
products are integrated approximately (Appendix~\ref{app:implementation}).

\begin{table}[ht]
\centering
\small
\caption{Brier score ($\downarrow$) vs.\ number of quadrature nodes $S$, with
$P=2$ fixed. The theoretical minimum is $S=3$.}
\label{tab:S_ablation}
\begin{tabular}{l ccccc}
\toprule
Dataset & $S{=}2$ & $S{=}3$ & $S{=}4$ & $S{=}6$ & $S{=}8$ \\
\midrule
Cora        & 0.216 & \textbf{0.207} & 0.217 & 0.209 & 0.212 \\
Citeseer    & 0.310 & \textbf{0.308} & 0.312 & 0.313 & 0.311 \\
PubMed      & 0.157 & \textbf{0.156} & \textbf{0.156} & 0.158 & \textbf{0.156} \\
Texas       & 0.216 & 0.221 & \textbf{0.213} & 0.217 & 0.218 \\
Cornell     & 0.235 & 0.286 & 0.228 & \textbf{0.218} & 0.242 \\
Wisconsin   & 0.142 & 0.109 & 0.115 & \textbf{0.103} & 0.108 \\
Chameleon   & 0.341 & 0.344 & 0.343 & \textbf{0.339} & 0.340 \\
Squirrel    & 0.395 & 0.397 & \textbf{0.393} & 0.396 & 0.400 \\
CS          & \textbf{0.062} & \textbf{0.062} & \textbf{0.062} & 0.063 & 0.063 \\
\midrule
Roman-Emp.  & \textbf{0.298} & \textbf{0.298} & \textbf{0.298} & \textbf{0.298} & \textbf{0.298} \\
Amazon-Rat. & \textbf{0.633} & 0.634 & 0.636 & 0.636 & 0.637 \\
Minesweeper & 0.170 & \textbf{0.169} & 0.170 & 0.171 & 0.170 \\
Tolokers    & 0.269 & 0.269 & 0.273 & 0.270 & \textbf{0.268} \\
Questions   & 0.055 & \textbf{0.053} & 0.054 & 0.055 & 0.055 \\
\bottomrule
\end{tabular}
\end{table}

\subsection{Contribution decomposition: backbone vs.\ chaos}

Table~\ref{tab:decompose} decomposes the calibration improvement into two
factors: the graph-spectral backbone and the chaos expansion. The ``ChebNet''
column reports a vanilla single-branch Chebyshev filter ($K{=}4$, hidden 64)
as a diagnostic backbone baseline, not as part of the main uncertainty-aware
baseline comparison in Table~\ref{tab:acc_brier}. The ``$P{=}0$'' column uses
the Table~\ref{tab:acc_brier}
architecture for each dataset (which may include propagate-first or dual
branches) with the chaos expansion disabled. The ``Best $P{>}0$'' column adds
the chaos expansion on the same architecture. The $P{=}0 \to P{>}0$ comparison
is strictly fair (same architecture, only $P$ changes) and shows that the chaos
expansion improves calibration on 12 of 14 datasets ($3.6$--$27.9\%$
Brier reduction), indicating that the stochastic expansion is the primary
source of the calibration gains over the $P{=}0$ version in this controlled
comparison. The ChebNet column gives useful context: on the small WebKB
datasets Texas and Wisconsin, the single-filter ChebNet attains a lower
absolute Brier score, while the controlled $P{=}0$ versus $P{>}0$
comparison still isolates the contribution of the chaos expansion. The
ChebNet column is also the single-filter control for the filtering
design: at $P{=}0$ the per-dataset dual-filter architecture has lower or
equal Brier than the single-branch ChebNet on 12 of 14 datasets (equal on
Questions; Chameleon $0.362$ vs
$0.776$, Squirrel $0.416$ vs $0.798$, Roman-Empire $0.298$ vs $0.368$), with
Texas and Wisconsin as the exceptions noted above.

\begin{table}[ht]
\centering
\small
\caption{Contribution decomposition: Brier score ($\downarrow$). ChebNet
$K{=}4$ is a vanilla single-filter diagnostic; $P{=}0$ and Best $P{>}0$ are
the $P{=}0$ column and the best $P{>}0$ column of Table~\ref{tab:p_brier}
(same per-dataset architecture). Boldface compares
the controlled $P{=}0$ versus Best $P{>}0$ pair, and the rightmost column
isolates the chaos contribution. Table~\ref{tab:acc_brier} reports the
selected order of Table~\ref{tab:selection}, which can differ from the best
swept order.}
\label{tab:decompose}
\begin{tabular}{l ccc c}
\toprule
Dataset & ChebNet $K{=}4$ & $P{=}0$ & Best $P{>}0$ & $\Delta$ Chaos \\
\midrule
Cora        & 0.281 & 0.228 & \textbf{0.207} & $-9.2\%$ \\
Citeseer    & 0.470 & 0.427 & \textbf{0.308} & $-27.9\%$ \\
PubMed      & 0.171 & 0.162 & \textbf{0.156} & $-3.7\%$ \\
Texas       & 0.208 & 0.269 & \textbf{0.213} & $-20.8\%$ \\
Cornell     & 0.324 & 0.241 & \textbf{0.228} & $-5.4\%$ \\
Wisconsin   & 0.073 & \textbf{0.103} & \textbf{0.103} & $\phantom{-}0\%$ \\
Chameleon   & 0.776 & 0.362 & \textbf{0.339} & $-6.4\%$ \\
Squirrel    & 0.798 & 0.416 & \textbf{0.393} & $-5.5\%$ \\
CS          & 0.069 & 0.065 & \textbf{0.062} & $-4.6\%$ \\
\midrule
Roman-Emp.  & 0.368 & \textbf{0.298} & \textbf{0.298} & $\phantom{-}0\%$ \\
Amazon-Rat. & 0.676 & 0.666 & \textbf{0.633} & $-5.0\%$ \\
Minesweeper & 0.208 & 0.190 & \textbf{0.169} & $-11.1\%$ \\
Tolokers    & 0.303 & 0.278 & \textbf{0.268} & $-3.6\%$ \\
Questions   & 0.055 & 0.055 & \textbf{0.053} & $-3.6\%$ \\
\bottomrule
\end{tabular}
\end{table}

\subsection{Computational overhead}

\paragraph{Asymptotic complexity.}
Although the doubly-spectral expansion is defined on the tensor product
$\ell^2(V)\otimes L^2(\mathbb{R},\gamma)$, truncating the chaos basis at
order~$P$ retains only $P{+}1$ coefficients along the stochastic axis, so the
graph-filtering cost and the coefficient memory scale as $(P{+}1)\times$ the
deterministic ChebNet baseline and the node-wise dense cost as $S\times$.
Concretely, one DSS layer takes
$\mathcal{O}\!\big((P{+}1)(K_{\mathrm{lp}}+K_{\mathrm{hp}})|E|d + SNd^2
+ S(P{+}1)Nd\big)$ time and $\mathcal{O}((P{+}1+S)Nd)$ memory, where $|E|$ is
the number of edges, $d$ the hidden width, and $S$ the number of
Gauss--Hermite quadrature nodes; the chaos extension is therefore
\emph{additive}, not combinatorial, in the stochastic direction, and no
eigendecomposition is ever computed. With $S$ at the exactness threshold
($S=P+1$) the quadrature term is quadratic in $P$; this is the worst-case
operation count, and the measured growth in $P$ is much smaller (below). By
Theorem~\ref{thm:universal_approx} the truncation error decays exponentially
in $P$ for targets satisfying the growth condition of
Appendix~\ref{sec:proof_decay}, consistent with the empirical plateau at
$P{=}1$--$2$, so the constant factor over a deterministic backbone stays
small. All chaos coefficients share the same Chebyshev filters and projection
weights, so the parameter count per layer is
$\mathcal{O}\big(K_{\mathrm{lp}}+K_{\mathrm{hp}}+(P{+}1)(P_g{+}1)+d_\ell d_{\ell+1}\big)$
(filter coefficients, gate coefficients $a_{n,r},b_{n,r}$, and the linear
map); the chaos order enters only through the gate coefficients and the input
lift. By contrast, the MC-dropout and Deep Ensemble baselines require $M$
forward passes or models at inference ($M{=}20$ and $M{=}5$ in our
comparisons).

\paragraph{Wall-clock measurements.}
Table~\ref{tab:timing} reports the wall-clock time per training epoch as the
chaos order $P$ increases, on three datasets spanning two orders of
magnitude in size. The overhead grows linearly with $P$: each additional
chaos order adds one extra graph filter evaluation and Gauss--Hermite
projection per layer. At $P=2$ the per-epoch cost ranges from $2.9\times$ on
Cora to $1.2\times$ on Amazon-Ratings and $1.1\times$ on ogbn-arxiv relative
to the deterministic baseline ($P=0$), within the asymptotic bound above
and well below the $M\times$ overhead of MC-dropout or ensembles. The
measured overhead decreases as graphs grow because the per-order arithmetic
is batched into single kernels, and on larger graphs fixed per-epoch costs
and better accelerator utilization account for a larger share of the epoch
time. On ogbn-arxiv (169K nodes, one H100 GPU) each additional chaos order
adds about $4\%$ per-epoch time, and peak GPU memory grows linearly in
$P{+}1$ ($1.68/2.75/3.88/5.02$\,GiB for $P=0,\dots,3$), consistent with the memory analysis above; peak memory on Cora is
$75.9$\,MiB.

\begin{table}[ht]
\centering
\small
\caption{Wall-clock time per epoch (ms) vs.\ chaos order $P$ (Cora and
Amazon-Ratings on an A100; ogbn-arxiv on one H100, 3 runs, with peak GPU
memory in MiB).}
\label{tab:timing}
\begin{tabular}{l cccc c}
\toprule
Dataset & $P{=}0$ & $P{=}1$ & $P{=}2$ & $P{=}3$ & Ratio $P{=}2$/$P{=}0$ \\
\midrule
Cora (2.7K nodes)        & 35 & 67 & 102 & 131 & $2.9\times$ \\
Amazon-Rat. (25K nodes)  & 464 & 526 & 576 & 646 & $1.2\times$ \\
ogbn-arxiv (169K nodes)  & 2002 & 2088 & 2174 & 2273 & $1.1\times$ \\
ogbn-arxiv peak memory (MiB) & 1721 & 2813 & 3976 & 5141 & $2.3\times$ \\
\bottomrule
\end{tabular}
\end{table}

\subsection{Summary of component evidence}
\label{app:component_summary}

Table~\ref{tab:component_summary} collects, for each architectural
component, the controlled comparison that isolates it and the effect
measured. The chaos expansion improves calibration (and, at the selected GOOD
configurations, shifted accuracy); the spectral filtering design and the
residual improve accuracy under heterophily and shift; the energy score with
propagation provides OOD detection. Each task's readout uses a different part
of the same representation.

\begin{table}[ht]
\centering
\scriptsize
\caption{Component-wise evidence. Each row names one component, the
comparison that isolates it, and the measured effect.}
\label{tab:component_summary}
\begin{tabular}{>{\raggedright\arraybackslash}p{0.21\textwidth} >{\raggedright\arraybackslash}p{0.28\textwidth} >{\raggedright\arraybackslash}p{0.40\textwidth}}
\toprule
Component & Comparison & Effect \\
\midrule
Chaos expansion ($P$) & $P{=}0$ vs best $P{>}0$, same architecture
(Tables~\ref{tab:p_acc}, \ref{tab:p_brier}, \ref{tab:decompose}) &
Calibration: Citeseer Brier $-27.9\%$ and accuracy $+14.8$ points (the
endpoint also beats plain GCN, $0.308$ vs $0.318$ Brier,
Tables~\ref{tab:acc_brier}/\ref{tab:simple_baselines}); Texas $-20.8\%$;
Minesweeper $-11.1\%$ \\
Chaos expansion ($P$) & OOD ablation varying only $P$
(Appendix~\ref{app:chaos_ood_good}) & Energy AUROC flat across $P$
(Cora-structure $91.0/91.2/90.9/90.8$ at one fixed configuration); detection
reads the mean logit, so $P$ can be selected for calibration without changing
detection \\
Chaos expansion ($P$) & GOOD ablation at the selected configurations
(Table~\ref{tab:good_pabl}) & Shifted accuracy: $P{=}1$ beats $P{=}0$ on
GOOD-Cora word ($+1.7$), degree ($+1.2$), and GOOD-Arxiv time ($+0.6$) \\
Spectral backbone (chaos off) & $P{=}0$ vs GCN
(Tables~\ref{tab:p_acc}, \ref{tab:simple_baselines}) & Heterophilous
accuracy comes from the spectral backbone: Roman-Empire $79.0$ at $P{=}0$ vs
GCN $51.3$ \\
Filtering design vs single filter & Per-dataset architecture at $P{=}0$ vs
single-branch ChebNet (Table~\ref{tab:decompose}) & Lower or equal Brier for the full
design on 12 of 14 datasets (equal on Questions; Chameleon $0.362$ vs $0.776$, Squirrel $0.416$
vs $0.798$, Roman-Empire $0.298$ vs $0.368$); exceptions are Texas and Wisconsin \\
DSS residual branch (as a unit) & Hybrid vs identically trained GCN base, 14
datasets (Table~\ref{tab:crosseval_calibration}) & Accuracy: Roman-Empire
$+27.7$, Minesweeper $+6.6$; no dataset loses beyond noise (worst $-0.8$) \\
DSS representation under a fixed OOD readout & DSS-Hybrid vs GNNSafe++ (same
energy readout, propagation, objective, GCN encoder in both;
Table~\ref{tab:gnnsafe_local_table2style}) & Cora $94.3/97.6/94.1$ vs
$90.6/95.6/92.8$ \\
Energy-score propagation & Same-checkpoint on/off, all 11 settings
(Table~\ref{tab:prop_ablation}) & Helps on 9 of 11, up to $+18.2$ AUROC
(Twitch) and $+14.5$ (Cora-structure) \\
\bottomrule
\end{tabular}
\end{table}

\clearpage
\section{Comparison with Standard GNN Baselines}
\label{app:simple_baselines}

Table~\ref{tab:acc_brier} in the main text compares DSS-GNN against
recent uncertainty-aware graph methods (TFE-GNN and
G-$\Delta$UQ).  Here we additionally compare against
GCN~\citep{kipf2017gcn} and GAT~\citep{velickovic2018gat}, two widely used GNN
architectures that do not incorporate any uncertainty mechanism.  Both
baselines use 2 layers, 64 hidden units, Adam optimizer
(lr$=$0.01, weight decay$=$5$\times$10$^{-4}$), and the same 10 splits as DSS-GNN.

Table~\ref{tab:simple_baselines} reports accuracy and Brier score.  The main
observation is that \emph{higher accuracy does not imply better calibration}.
On Cora, GCN and GAT both achieve higher accuracy than DSS-GNN ($87.5\%$ and
$87.8\%$ vs.\ $86.6\%$) yet have worse Brier scores ($0.213$ and $0.222$
vs.\ $0.207$).  On Citeseer, GAT matches DSS-GNN in accuracy ($80.6\%$
vs.\ $80.2\%$) but is substantially worse in Brier ($0.353$ vs.\ $0.308$).
On many heterophilous benchmarks, GCN and GAT have much lower accuracy and
higher Brier scores than DSS-GNN.  The calibration gains of DSS-GNN are therefore not an accuracy effect.

\paragraph{Stochastic baselines under the same protocol.}
Table~\ref{tab:stoch_calibration} adds MC-dropout ($M{=}20$ stochastic
passes at test time, probabilities averaged) and Deep Ensembles ($M{=}5$
independently trained models, probabilities averaged) on the GCN of
Table~\ref{tab:simple_baselines}, under the full 10-split protocol and the
same code path; the plain-GCN control from that code path reproduces
Table~\ref{tab:simple_baselines} (Cora $87.47$ vs $87.45$, Wisconsin
$50.25$ vs $50.25$). DSS-GNN's Brier is lower on all 14 datasets than either
baseline, and ensembling improves the GCN's Brier by at most $0.008$ on any
dataset while MC-dropout sometimes worsens it (Cora): averaging a
conventional GNN does not remove the calibration difference. On Cora and Questions
the margins are small ($0.207$ vs $0.213$ and $0.053$ vs $0.055$). The same
multi-pass cost structure applies to sampling-based Bayesian GNNs and
latent-variable GNNs; the chaos expansion is instead deterministic, with no
sampling variance.

\begin{table}[ht]
\centering
\small
\caption{Brier score ($\downarrow$) with test accuracy (\%) in parentheses
for DSS-GNN (Table~\ref{tab:acc_brier}) and stochastic GCN baselines under
the same 10-split protocol (means over 10 splits).}
\label{tab:stoch_calibration}
\begin{tabular}{l cccc}
\toprule
Dataset & DSS-GNN & MC-dropout & Deep Ensemble & Plain GCN \\
\midrule
Cora        & \textbf{0.207} (86.63) & 0.2188 (87.36) & 0.2134 (87.45) & 0.2125 (87.47) \\
Citeseer    & \textbf{0.308} (80.20) & 0.3230 (80.03) & 0.3170 (79.77) & 0.3176 (80.00) \\
PubMed      & \textbf{0.156} (89.72) & 0.2176 (86.20) & 0.2153 (86.16) & 0.2155 (86.10) \\
Texas       & \textbf{0.240} (91.31) & 0.5862 (64.43) & 0.5686 (62.79) & 0.5758 (64.43) \\
Cornell     & \textbf{0.231} (85.11) & 0.6829 (52.77) & 0.6826 (52.98) & 0.6839 (52.34) \\
Wisconsin   & \textbf{0.104} (93.25) & 0.6370 (50.38) & 0.6373 (51.25) & 0.6435 (50.25) \\
Chameleon   & \textbf{0.339} (75.36) & 0.7378 (41.73) & 0.7371 (42.71) & 0.7374 (42.21) \\
Squirrel    & \textbf{0.394} (68.06) & 0.7850 (28.11) & 0.7847 (28.81) & 0.7849 (28.38) \\
CS          & \textbf{0.062} (96.17) & 0.0821 (94.55) & 0.0809 (94.63) & 0.0816 (94.58) \\
\midrule
Roman-Emp.  & \textbf{0.300} (78.84) & 0.6554 (50.92) & 0.6471 (51.49) & 0.6477 (51.37) \\
Amz-Rat.    & \textbf{0.634} (49.72) & 0.6772 (44.08) & 0.6769 (44.00) & 0.6771 (44.07) \\
Minesweeper & \textbf{0.169} (87.66) & 0.2874 (80.20) & 0.2874 (80.20) & 0.2874 (80.21) \\
Tolokers    & \textbf{0.268} (79.88) & 0.2970 (78.78) & 0.2970 (78.78) & 0.2971 (78.79) \\
Questions   & \textbf{0.053} (97.17) & 0.0548 (97.06) & 0.0549 (97.07) & 0.0549 (97.07) \\
\bottomrule
\end{tabular}
\end{table}

\begin{table*}[ht]
\centering
\small
\caption{Node classification: test accuracy (\%, $\uparrow$) and Brier score
($\downarrow$) for DSS-GNN vs.\ standard GNN baselines. Mean $\pm$ SD over
10 splits.  Best per dataset in \textbf{bold}.}
\label{tab:simple_baselines}
\resizebox{\textwidth}{!}{%
\begin{tabular}{l ccc ccc}
\toprule
 & \multicolumn{3}{c}{Accuracy (\%)} & \multicolumn{3}{c}{Brier Score} \\
\cmidrule(lr){2-4} \cmidrule(lr){5-7}
Dataset & DSS-GNN & GCN & GAT & DSS-GNN & GCN & GAT \\
\midrule
Cora        & 86.63 $\pm$ 1.26 & 87.45 $\pm$ 1.61 & \textbf{87.75} $\pm$ 1.71 & \textbf{0.207} $\pm$ 0.019 & 0.213 $\pm$ 0.014 & 0.222 $\pm$ 0.015 \\
Citeseer    & 80.20 $\pm$ 1.28 & 79.84 $\pm$ 0.96 & \textbf{80.57} $\pm$ 1.12 & \textbf{0.308} $\pm$ 0.010 & 0.318 $\pm$ 0.007 & 0.353 $\pm$ 0.007 \\
PubMed      & \textbf{89.72} $\pm$ 0.31 & 86.21 $\pm$ 0.31 & 85.72 $\pm$ 0.28 & \textbf{0.156} $\pm$ 0.005 & 0.215 $\pm$ 0.005 & 0.232 $\pm$ 0.004 \\
Texas       & \textbf{91.31} $\pm$ 3.44 & 64.43 $\pm$ 6.80 & 71.80 $\pm$ 7.06 & \textbf{0.240} $\pm$ 0.111 & 0.576 $\pm$ 0.031 & 0.555 $\pm$ 0.038 \\
Cornell     & \textbf{85.11} $\pm$ 5.12 & 52.34 $\pm$ 9.99 & 49.57 $\pm$ 12.63 & \textbf{0.231} $\pm$ 0.079 & 0.684 $\pm$ 0.080 & 0.631 $\pm$ 0.073 \\
Wisconsin   & \textbf{93.25} $\pm$ 3.39 & 50.25 $\pm$ 5.30 & 56.62 $\pm$ 4.71 & \textbf{0.104} $\pm$ 0.047 & 0.643 $\pm$ 0.024 & 0.598 $\pm$ 0.021 \\
Chameleon   & \textbf{75.36} $\pm$ 1.19 & 42.21 $\pm$ 1.78 & 43.83 $\pm$ 2.54 & \textbf{0.339} $\pm$ 0.019 & 0.737 $\pm$ 0.009 & 0.712 $\pm$ 0.015 \\
Squirrel    & \textbf{68.06} $\pm$ 2.21 & 28.39 $\pm$ 1.17 & 27.57 $\pm$ 0.75 & \textbf{0.394} $\pm$ 0.027 & 0.785 $\pm$ 0.003 & 0.789 $\pm$ 0.002 \\
CS          & \textbf{96.17} $\pm$ 0.23 & 94.61 $\pm$ 0.24 & 94.45 $\pm$ 0.18 & \textbf{0.062} $\pm$ 0.004 & 0.082 $\pm$ 0.004 & 0.085 $\pm$ 0.004 \\
\midrule
Roman-Emp.  & \textbf{78.84} $\pm$ 0.61 & 51.30 $\pm$ 0.31 & 42.85 $\pm$ 0.53 & \textbf{0.300} $\pm$ 0.007 & 0.648 $\pm$ 0.001 & 0.755 $\pm$ 0.003 \\
Amz-Rat.    & \textbf{49.72} $\pm$ 0.64 & 43.97 $\pm$ 0.37 & 42.20 $\pm$ 0.43 & \textbf{0.634} $\pm$ 0.005 & 0.677 $\pm$ 0.002 & 0.690 $\pm$ 0.001 \\
Minesweeper & \textbf{87.66} $\pm$ 1.10 & 80.21 $\pm$ 0.19 & 80.10 $\pm$ 0.08 & \textbf{0.169} $\pm$ 0.014 & 0.287 $\pm$ 0.003 & 0.281 $\pm$ 0.003 \\
Tolokers    & \textbf{79.88} $\pm$ 0.49 & 78.81 $\pm$ 0.47 & 78.19 $\pm$ 0.16 & \textbf{0.268} $\pm$ 0.005 & 0.297 $\pm$ 0.005 & 0.299 $\pm$ 0.002 \\
Questions   & \textbf{97.17} $\pm$ 0.04 & 97.08 $\pm$ 0.02 & 97.02 $\pm$ 0.00 & \textbf{0.053} $\pm$ 0.001 & 0.055 $\pm$ 0.000 & 0.057 $\pm$ 0.000 \\
\bottomrule
\end{tabular}%
}
\end{table*}

\clearpage
\section{Cross-Evaluation of the Two Deployment Modes}
\label{app:crosseval}

This appendix gives the full tables for Section~\ref{sec:exp_crosseval}:
the hybrid under the calibration protocol, the standalone model under the OOD
and GOOD protocols, the readout-coincidence measurement of
Section~\ref{sec:exp_calibration}, the effect of BatchNorm under the
calibration protocol, and the covariate-shift counterpart of
Table~\ref{tab:good_table2_style}.

Table~\ref{tab:head_to_head} (Section~\ref{sec:exp_crosseval}) summarizes
both deployment modes on all three tasks; the per-dataset comparisons
summarized in its cells are Tables~\ref{tab:crosseval_calibration},
\ref{tab:crosseval_ood}, and~\ref{tab:crosseval_good} below.

\subsection{Protocols}
\label{app:crosseval_protocol}

\paragraph{Hybrid under the calibration protocol.}
DSS-Hybrid (2-layer GCN base with the DSS residual branch of
Appendix~\ref{app:implementation}) is trained under the identical 10-split
protocol of Table~\ref{tab:acc_brier}, paired with its own GCN base trained
identically with the residual removed and matched budgets, so the pair
isolates the residual branch.

\paragraph{Standalone under the OOD protocol.}
Standalone DSS-GNN is trained in the GNNSafe pipeline (200 epochs, 3 runs,
energy score with propagation, pipeline defaults: hidden 64, 2 layers,
weight decay $10^{-2}$, learning rate $10^{-2}$, $P{=}1$, $S{=}4$). Rows
marked $+$BN add the per-chaos-channel BatchNorm of
Appendix~\ref{app:implementation}, with every other hyperparameter
unchanged; the variant is selected on the validation split. Twitch uses learning rate $10^{-3}$, hidden 128, and no BatchNorm, selected
on validation ID accuracy. The Arxiv row is trained with the GNNSafe++
energy-margin objective at the Arxiv margin hyperparameters of
\citet{wu2023gnnsafe} with $m_{\mathrm{out}}{=}-1$, selected on validation
ID accuracy. All other rows use standard ID training.

\paragraph{Standalone under the GOOD protocol.}
Standalone DSS-GNN is trained in the GOOD pipeline with $L_1$ row-normalized
input features, 2 layers, $K_{\mathrm{lp}}{=}3$, $K_{\mathrm{hp}}{=}2$,
500 epochs, patience 200, 3 seeds; the epoch is selected by OOD-validation
loss within each run and the configuration (BatchNorm on/off, learning rate,
hidden width, $P$) on the GOOD OOD-validation split: CBAS and WebKB
$+$BN, learning rate $10^{-2}$; Twitch $+$BN, $10^{-3}$; GOOD-Cora/word
$+$BN, $5\times10^{-3}$, hidden 128, weight decay $10^{-3}$; GOOD-Cora/degree
$+$BN, $3\times10^{-3}$, hidden 128; GOOD-Arxiv/time $+$BN, $10^{-2}$, hidden
256; GOOD-Arxiv/degree no BatchNorm, $10^{-2}$, hidden 64; $P{=}1$
throughout.

\subsection{Calibration: standalone, hybrid, and the hybrid's GCN base}

\begin{table}[ht]
\centering
\scriptsize
\caption{Test accuracy (\%) and Brier score, mean $\pm$ SD over the same 10
splits. The standalone column is Table~\ref{tab:acc_brier}; the hybrid and
its GCN base are a matched pair (identical training, budgets, and splits;
the only difference is the DSS residual).}
\label{tab:crosseval_calibration}
\resizebox{\textwidth}{!}{%
\begin{tabular}{l cc cc cc}
\toprule
 & \multicolumn{2}{c}{Standalone DSS-GNN} & \multicolumn{2}{c}{DSS-Hybrid} & \multicolumn{2}{c}{GCN base} \\
\cmidrule(lr){2-3}\cmidrule(lr){4-5}\cmidrule(lr){6-7}
Dataset & Accuracy & Brier & Accuracy & Brier & Accuracy & Brier \\
\midrule
Cora           & 86.63 $\pm$ 1.26 & 0.207 $\pm$ 0.019 & 84.33 $\pm$ 0.97  & 0.241 $\pm$ 0.015 & 84.20 $\pm$ 1.02  & 0.243 $\pm$ 0.016 \\
Citeseer       & 80.20 $\pm$ 1.28 & 0.308 $\pm$ 0.010 & 74.71 $\pm$ 1.05  & 0.375 $\pm$ 0.009 & 74.32 $\pm$ 1.13  & 0.378 $\pm$ 0.009 \\
PubMed         & 89.72 $\pm$ 0.31 & 0.156 $\pm$ 0.005 & 88.27 $\pm$ 0.56  & 0.177 $\pm$ 0.006 & 87.93 $\pm$ 0.56  & 0.181 $\pm$ 0.006 \\
Texas          & 91.31 $\pm$ 3.44 & 0.240 $\pm$ 0.111 & 45.57 $\pm$ 7.68  & 0.737 $\pm$ 0.034 & 46.39 $\pm$ 7.55  & 0.737 $\pm$ 0.034 \\
Cornell        & 85.11 $\pm$ 5.12 & 0.231 $\pm$ 0.079 & 55.11 $\pm$ 14.03 & 0.656 $\pm$ 0.142 & 47.66 $\pm$ 12.01 & 0.727 $\pm$ 0.068 \\
Wisconsin      & 93.25 $\pm$ 3.39 & 0.104 $\pm$ 0.047 & 48.12 $\pm$ 6.50  & 0.680 $\pm$ 0.029 & 48.50 $\pm$ 6.14  & 0.680 $\pm$ 0.028 \\
Chameleon      & 75.36 $\pm$ 1.19 & 0.339 $\pm$ 0.019 & 33.50 $\pm$ 2.10  & 0.777 $\pm$ 0.004 & 34.00 $\pm$ 2.44  & 0.777 $\pm$ 0.004 \\
Squirrel       & 68.06 $\pm$ 2.21 & 0.394 $\pm$ 0.027 & 26.35 $\pm$ 2.02  & 0.790 $\pm$ 0.003 & 26.23 $\pm$ 2.36  & 0.790 $\pm$ 0.003 \\
CS             & 96.17 $\pm$ 0.23 & 0.062 $\pm$ 0.004 & 94.26 $\pm$ 0.24  & 0.087 $\pm$ 0.004 & 94.23 $\pm$ 0.21  & 0.088 $\pm$ 0.003 \\
\midrule
Roman-Empire   & 78.84 $\pm$ 0.61 & 0.300 $\pm$ 0.007 & 74.36 $\pm$ 0.61  & 0.361 $\pm$ 0.007 & 46.63 $\pm$ 0.26  & 0.690 $\pm$ 0.003 \\
Amazon-Ratings & 49.72 $\pm$ 0.64 & 0.634 $\pm$ 0.005 & 46.36 $\pm$ 0.57  & 0.661 $\pm$ 0.004 & 46.32 $\pm$ 0.48  & 0.661 $\pm$ 0.002 \\
Minesweeper    & 87.66 $\pm$ 1.10 & 0.169 $\pm$ 0.014 & 86.91 $\pm$ 0.45  & 0.184 $\pm$ 0.006 & 80.31 $\pm$ 0.21  & 0.289 $\pm$ 0.003 \\
Tolokers       & 79.88 $\pm$ 0.49 & 0.268 $\pm$ 0.005 & 81.19 $\pm$ 0.50  & 0.255 $\pm$ 0.006 & 79.46 $\pm$ 0.34  & 0.291 $\pm$ 0.006 \\
Questions      & 97.17 $\pm$ 0.04 & 0.053 $\pm$ 0.001 & 97.09 $\pm$ 0.04  & 0.053 $\pm$ 0.001 & 97.08 $\pm$ 0.04  & 0.054 $\pm$ 0.001 \\
\bottomrule
\end{tabular}%
}
\end{table}

The hybrid is at or above its own base on 11 of 14 datasets in accuracy; the
three negative deltas (Texas $-0.82$, Chameleon $-0.50$, Wisconsin $-0.38$)
are far inside one standard deviation, and every positive Brier delta
(hybrid worse) is at most $0.0008$, also far inside one standard deviation.
The gains are structural and large where the GCN inductive bias fails:
Roman-Empire $+27.7$ accuracy and $-0.33$ Brier, Minesweeper $+6.6$ and
$-0.11$, Tolokers $+1.7$ and $-0.04$. On the small heterophilous graphs where
the GCN backbone itself fails (Texas, Cornell, Wisconsin, Chameleon,
Squirrel), the residual matches the base rather than improving on it, and the
standalone model is the deployment choice. In hybrid mode the argmax of the
quadrature predictive and of the mean logit agree on every test node of
every dataset (disagreement $0.0000$ on all 14).

\subsection{OOD detection: standalone versus hybrid}

\begin{table}[ht]
\centering
\scriptsize
\caption{AUROC (\%) and ID test accuracy (\%) on the 11 OOD settings.
Standalone: 3 runs, mean $\pm$ SD; hybrid:
Tables~\ref{tab:gnnsafe_local_table2style}--\ref{tab:gnnsafe_local_table1style}
(AUROC only for the nine node-OOD settings$^{*}$). The last column gives the
standalone training objective.}
\label{tab:crosseval_ood}
\resizebox{\textwidth}{!}{%
\begin{tabular}{l cc cc l}
\toprule
Setting & Standalone AUROC & Standalone ID acc. & Hybrid AUROC & Hybrid ID acc. & Objective (standalone) \\
\midrule
Cora / structure         & 79.79 $\pm$ 1.46 ($+$BN) & 69.50 & 94.32 & n/r$^{*}$ & standard (no OOD exposure) \\
Cora / feature           & 88.15 $\pm$ 1.02 ($+$BN) & 69.17 & 97.60 & n/r$^{*}$ & standard (no OOD exposure) \\
Cora / label             & 94.03 $\pm$ 1.09 ($+$BN) & 89.24 & 94.11 & n/r$^{*}$ & standard (no OOD exposure) \\
Amazon-Photo / structure & 96.53 $\pm$ 1.41 ($+$BN) & 91.59 & 99.69 & n/r$^{*}$ & standard (no OOD exposure) \\
Amazon-Photo / feature   & 98.10 $\pm$ 0.11 ($+$BN) & 92.00 & 99.66 & n/r$^{*}$ & standard (no OOD exposure) \\
Amazon-Photo / label     & 96.53 $\pm$ 0.10 ($+$BN) & 94.81 & 97.52 & n/r$^{*}$ & standard (no OOD exposure) \\
Coauthor-CS / structure  & 94.33 $\pm$ 0.29 ($+$BN) & 92.64 & 99.96 & n/r$^{*}$ & standard (no OOD exposure) \\
Coauthor-CS / feature    & 97.79 $\pm$ 0.16 ($+$BN) & 92.47 & 99.95 & n/r$^{*}$ & standard (no OOD exposure) \\
Coauthor-CS / label      & 95.87 $\pm$ 1.83 ($+$BN) & 94.00 & 98.07 & n/r$^{*}$ & standard (no OOD exposure) \\
Twitch (cross-graph)     & 51.85 $\pm$ 4.83 (no separation) & 68.39 $\pm$ 0.20 & 95.75 & 70.52 & standard (no OOD exposure) \\
Arxiv (cross-graph)      & 73.73 $\pm$ 0.24 ($+$BN) & 61.16 $\pm$ 0.58 & 73.36 & 53.99 & GNNSafe++ margin objective \\
\bottomrule
\end{tabular}%
}
\end{table}

$^{*}$Table~\ref{tab:gnnsafe_local_table2style} reports AUROC only. Hybrid ID accuracy on these nine settings (3 runs under the same
protocol) is Cora $81.6/81.1/90.6$,
Amazon-Photo $92.4/91.0/95.4$, Coauthor-CS $93.0/93.0/95.5$
(structure/feature/label).

The hybrid is the stronger detector on 10 of 11 settings; the standalone
trains on all 11 (ID accuracies in Table~\ref{tab:crosseval_ood}) and is competitive on the
label-shift settings (Cora/label $94.03$ vs $94.11$) and on Arxiv, where
under the GNNSafe++ margin objective it reaches $73.73$ AUROC, above the
hybrid's $73.36$, with the best Arxiv ID accuracy of any method in
Table~\ref{tab:gnnsafe_local_table1style} ($61.16$). The one standalone
exception is Twitch: the classifier itself trains (ID accuracy $68.39$ vs the
hybrid's $70.52$), but its energy score does not separate the cross-graph OOD
inputs, so the hybrid ($95.75$) is preferred there. The per-chaos-channel BatchNorm of Appendix~\ref{app:implementation} is
used on all nine perturbation settings, where the sparse public splits
require it for every backbone in the pipeline
(Section~\ref{sec:exp_crosseval}).

\subsection{Distribution shift: standalone versus hybrid}

\begin{table}[ht]
\centering
\small
\caption{Shifted (OOD) test accuracy (\%) on the GOOD concept-shift
settings. Standalone: mean $\pm$ SD over 3 seeds, configuration selected on
the GOOD OOD-validation split (Appendix~\ref{app:crosseval_protocol});
hybrid and best baseline from Table~\ref{tab:good_table2_style}.}
\label{tab:crosseval_good}
\begin{tabular}{l ccl}
\toprule
Setting & Standalone & Hybrid & Best baseline \\
\midrule
GOOD-CBAS / color        & 82.38 $\pm$ 13.17 ($+$BN) & 88.57 & TAR 87.29 \\
GOOD-WebKB / university  & 36.41 $\pm$ 1.60 ($+$BN)  & 40.77 & TAR 30.83 \\
GOOD-Twitch / language   & 60.28 $\pm$ 0.51 ($+$BN)  & 61.21 & TAR 57.20 \\
GOOD-Cora / word         & 64.89 $\pm$ 0.21 ($+$BN)  & 64.81 & TAR 64.73 \\
GOOD-Cora / degree       & 62.60 $\pm$ 0.59 ($+$BN)  & 62.73 & TAR 61.73 \\
GOOD-Arxiv / time        & 65.43 $\pm$ 1.07 ($+$BN)  & 66.76 & TAR 66.08 \\
GOOD-Arxiv / degree      & 64.62 $\pm$ 0.15          & 66.43 & G-$\Delta$UQ 64.34 \\
\bottomrule
\end{tabular}
\end{table}

The standalone model is above every baseline of
Table~\ref{tab:good_table2_style} on 5 of 7 settings
(WebKB, Twitch, Cora word, Cora degree, Arxiv degree) and at ERM's level on
the remaining two (ERM: $82.43$ on CBAS and $65.64$ on Arxiv/time), while
the hybrid is above every baseline on all 7; on GOOD-Cora/word the two modes
agree within noise ($64.81$ vs $64.89\pm0.21$). GOOD-CBAS is a small synthetic
graph with the largest seed variance of the seven settings (standalone SD
$13.2$ over 3 seeds). ID accuracies of the
standalone runs are between $63.5$ and $96.2$ across the seven settings
(CBAS $96.2$, WebKB $76.0$, Twitch $63.5$, Cora word $68.8$, Cora degree
$68.9$, Arxiv time $74.1$, Arxiv degree $71.5$).

\subsection{Readout coincidence}

Table~\ref{tab:readout_coincidence} compares the two readouts of
Section~\ref{sec:readout} at identical checkpoints.

\begin{table}[ht]
\centering
\small
\caption{Quadrature readout versus mean-logit readout at identical
checkpoints under the full 10-split protocol of Table~\ref{tab:acc_brier}:
absolute Brier difference and the fraction of test nodes on which
$\arg\max\bar p_i$ and $\arg\max Z_{i,0}$ disagree (mean over 10 splits).
Accuracy differs by at most $0.04$ points on every dataset. In hybrid mode
the disagreement is $0.0000$ on all 14 datasets.}
\label{tab:readout_coincidence}
\begin{tabular}{l cc | l cc}
\toprule
Dataset & $|\Delta$Brier$|$ & Disagree (\%) & Dataset & $|\Delta$Brier$|$ & Disagree (\%) \\
\midrule
Cora        & 0.0002 & 0.03 & Roman-Emp.  & 0.0000 & 0.00 \\
Citeseer    & 0.0000 & 0.00 & Amz-Rat.    & 0.0000 & 0.00 \\
PubMed      & 0.0000 & 0.02 & Minesweeper & 0.0003 & 0.21 \\
Texas       & 0.0021 & 0.00 & Tolokers    & 0.0000 & 0.00 \\
Cornell     & 0.0000 & 0.00 & Questions   & 0.0000 & 0.00 \\
Wisconsin   & 0.0000 & 0.00 & CS          & 0.0001 & 0.00 \\
Chameleon   & 0.0001 & 0.00 & Squirrel    & 0.0007 & 0.08 \\
\bottomrule
\end{tabular}
\end{table}

\subsection{BatchNorm under the calibration protocol}
\label{app:bn_calibration}

Table~\ref{tab:bn_calibration} re-runs the Table~\ref{tab:acc_brier}
protocol with and without the per-chaos-channel BatchNorm on four datasets
(10 splits each). The BatchNorm-free controls reproduce
Table~\ref{tab:acc_brier} within a fraction of one standard deviation (the
Citeseer run uses the selected $P{=}2$ configuration of
Table~\ref{tab:selection}, with $\lambda_{\mathrm{reg}}{=}0$, and reproduces
that row as well). With BatchNorm, Cora and Roman-Empire are
unchanged within noise and Wisconsin is within one standard deviation, while
Citeseer degrades far beyond noise ($-13.2$ accuracy). BatchNorm is therefore a stabilization for the sparse-label OOD pipeline, where every baseline backbone also uses it, and is not used under the full-label calibration protocol.

\begin{table}[ht]
\centering
\small
\caption{Test accuracy (\%) / Brier, mean $\pm$ SD over 10 splits, under the
protocol of Table~\ref{tab:acc_brier} at one calibration configuration per
dataset, without and with per-chaos-channel BatchNorm.}
\label{tab:bn_calibration}
\resizebox{\textwidth}{!}{%
\begin{tabular}{l ccc}
\toprule
Dataset & BatchNorm off (control) & Table~\ref{tab:acc_brier} & BatchNorm on \\
\midrule
Cora         & 86.44 $\pm$ 1.56 / 0.216 $\pm$ 0.022 & 86.63 $\pm$ 1.26 / 0.207 $\pm$ 0.019 & 85.78 $\pm$ 1.76 / 0.220 $\pm$ 0.017 \\
Citeseer     & 80.27 $\pm$ 1.18 / 0.307 $\pm$ 0.010 & 80.20 $\pm$ 1.28 / 0.308 $\pm$ 0.010 & 67.03 $\pm$ 1.85 / 0.480 $\pm$ 0.024 \\
Wisconsin    & 93.62 $\pm$ 1.72 / 0.109 $\pm$ 0.029 & 93.25 $\pm$ 3.39 / 0.104 $\pm$ 0.047 & 91.75 $\pm$ 4.75 / 0.134 $\pm$ 0.061 \\
Roman-Empire & 78.82 $\pm$ 1.08 / 0.300 $\pm$ 0.012 & 78.84 $\pm$ 0.61 / 0.300 $\pm$ 0.007 & 78.60 $\pm$ 0.64 / 0.300 $\pm$ 0.009 \\
\bottomrule
\end{tabular}%
}
\end{table}

\subsection{Covariate shift on GOOD}
\label{app:covariate}

Table~\ref{tab:good_covariate} is the covariate-shift counterpart of
Table~\ref{tab:good_table2_style}: the same seven GOOD settings under their
covariate splits, comparing ERM-trained DSS-Hybrid (GCN encoder,
configuration selected by validation loss) with the same-protocol ERM GCN of
the GNNSafe pipeline (3 seeds). DSS-Hybrid is above the ERM control on 6 of
7 settings; the one loss, GOOD-Cora/degree, is within $1.8$ points. The two
Arxiv ERM baselines did not converge within the shared 120-epoch budget (ID
accuracy 45 to 50, versus 72 to 77 for the DSS runs on the same splits), so
those two margins are not informative on their own. With the
seven concept-shift settings of Table~\ref{tab:good_table2_style} this
covers both shift families that GOOD defines.

\begin{table}[ht]
\centering
\small
\caption{Shifted test accuracy (\%) under GOOD covariate shift: ERM-trained
DSS-Hybrid (GCN encoder, validation-loss-selected configuration) versus the
same-protocol ERM GCN. $^{*}$ERM baseline not converged within the shared budget (see text).}
\label{tab:good_covariate}
\begin{tabular}{l cc}
\toprule
Setting (covariate split) & ERM GCN & DSS-Hybrid (ERM) \\
\midrule
GOOD-CBAS / color        & 49.05 & \textbf{53.33} \\
GOOD-WebKB / university  & 13.49 & \textbf{29.10} \\
GOOD-Twitch / language   & 42.78 & \textbf{52.64} \\
GOOD-Cora / word         & 61.84 & \textbf{64.18} \\
GOOD-Cora / degree       & \textbf{56.66} & 54.94 \\
GOOD-Arxiv / time        & 44.80$^{*}$ & \textbf{70.39} \\
GOOD-Arxiv / degree      & 25.68$^{*}$ & \textbf{57.60} \\
\bottomrule
\end{tabular}
\end{table}

\clearpage
\section{Architecture Overview and Notation}
\label{app:overview}

Figure~\ref{fig:architecture} gives a schematic of the DSS-GNN architecture
of Section~\ref{sec:method}: the two spectral axes that index the chaos
coefficients, one DSS layer, the two readouts and the chaos energy, and the two deployment
modes. Table~\ref{tab:notation} summarizes the notation of the DSS layer,
Eqs.~\eqref{eq:pce_state}--\eqref{eq:dss_projection_update}.

\begin{figure}[ht]
\centering
\includegraphics[width=\textwidth]{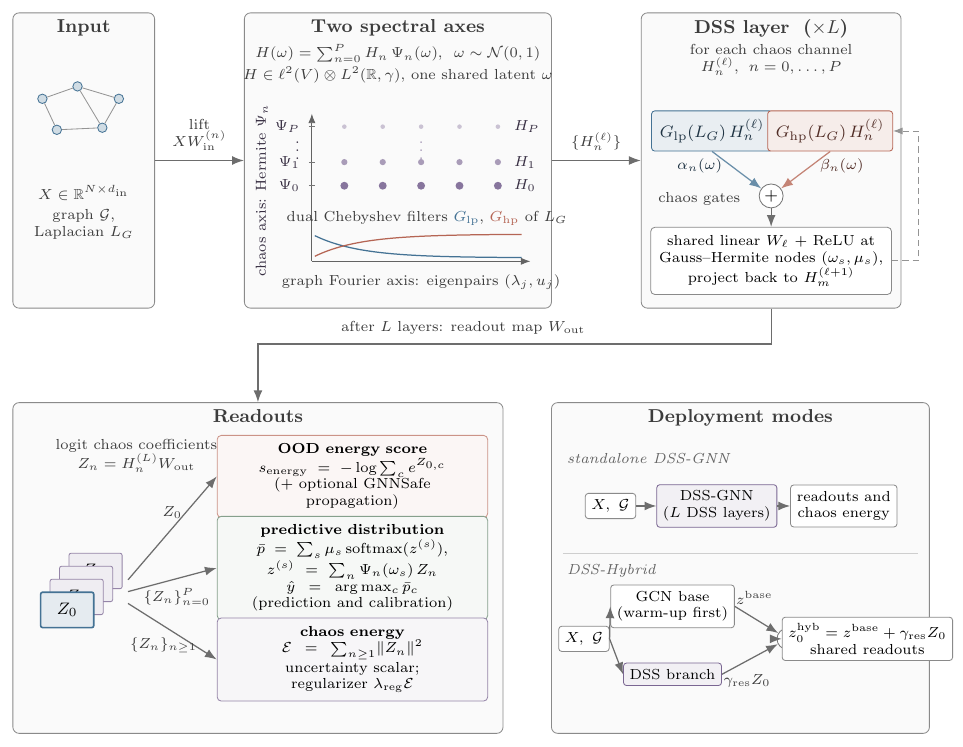}
\caption{Overview of DSS-GNN. Input features $X$ on graph $\mathcal{G}$ are
lifted to chaos coefficients $H_n^{(0)}=XW_{\mathrm{in}}^{(n)}$, which are indexed by two spectral axes: the graph Fourier axis (Laplacian eigenpairs
$(\lambda_j,u_j)$, filtered by dual Chebyshev filters
$G_{\mathrm{lp}},G_{\mathrm{hp}}$) and the Wiener chaos axis (Hermite basis
$\Psi_n$ of a scalar Gaussian latent $\omega$). Each DSS layer propagates every
chaos channel $H_n^{(\ell)}$ through both filters, couples them with per-order
chaos gates $\alpha_n,\beta_n$, and applies a shared linear map and ReLU at
Gauss--Hermite nodes before projecting back
(Eqs.~\eqref{eq:cheby_filters}--\eqref{eq:dss_projection_update}). The final
coefficients $Z_n=H_n^{(L)}W_{\mathrm{out}}$ are the inputs of the two task readouts, the
mean-logit energy score for OOD detection~\eqref{eq:energy_score} and the
quadrature-averaged predictive $\bar p$ for class prediction and
calibration~\eqref{eq:predictive_average}, together with the chaos energy
$\mathcal{E}$, which is used as regularizer and uncertainty
scalar~\eqref{eq:logit_covariance} (bottom left). Bottom right: the two deployment modes, standalone DSS-GNN
and DSS-Hybrid, which adds the DSS branch residually (scale
$\gamma_{\mathrm{res}}$) to a warmed-up GCN base (Section~\ref{sec:hybrid});
both use the same readouts. The node index $i$ is suppressed for
readability.}
\label{fig:architecture}
\end{figure}

\begin{table}[t]
\centering
\small
\caption{Notation for the DSS layer,
Eqs.~\eqref{eq:pce_state}--\eqref{eq:dss_projection_update}.}
\label{tab:notation}
\begin{tabular}{p{0.24\textwidth}p{0.70\textwidth}}
\toprule
Symbol & Meaning \\
\midrule
$\omega$ & the single shared standard Gaussian latent; never sampled,
integrated out by quadrature \\
$\Psi_n$ & normalized Hermite polynomial of order $n$; orthonormal basis of
$L^2(\mathbb{R},\gamma)$ \\
$P$ & chaos truncation order; $n,m=0,\dots,P$ index input/output chaos
orders \\
$H_n^{(\ell)}\in\mathbb{R}^{N\times d_\ell}$ & order-$n$ chaos coefficient
of the layer-$\ell$ embeddings~\eqref{eq:pce_state}; the embedding field is
$H^{(\ell)}(\omega)=\sum_n H_n^{(\ell)}\Psi_n(\omega)$ \\
$\widetilde L_G$, $T_k$ & rescaled Laplacian $2L_G/\lambda_{\max}-I$ and
Chebyshev polynomial of degree $k$ (Section~\ref{sec:background}) \\
$G_{\mathrm{lp}},G_{\mathrm{hp}}$ & learned low/high-pass filter polynomials
of degrees $K_{\mathrm{lp}},K_{\mathrm{hp}}$ with coefficients
$c_k^{(\mathrm{lp})},c_k^{(\mathrm{hp})}$~\eqref{eq:cheby_filters} \\
$\alpha_n(\omega),\beta_n(\omega)$ & per-order chaos gates coupling the two
filter branches; deterministic scalars, or order-$P_g$ expansions with
coefficients $a_{n,r},b_{n,r}$~\eqref{eq:random_gates} \\
$\omega_s,\mu_s,S$ & Gauss--Hermite quadrature nodes and weights,
$s=1,\dots,S$; exact for the filter/gate polynomials when $S\ge P+1$ (for
$P_g\le1$) \\
$\widetilde H^{(\ell+1)}(\omega_s)$ & pre-activation field evaluated at
quadrature node $\omega_s$~\eqref{eq:sample_preact} \\
$W_\ell$, $\sigma$ & trainable linear map $d_\ell\to d_{\ell+1}$ and pointwise
activation; Eq.~\eqref{eq:dss_projection_update} projects
$\sigma(\widetilde H^{(\ell+1)}(\omega_s))$ back onto the $\Psi_m$ basis \\
\bottomrule
\end{tabular}
\end{table}

\ifarxiv\else
\clearpage
\input{sections/checklist}
\fi

\end{document}